\documentclass[letter,twoside,11pt]{article}
\usepackage{pslatex}
\usepackage{amsmath}
\usepackage{amsfonts}
\usepackage{paralist}
\usepackage{amssymb}
\usepackage{algorithm,algorithmic}
\usepackage{multirow}
\usepackage{color}
\usepackage{pgf}
\usepackage{url}
\usepackage{tikz}
\usepackage{textcomp}
\usetikzlibrary{arrows,shapes,snakes,automata,backgrounds,petri}
\graphicspath{{./images/}}
\usepackage{subfig}
\usepackage[hyperindex,breaklinks]{hyperref}
\usepackage{jmlr2e}

\def\x{{\mathbf x}}

\def\1{{\mathbf 1}}

\def\X{{\mathbf X}}

\def\hatw{{\bf \hat w}}

\def\varepsilonb{{\boldsymbol\varepsilon}}

\def\y{{\mathbf y}}
\def\w{{\mathbf w}}
\def\tildew{{\mathbf {\tilde w}}}

\def\NN{{\mathbf N}}

\def\GG{{\mathcal G}}
\def\JJ{{\mathcal J}}

\def\u{{\mathbf u}}

\def\Real{{\mathbb R}}

\def\u{{\mathbf u}}

\def\FF{{\mathcal F}}

\def\argmin{\operatornamewithlimits{arg\,min}}

\def\sign{\operatorname{sign}}
\def\st{~~\text{s.t.}~~}
\def\defin{\triangleq}

\newcommand{\R}[1]{\mathbb{R}^{#1}}

\newcommand{\G}{\mathcal{G}}


\def \xib{{\boldsymbol\xi}}

\def \kappab {{\boldsymbol\kappa}}

\long\def\symbolfootnote[#1]#2{\begingroup\def\thefootnote{\fnsymbol{footnote}}\footnote[#1]{#2}\endgroup} 

\newenvironment{myitemize}
{
   \begin{list}{\labelitemi}{}
   \setlength{\parskip}{1pt}
   \setlength{\parsep}{1pt}         
   \setlength{\itemsep}{3pt} 
   \setlength{\topsep}{1pt}
   \setlength{\partopsep}{1pt}
}
{
   \end{list} 
}

\let\oldvarphi\varphi
\let\oldpsi\psi

\renewcommand{\varphi}{\oldvarphi_{\GG}}
\renewcommand{\psi}{\oldpsi_{\GG}}
\newcommand{\varphip}{\oldvarphi_{\GG_p}}
\newcommand{\psip}{{\oldpsi}_{{\GG}_p}}

\renewcommand{\cite}{\citep}

\jmlrheading{14}{2013}{-}{04/12; Revised 03/13}{-}{Julien Mairal and Bin Yu}

\ShortHeadings{Feature Selection in Graphs with Path Coding Penalties}{Mairal and Yu}
\firstpageno{1}

\begin{document}

\title{Supervised Feature Selection in Graphs with Path Coding Penalties \\ and Network Flows}

\author{\name Julien Mairal\thanks{Present address: LEAR Project-Team, INRIA Grenoble Rh\^one-Alpes, France.} \email julien.mairal@inria.fr \\
        \name Bin Yu\thanks{Also in the department of Electrical Engineering \& Computer Science.}  \email binyu@stat.berkeley.edu \\
        \addr Department of Statistics \\
University of California\\
Berkeley, CA 94720-1776, USA.}

\editor{Ben Taskar}

\maketitle

\begin{abstract}
   We consider supervised learning problems where the features are embedded in a
graph, such as gene expressions in a gene network. In this context, it is of
much interest to automatically select a subgraph with few connected components;
by exploiting prior knowledge, one can indeed improve the prediction
performance or obtain results that are easier to interpret. Regularization or penalty
functions for selecting features in graphs have recently been proposed, but
they raise new algorithmic challenges. For example, they typically require
solving a combinatorially hard selection problem among all connected subgraphs.
In this paper, we propose computationally feasible strategies to select a
sparse and well-connected subset of features sitting on a directed acyclic
graph (DAG). We introduce structured sparsity penalties over paths on a DAG
called ``path coding'' penalties. Unlike existing regularization functions
that model long-range interactions between features in a graph, path coding
penalties are tractable. The penalties and their proximal operators involve
path selection problems, which we efficiently solve by leveraging
network flow optimization. We experimentally show on synthetic, image, and
genomic data that our approach is scalable and leads to more connected
subgraphs than other regularization functions for graphs.

\end{abstract}

\begin{keywords}
convex and non-convex optimization, network flow optimization,  graph sparsity
\end{keywords}

\section{Introduction}\label{sec:intro}
Supervised sparse estimation problems have been the topic of much research in
statistical machine learning and signal processing. In high dimensional
settings, restoring a signal or learning a model is
often difficult without a priori knowledge. When the solution is
known beforehand to be sparse---that is, has only a few non-zero
coefficients, regularizing with sparsity-inducing penalties has been shown 
to provide better prediction and solutions that are easier to interpret. For that purpose,
non-convex penalties and greedy algorithms have been
proposed~\citep[][]{akaike,schwarz,rissanen,mallat4,fan}. More recently,
convex relaxations such as the $\ell_1$-norm~\citep{tibshirani,chen} and
efficient algorithms have been
developed~\citep[][]{osborne,nesterov,beck,wright}.

In this paper, we consider supervised learning problems where more information
is available than just sparsity of the solution. More precisely, we assume that
the features (or predictors) can be identified to the vertices of a graph, such
as gene expressions in a gene network. In this context, it can be desirable to
take into account the graph structure in the regularization~\citep{rapaport}.
In particular, we are interested in automatically identifying a subgraph with few connected
components~\citep{jacob,huang}, groups of genes involved in a disease for
example. There are two equally important reasons for promoting the connectivity
of the problem solution: either connectivity is a prior information, which might
improve the prediction performance, or connected components may be easier to
interpret than isolated variables.

Formally, let us consider a supervised sparse estimation problem involving~$p$
features, and let us assume that we are given an undirected or directed
graph~$G=(V,E)$, where $V$ is a vertex set identified
to~$\{1,\ldots,p\}$, and~$E \subseteq V \times V$ is an arc (edge) set.
Classical empirical risk minimization problems can be formulated as
\begin{equation} 
\min_{\w \in \Real^p} [ L(\w) + \lambda \Omega(\w)],\label{eq:prob} 
\end{equation} 
where $\w$ is a weight vector in~$\Real^p$, which we wish to estimate; $L: \Real^p \to
\Real$ is a convex loss function, and~$\Omega:\Real^p \to \Real$ is a
regularization function. In order to obtain a sparse
solution, $\Omega$ is often chosen to be the $\ell_0$- (cardinality of the support) or~$\ell_1$-penalty.
In this paper, we are also interested in encouraging the
sparsity pattern of~$\w$ (the set of non-zero coefficients) to form a
subgraph of~$G$ with few connected components. 

To the best of our knowledge, penalties promoting the connectivity of sparsity
patterns in a graph can be classified into two categories. The ones of the first
category involve pairwise interactions terms between vertices
linked by an arc~\citep{cehver,jacob,chen3}; each term encourages two
neighbors in the graph to be simultaneously selected. Such
regularization functions usually lead to tractable optimization problems, but they do not model
long-range interactions between variables in the graph, and they do not promote
large connected components. Penalties from the second category are more
complex, and directly involve hard combinatorial problems~\citep{huang}. As such, they cannot be used without approximations. The problem of finding tractable penalties that model long-range
interactions is therefore acute. The main
contribution of our paper is a solution to this problem when the graph is \emph{directed and
acyclic}.

Of much interest to us are the non-convex penalty of~\citet{huang} and the convex penalty of~\citet{jacob}.  Given a pre-defined set of
possibly overlapping groups of
variables~$\G$, these two structured sparsity-inducing regularization functions
encourage a sparsity pattern to be \emph{in the union of a small number of
groups from~$\GG$}.  Both penalties induce a similar regularization effect and are
strongly related to each other.  In fact, we show in Section~\ref{sec:approach} that the
penalty of~\citet{jacob} can be interpreted as a convex relaxation of the
non-convex penalty of~\citet{huang}.
These two penalties go beyond classical unstructured sparsity,
but they are also complex and 
raise new challenging combinatorial problems. 
For example, \citet{huang} define~$\G$ as the set of all connected
subgraphs of~$G$, which leads to well-connected solutions but also leads to
intractable optimization problems; the latter are approximately addressed
by~\citet{huang} with greedy algorithms.
\citet{jacob} choose a different strategy and define~$\G$ as the pairs of vertices linked by an arc,
which, as a result, encourages neighbors in the graph to be simultaneously
selected. This last formulation is computationally tractable, but does not
model long-range interactions between features. Another
suggestion from~\citet{jacob} and \citet{huang} consists of
defining~$\G$ as the set of connected subgraphs up to a size~$k$. The
number of such subgraphs is however exponential in~$k$, making this approach difficult
to use even for small subgraph sizes ($k\!=\!3,4$) as soon as the graph is
large~($p\! \approx \!  10\,000$) and connected enough.\footnote{This issue was
confirmed to us in a private communication with Laurent Jacob, and this was one
of our main motivation for developing new algorithmic tools overcoming this
problem.} These observations naturally raise the question: \emph{can we replace
connected subgraphs by another structure that is rich enough to model
long-range interactions in the graph and leads to computationally
feasible penalties?}

When the graph~$G$ is directed and acyclic, we propose a solution built upon two ideas. First, we use in the
penalty framework of~\citet{jacob} and~\citet{huang} a novel group
structure~$\G_p$ that contains \emph{all the paths} in~$G$;
a path is defined as a sequence of vertices $(v_1,\ldots,v_k)$ such that
for all $1\!\leq\! i < \!k$, we have $(v_i,v_{i+1})\! \in\! E$.
The second idea is to use appropriate costs for
each path (the ``price'' one has to pay to select a path), which, as we show in
the sequel, allows us to leverage network flow optimization.
We call the resulting regularization functions ``path coding'' penalties. They
go beyond pairwise interactions between vertices and model long-range
interactions between the variables in the graph. They encourage sparsity
patterns forming subgraphs that can be covered by a small number of paths,
therefore promoting connectivity of the solution.  We illustrate the ``path
coding'' concept for DAGs in Figure~\ref{fig:graphsparsity}.
Even though the number of paths in a DAG is exponential in the graph size, we
map the \emph{path selection} problems our penalties involve to network flow
formulations~\citep[see][]{ahuja,bertsekas2}, which can be solved in polynomial
time.  As shown in Section~\ref{sec:approach}, we build
minimum cost flow formulations such that sending a positive amount of flow
along a path for minimizing a cost is equivalent to selecting the path. This
allows us to efficiently compute the penalties and their proximal operators, a
key tool to address regularized problems~\citep[see][for a review]{bach8}.

\tikzstyle{source}=[circle,thick,draw=blue!75,fill=blue!20,minimum size=8mm]
\tikzstyle{sink}=[circle,thick,draw=blue!75,fill=blue!20,minimum size=8mm]
\tikzstyle{group}=[place,thick,draw=red!75,fill=red!20, minimum size=8mm]
\tikzstyle{groupwhite}=[place,thick,draw=black!100,fill=red!0, minimum size=8mm]
\tikzstyle{groupgray}=[place,thick,draw=black!100,fill=gray!60, minimum size=8mm]
\tikzstyle{empty}=[place,draw=red!0, minimum size=8mm]
\tikzstyle{var}=[rectangle,thick,draw=black!75,fill=black!20,minimum size=6mm]
\tikzstyle{arrow}=[->,very thick]
\tikzstyle{arrowu}=[-,very thick]
\tikzstyle{arrowb}=[->,very thick,dotted]
\tikzstyle{arrowc}=[->,line width=1.0mm]
\def\distnode{2.0cm}
\def\distnodeshort{1.6cm}
\def\distnodeshortb{1.8cm}
\begin{figure}[t]
\centering
\subfloat[Sparsity pattern in an undirected graph.]{ \label{subfig:undirected}
 \begin{tikzpicture}[node distance=\distnode,>=stealth',bend angle=45,auto]
 \begin{scope}
     \node [groupwhite]    (v1)                       {$1$};
     \node [groupgray]    (v2)  [right of=v1,yshift=8mm,xshift=-2mm]        {$3$};
     \node [groupgray]    (v3)  [right of=v2,yshift=-8mm,xshift=-8mm]                     {$6$};
     \node [groupgray]    (v4)  [left of=v3,yshift=-8mm,xshift=0mm]  {$2$};
     \node [groupwhite]    (v5)  [below of=v1,yshift=8mm,xshift=-2mm]  {$4$};
     \node [groupwhite]    (v6)  [below of=v5,yshift=2mm,xshift=8mm]  {$5$};
     \node [groupwhite]    (v7)  [right of=v6,yshift=2mm,xshift=-2mm]  {$7$};
     \node [groupwhite]    (v8)  [right of=v7,yshift=10mm,xshift=-10mm]  {$8$};
     \node [groupgray]    (v9)  [left of=v6,yshift=20mm,xshift=-4mm]  {$9$};
     \node [groupwhite]    (v10)  [above of=v9,xshift=2mm,yshift=-1mm]  {$10$};
     \node [groupgray]    (v11)  [left of=v10,yshift=-10mm,xshift=5mm]  {$11$};
     \node [groupgray]    (v12)  [below of=v11,yshift=5mm,xshift=1mm]  {$12$};
     \node [groupwhite]    (v13)  [below of=v9,yshift=5mm,xshift=6mm]  {$13$};
     \draw[arrowu]        (v1) -- (v10);
     \draw[arrowu]        (v2) -- (v10);
     \draw[arrowu]        (v9) -- (v12);
     \draw[arrowu]        (v9) -- (v13);
     \draw[arrowu]        (v9) -- (v10);
     \draw[arrowu]        (v9) -- (v11);
     \draw[arrowu]        (v1) -- (v9);
     \draw[arrowu]        (v10) -- (v11);
     \draw[arrowu]        (v11) -- (v12);
     \draw[arrowu]        (v12) -- (v13);
     \draw[arrowu]        (v6) -- (v13);
     \draw[arrowu]        (v5) -- (v9);
     \draw[arrowu]        (v1) -- (v2);
     \draw[arrowu]        (v2) -- (v3);
     \draw[arrowu]        (v4) -- (v3);
     \draw[arrowu]        (v4) -- (v7);
     \draw[arrowu]        (v1) -- (v4);
     \draw[arrowu]        (v4) -- (v2);
     \draw[arrowu]        (v1) -- (v5);
     \draw[arrowu]        (v5) -- (v6);
     \draw[arrowu]        (v6) -- (v7);
     \draw[arrowu]        (v7) -- (v8);
     \draw[arrowu]        (v4) -- (v8);
     \draw[arrowu]        (v3) -- (v8);
     \draw[arrowu]        (v6) -- (v3);
     \draw[arrowu]        (v4) -- (v5);
     \end{scope}
 \end{tikzpicture}

} \hfill
\subfloat[Selected paths in a DAG.]{\label{subfig:dag}
 \begin{tikzpicture}[node distance=\distnode,>=stealth',bend angle=45,auto]
 \begin{scope}
     \node [groupwhite]    (v1)                       {$1$};
     \node [groupgray]    (v2)  [right of=v1,yshift=8mm,xshift=-2mm]        {$3$};
     \node [groupgray]    (v3)  [right of=v2,yshift=-8mm,xshift=-8mm]                     {$6$};
     \node [groupgray]    (v4)  [left of=v3,yshift=-8mm,xshift=0mm]  {$2$};
     \node [groupwhite]    (v5)  [below of=v1,yshift=8mm,xshift=-2mm]  {$4$};
     \node [groupwhite]    (v6)  [below of=v5,yshift=2mm,xshift=8mm]  {$5$};
     \node [groupwhite]    (v7)  [right of=v6,yshift=2mm,xshift=-2mm]  {$7$};
     \node [groupwhite]    (v8)  [right of=v7,yshift=10mm,xshift=-10mm]  {$8$};
     \node [groupgray]    (v9)  [left of=v6,yshift=20mm,xshift=-4mm]  {$9$};
     \node [groupwhite]    (v10)  [above of=v9,xshift=2mm,yshift=-1mm]  {$10$};
     \node [groupgray]    (v11)  [left of=v10,yshift=-10mm,xshift=5mm]  {$11$};
     \node [groupgray]    (v12)  [below of=v11,yshift=5mm,xshift=1mm]  {$12$};
     \node [groupwhite]    (v13)  [below of=v9,yshift=5mm,xshift=6mm]  {$13$};
     \draw[arrow]        (v1) -- (v10);
     \draw[arrow]        (v2) -- (v10);
     \draw[arrow]        (v9) -- (v12);
     \draw[arrow]        (v9) -- (v13);
     \draw[arrow]        (v9) -- (v10);
     \draw[arrowc]        (v9) -- (v11);
     \draw[arrow]        (v1) -- (v9);
     \draw[arrow]        (v10) -- (v11);
     \draw[arrowc]        (v11) -- (v12);
     \draw[arrow]        (v12) -- (v13);
     \draw[arrow]        (v6) -- (v13);
     \draw[arrow]        (v5) -- (v9);
     \draw[arrow]        (v1) -- (v2);
     \draw[arrowc]        (v2) -- (v3);
     \draw[arrow]        (v4) -- (v3);
     \draw[arrow]        (v4) -- (v7);
     \draw[arrow]        (v1) -- (v4);
     \draw[arrowc]        (v4) -- (v2);
     \draw[arrow]        (v1) -- (v5);
     \draw[arrow]        (v5) -- (v6);
     \draw[arrow]        (v6) -- (v7);
     \draw[arrow]        (v7) -- (v8);
     \draw[arrow]        (v4) -- (v8);
     \draw[arrow]        (v3) -- (v8);
     \draw[arrow]        (v6) -- (v3);
     \draw[arrow]        (v4) -- (v5);
     \end{scope}
 \end{tikzpicture} 

}
\caption{ Left \subref{subfig:undirected}: an undirected graph. A sparsity pattern forming a subgraph with two connected components is represented by gray nodes. Right \subref{subfig:dag}: when the graph is a DAG, the sparsity pattern is covered by two paths~$(2,3,6)$ and~$(9,11,12)$ represented by bold arrows.}
\label{fig:graphsparsity}
\end{figure}
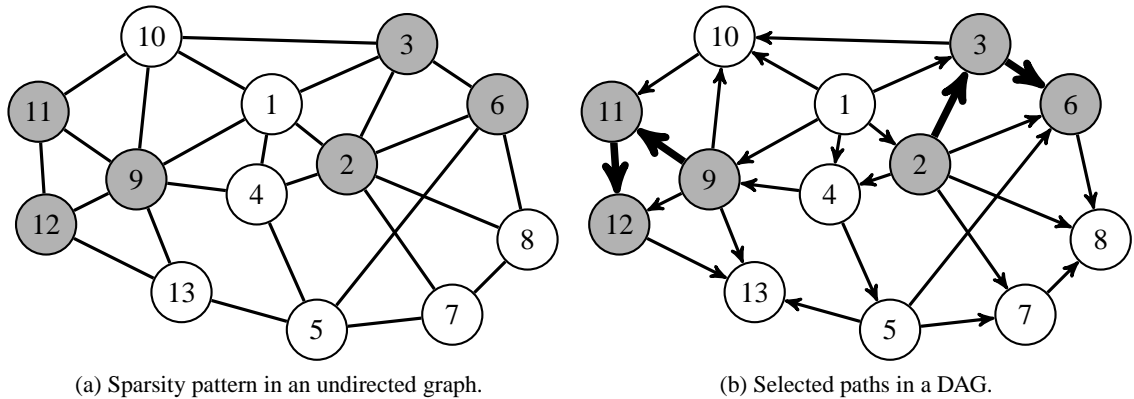

Therefore, we make in this paper a new link between structured graph penalties
in DAGs and network flow optimization.  The development of
network flow optimization techniques has been very active from the 60's to the
90's~\citep[see][]{ford,goldberg,ahuja,goldberg2,bertsekas2}.  They have
attracted a lot of attention during the last decade in the computer vision
community for their ability to solve large-scale combinatorial problems
typically arising in image segmentation tasks~\citep{boykov}. Concretely,
by mapping a problem at hand to a network flow formulation, one can possibly
obtain fast algorithms to solve the original problem. Of course, such a mapping
does not always exist or can be difficult to find. This is made possible in the
context of path coding penalties thanks to decomposability properties of
the path costs, which we make explicit in Section~\ref{sec:approach}.

We remark that different network flow formulations have also been used recently
for sparse estimation~\citep{cehver,chambolle,hoefling,mairal11}.
\citet{cehver} combine for example sparsity and Markov random fields for signal
reconstruction tasks.  They introduce a non-convex penalty consisting of
pairwise interaction terms between vertices of a graph, and their approach
requires iteratively solving maximum flow problems. It has also been shown by~\citet{chambolle}
and~\citet{hoefling} that for the anisotropic total-variation
penalty, called ``fused lasso'' in statistics, the solution to
problem~(\ref{eq:prob}) can be obtained
by solving a sequence of parametric maximum flow problems. The total-variation penalty
can be useful to obtain piecewise constant solutions on a
graph~\citep[see][]{chen3}.  Finally, 
\citet{mairal11} have shown that the structured
sparsity-inducing regularization function of~\citet{jenatton} is related to network
flows in a similar way as the total variation penalty. Note that 
both~\citet{jacob} and~\citet{jenatton} use the same terminology of
``group Lasso with overlapping groups'', leading to some confusion in the literature. Yet, their works are significantly different
and are in fact complementary:
given a group structure~$\GG$, the penalty of~\citet{jacob} encourages
solutions whose sparsity pattern is a \emph{union} of a few groups, whereas the
penalty of~\citet{jenatton} promotes an
\emph{intersection} of groups. It is natural to use the framework of~\citet{jacob}
to encourage connectivity of a problem solution in a graph, e.g., by choosing~$\GG$ as the pairs of 
vertices linked by arc. It is however not obvious how to obtain this effect with
the penalty of~\citet{jenatton}. We discuss this question in more details in
Appendix~\ref{appendix:rodolphe}.

To summarize, we have designed non-convex and convex penalty
functions to do feature selection in directed acyclic graphs.  Because our
penalties involve an exponential number of variables, one for every path in the
graph, existing optimization techniques cannot be used. To deal with this
issue, we introduce network flow optimization tools that implicitly handle
the exponential number of paths, allowing the penalties and their proximal
operators to be computed in polynomial time. As a result, our penalties can
model long-range interactions in the graph and are tractable.

The paper is organized as follows: In Section \ref{sec:related}, we present
preliminary tools, notably a brief introduction to network flows.
In Section~\ref{sec:approach}, we propose the path coding penalties and
optimization techniques for solving the corresponding sparse
estimation problems.  Section~\ref{sec:exp} is devoted to experiments on
synthetic, genomic, and image data to demonstrate the benefits of path coding penalties
over existing ones and the scalability of our approach. 
Section~\ref{sec:ccl} concludes the paper.

\section{Preliminaries} \label{sec:related}
As we show later, our path coding penalties are related to the
concept of flow in a graph.  Since this concept is not widely used in the
machine learning literature, we provide a brief overview of this topic in
Section~\ref{subsec:flowreview}. In Section~\ref{subsec:proxgrad}, we also
present proximal gradient methods, which have become very popular for solving sparse
regularized problems~\citep[see][]{bach8}.

\subsection{Network Flow Optimization}\label{subsec:flowreview}
Network flows have been well studied in the computer science community, and
have led to efficient dedicated algorithms for solving particular linear
programs~\citep[see][]{ahuja,bertsekas2}.  Let us consider a directed graph
$G\!=\!(V,E)$ with two special nodes~$s$ and~$t$, respectively
dubbed \emph{source} and~\emph{sink}.
A \emph{flow}
$f$ on the graph~$G$~is defined as a non-negative function on
arcs~$[f_{uv}]_{(u,v) \in E}$  that satisfies two sets of linear constraints:
\begin{myitemize}
  \item {\bfseries capacity constraints}:  the value of the flow~$f_{uv}$ on an arc~$(u,v)$ in~$E$ should satisfy the constraint
  $l_{uv}\! \leq \!f_{uv} \!\leq \!\delta_{uv}$,
  where~$l_{uv}$ and~$\delta_{uv}$ are respectively called lower and upper
  capacities; 
\item {\bfseries conservation constraints}:  the sum of incoming flow at a vertex is equal to the sum of outgoing
flow except for the source~$s$ and the sink~$t$.
\end{myitemize}
 We present examples of flows in Figures~\ref{subfig:flow1}
and~\ref{subfig:flow2}, and denote by~$\FF$ the set of flows on a graph~$G$. 
We remark that with appropriate graph
transformations, the flow definition we have given admits several variants.
It is indeed possible to
consider several source and sink nodes, capacity constraints on the amount
of flow going through vertices, or several arcs with different capacities
between two vertices~\citep[see more details in][]{ahuja}.  

\tikzstyle{arrowd}=[->,line width=0.8mm,red]
\begin{figure}[t]
\centering
\subfloat[A flow in a DAG.]{ \label{subfig:flow1}
       \begin{tikzpicture}[node distance=\distnodeshortb,>=stealth',bend angle=45,auto]
     \begin{scope}
    \node [group]   (v1)                       {1};
    \node    (v1b) [below of=v1,yshift=6mm]                      {~};
    \node [group]  (v2)  [below of=v1b,yshift=6mm]        {2};
    \node [group]   (v3)  [right of=v1b,xshift=-4mm]                     {3};
    \node [source]   (s)  [left of=v1b,xshift=4mm]                     {$s$};
    \node [group]   (v4)  [right of=v3]  {4};
    \node [sink]   (t)  [right of=v4]                     {$t$};
    \draw[arrow]        (v1) -- node {$1$} (v3);
    \draw[arrow]        (v2) -- node[below] {$1$} (v3);
    \draw[arrow]        (v3) -- node {$2$} (v4);
    \draw[arrow]        (s) -- node {$1$} (v1);
    \draw[arrow]        (s) -- node[below] {$1$} (v2);
    \draw[arrow]        (v4) -- node {$2$} (t);
    \end{scope}
    \end{tikzpicture}
} \hfill
\subfloat[A flow in a directed graph with a cycle.]{\label{subfig:flow2}
    \begin{tikzpicture}[node distance=\distnodeshortb,>=stealth',bend angle=0,auto]
     \begin{scope}
    \node [group]   (v1)                       {1};
    \node    (v1b) [below of=v1,yshift=6mm]                      {~};
    \node [group]  (v2)  [below of=v1b,yshift=6mm]        {2};
    \node [group]   (v3)  [right of=v1b,xshift=-4mm]                     {3};
    \node [source]   (s)  [left of=v1b,xshift=4mm]                     {$s$};
    \node [group]   (v4)  [right of=v3]  {4};
    \node [sink]   (t)  [right of=v4]                     {$t$};
    \draw[arrow]        (v1) -- node {$2$} (v3);
    \draw[arrow]        (v2)  -- node {$1$} (v1);
    \draw[arrow]        (v3) -- node {$2$} (v4);
    \draw[arrow]        ([yshift=-6mm,xshift=0mm] v3) -- node[above,yshift=2mm] {$1$} ([yshift=-6mm,xshift=0mm] v2);
    \draw[arrow]        ([yshift=6mm] v2) -- node[below,yshift=-2mm] {$1$} ([yshift=6mm] v3);
    \draw[arrow]        (s) -- node {$1$} (v1);
    \draw[arrow]        (s) -- node[below] {$1$} (v2);
    \draw[arrow]        (v4) -- node {$2$} (t);
    \end{scope}
    \end{tikzpicture}
} \\
\subfloat[$(s,t)$-path flow in a DAG.]{ \label{subfig:flow3}
       \begin{tikzpicture}[node distance=\distnodeshortb,>=stealth',bend angle=45,auto]
     \begin{scope}
    \node [group]   (v1)                       {1};
    \node    (v1b) [below of=v1,yshift=6mm]                      {~};
    \node [group]  (v2)  [below of=v1b,yshift=6mm]        {2};
    \node [group]   (v3)  [right of=v1b,xshift=-4mm]                     {3};
    \node [source]   (s)  [left of=v1b,xshift=4mm]                     {$s$};
    \node [group]   (v4)  [right of=v3]  {4};
    \node [sink]   (t)  [right of=v4]                     {$t$};
    \draw[arrow]        (v1) -- node {~} (v3);
    \draw[arrowd]        (v2) -- node[below] {$1$} (v3);
    \draw[arrowd]        (v3) -- node {$1$} (v4);
    \draw[arrow]        (s) -- node {~} (v1);
    \draw[arrowd]        (s) -- node[below] {$1$} (v2);
    \draw[arrowd]        (v4) -- node {$1$} (t);
    \end{scope}
    \end{tikzpicture}
} \hfill
\subfloat[A cycle flow in a directed graph.]{\label{subfig:flow4}
    \begin{tikzpicture}[node distance=\distnodeshortb,>=stealth',bend angle=0,auto]
     \begin{scope}
    \node [group]   (v1)                       {1};
    \node    (v1b) [below of=v1,yshift=6mm]                      {~};
    \node [group]  (v2)  [below of=v1b,yshift=6mm]        {2};
    \node [group]   (v3)  [right of=v1b,xshift=-4mm]                     {3};
    \node [source]   (s)  [left of=v1b,xshift=4mm]                     {$s$};
    \node [group]   (v4)  [right of=v3]  {4};
    \node [sink]   (t)  [right of=v4]                     {$t$};
    \draw[arrowd]        (v1) -- node {$1$} (v3);
    \draw[arrowd]        (v2)  -- node {$1$} (v1);
    \draw[arrow]        (v3) -- node {~} (v4);
    \draw[arrowd]        ([yshift=-6mm,xshift=0mm] v3) -- node[above,yshift=2mm] {~} ([yshift=-6mm,xshift=0mm] v2);
    \draw[arrow]        ([yshift=6mm] v2) -- node[below,yshift=-2mm,xshift=2mm] {\color{red}$1$} ([yshift=6mm] v3);
    \draw[arrow]        (s) -- node {~} (v1);
    \draw[arrow]        (s) -- node[below] {~} (v2);
    \draw[arrow]        (v4) -- node {~} (t);
    \end{scope}
    \end{tikzpicture}
} 
\caption{Examples of flows in a graph. \subref{subfig:flow1} The flow on the DAG can be interpreted as two units of flow sent from~$s$ to~$t$ along the paths $(s,1,3,4,t)$ and~$(s,2,3,4,t)$. \subref{subfig:flow2} The flow can be interpreted as two units of flow sent from~$s$ to~$t$ on the same paths as in~\subref{subfig:flow1} plus a unit of flow circulating along the cycle~$(1,3,2,1)$. \subref{subfig:flow3} $(s,t)$-path flow along the path~$(s,2,3,4,t)$. \subref{subfig:flow4} Cycle flow along~$(1,3,2,1)$.}\label{fig:flows}
\end{figure}
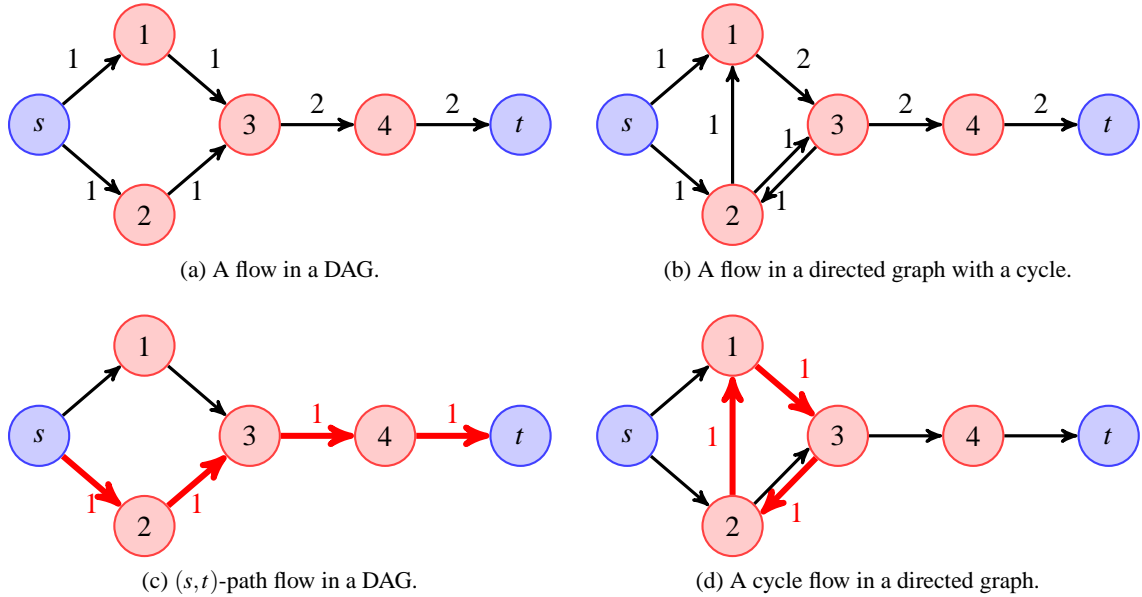

Some network flow problems have attracted a lot of attention because of their
wide range of applications, for example in engineering, physics, transportation,
or telecommunications~\citep[see][]{ahuja}. In particular, the~\emph{maximum
flow problem} consists of computing how much
flow can be sent from the source to the sink through the network~\citep{ford}. In other words, it consists of finding a flow~$f$ in~$\FF$ maximizing $\sum_{u \in V: (s,u) \in E} f_{su}$.
Another more general problem, which is of interest to us, is the \emph{minimum
cost flow} problem. It consists of finding a flow~$f$
in~$\FF$ minimizing a linear cost $\sum_{(u,v) \in E} c_{uv} f_{uv}$,
where every arc $(u,v)$ in~$E$ has a cost~$c_{uv}$ in~$\Real$.
Both the maximum flow and minimum cost flow problems are linear programs, and 
can therefore be solved using generic linear programming tools, e.g.,
interior points methods~\citep[see][]{boyd,nocedal}.
Dedicated algorithms exploiting the network structure
of flows have however proven to be much more efficient.
It has indeed been shown that minimum cost flow problems can be solved in
strongly polynomial time---that is, an exact solution can be obtained in a
finite number of steps that is polynomial in~$|V|$
and~$|E|$~\citep[see][]{ahuja}.  More important, these dedicated
algorithms are empirically efficient and can often handle large-scale
problems~\citep{goldberg,goldberg2,boykov}. 

Among linear programs, network flow problems have a few distinctive features. The most
striking one is the ``physical'' interpretation of a flow as a sum of quantities
circulating in the network. The
\emph{flow decomposition theorem} \citep[see][Theorem 3.5]{ahuja} makes this interpretation more precise
by saying that every flow vector can always be decomposed into a sum of $(s,t)$-path
flows (units of flow sent from~$s$ to~$t$ along a path) and cycle flows (units
of flow circulating along a cycle in the graph). We give examples of
$(s,t)$-path and cycle flow in Figures~\ref{subfig:flow3} and~\ref{subfig:flow4},
and present examples of flows in Figures~\ref{subfig:flow1}
and~\ref{subfig:flow2} along with their decompositions. Built upon the interpretation of flows as quantities circulating
in the network, efficient algorithms have been developed, e.g., the classical
\emph{augmenting path} algorithm
of~\citet{ford} for solving maximum flow problems.  Another feature of flow
problems is the locality of the constraints;
each one only involves neighbors of a vertex in the graph.
This locality is also exploited to design  
algorithms~\citep{goldberg,goldberg2}. Finally, minimum cost flow problems have
a remarkable \emph{integrality property}: a minimum cost flow problem where all
capacity constraints are integers can be shown to have an integral
solution~\citep[see][]{ahuja}.

Later in our paper, we will map path selection problems to network flows by
exploiting the flow decomposition theorem. In a nutshell, this apparently
simple theorem has an interesting consequence: minimum cost flow
problems can be seen from two equivalent viewpoints. Either one is looking for
the value $f_{uv}$ of a flow on every arc $(u,v)$ of a graph minimizing the
cost $\sum_{(u,v) \in E} c_{uv} f_{uv}$, or one is looking for the quantity
of flow that should circulate on every $(s,t)$-path and cycle
flow for minimizing the same cost. Of course, when the graph $G$ is a DAG, cycle flows do not
exist. We will define flow problems such that selecting a path in the context of our
path coding penalties is equivalent to sending some flow along a corresponding
$(s,t)$-path. We will also exploit the \emph{integrality property} to develop
tools both adapted to non-convex penalties and convex ones, respectively
involving discrete and continuous optimization problems.  With these tools in
hand, we will be able to deal efficiently with a simple class of optimization
problems involving our path coding penalties. To deal with the more complex
problem~(\ref{eq:prob}), we will need additional tools, which we now present.

\subsection{Proximal Gradient Methods}\label{subsec:proxgrad}
Proximal gradient methods are iterative schemes for minimizing
objective functions of the same form as~(\ref{eq:prob}), when
the function~$L$ is convex and differentiable
 with a Lipschitz continuous gradient.
The simplest proximal gradient method consists of linearizing at each
iteration the function $L$ around a current estimate $\tildew$, and this
estimate is updated as the (unique by strong convexity) solution to
\begin{equation}
    \min_{\w \in \R{p}} \Big[\underbrace{L(\tildew) + \nabla L(\tildew)^\top (\w - \tildew)}_{\text{linear approximation of } L}  + \underbrace{\frac{\rho}{2}\|\w - \tildew\|_2^2}_{\text{quadratic term}} + \underbrace{\lambda \Omega(\w)}_{\text{non-smooth part}}\Big],\label{eq:prox1}
\end{equation}
which is assumed to be easier to solve than the original
problem~(\ref{eq:prob}).  The quadratic term keeps the update in a
neighborhood where $L$ is close to its linear approximation, and the parameter $\rho$ is
an upper bound on the Lipschitz constant of $\nabla
L$.  When~$\Omega$ is convex, this scheme is known to converge to the
solution to problem~(\ref{eq:prob}) and admits variants with optimal
convergence rates among first-order methods~\citep{nesterov,beck}.
When~$\Omega$ is non-convex, the guarantees are weak (finding the global
optimum is out of reach), but it is easy to show that these updates
can be seen as a majorization-minimization
algorithm~\citep[][]{hunger} iteratively decreasing the value of the
objective function~\citep{wright,mairal14}. When~$\Omega$ is the~$\ell_1$- or ~$\ell_0$-penalty,
the optimization schemes~(\ref{eq:prox1}) are respectively known
as iterative soft- and hard-thresholding algorithms~\citep{daubechies,blumensath}.
Note that when $L$ is not differentiable, similar schemes exist, known
as mirror-descent~\citep{nemirovsky}.

Another insight about these methods can be obtained by rewriting sub-problem~(\ref{eq:prox1}) as
\begin{displaymath}
   \min_{\w \in \R{p}}\left[ {\displaystyle \frac{1}{2}} \Big\|\tildew - \frac{1}{\rho}
\nabla L(\tildew)-\w\Big\|_2^2 + \frac{\lambda}{\rho} \Omega(\w)\right].
\end{displaymath}
When~$\lambda=0$, the solution is obtained by a classical gradient
step~$\tildew \leftarrow \tildew - (1/\rho)\nabla L(\tildew)$.
Thus, proximal gradient methods can be interpreted as a generalization of
gradient descent algorithms when dealing with a nonsmooth term.
They are, however, only interesting when problem~(\ref{eq:prox1}) can be efficiently solved.
Formally, we wish to be able to compute the \emph{proximal operator} defined as:
\begin{definition}[Proximal Operator.]\label{definition:prox}~\newline
The proximal operator associated with a regularization term $\lambda\Omega$, which we denote by $\text{Prox}_{\lambda \Omega}$, is the function that maps a vector $\u \in \Real^{p}$ to the unique (by strong convexity) solution to
\begin{equation}\label{eq:prox_problem}
   \min_{\w \in \Real^{p}} \left[\frac{1}{2} \|\u-\w\|_2^2 + \lambda  \Omega(\w)\right].
\end{equation}
\end{definition}
Computing efficiently this operator has been shown to be possible for many
penalties~$\Omega$~\citep[see][]{bach8}.  We will show in the sequel that it is
also possible for our path coding penalties.

\section{Sparse Estimation in Graphs with Path Coding Penalties}\label{sec:approach}
We now present our path coding penalties, which exploit the structured sparsity
frameworks of~\citet{jacob} and~\citet{huang}. Because we choose a group structure~$\G_p$ with an exponential
number of groups, one for every path in the graph, the optimization techniques
presented by~\citet{jacob} or~\citet{huang} cannot be used anymore. We will
deal with this issue by introducing flow definitions of the path coding penalties.

\subsection{Path Coding Penalties}\label{subsec:penalties}
The so-called ``block coding'' penalty of~\citet{huang} can be written for a
vector~$\w$ in~$\Real^p$ and any set~$\G$ of groups of variables as
\begin{equation} 
\varphi(\w) \defin \min_{\JJ \subseteq \GG} \Big\{ \sum_{g
\in \JJ} \eta_{g}  \st \text{Supp}(\w) \subseteq \bigcup_{g \in \JJ} g
\Big\}, \label{eq:nonconvex} 
\end{equation} 
where the $\eta_g$'s are non-negative weights, and $\JJ$ is a subset of
groups in $\GG$ whose union covers the support of~$\w$ formally defined as $\text{Supp}(\w) \defin \{ j \in \{1,\ldots,p\} : \w_j \neq 0 \}$.
When the
weights~$\eta_g$ are well chosen, the non-convex penalty $\varphi$ encourages solutions~$\w$
whose support is in the union of a small number of groups;
in other words, the cardinality of~$\JJ$ should be small.
We remark that~\citet{huang} originally introduce this regularization function
under a more general information-theoretic point of view where $\varphi$ is a code
length~\citep[see][]{barron,cover}, and the weights~$\eta_g$ represent the number
of bits encoding the fact that a group~$g$ is selected.\footnote{Note
that~\citet{huang} do not directly use the function~$\varphi$ as a
regularization function. The ``coding complexity'' they introduce for a vector~$\w$
counts the number of bits to code the support of~$\w$, which is achieved
by~$\varphi$, but also use an $\ell_0$-penalty to count the number of bits
encoding the values of the non-zero coefficients in~$\w$.} 
One motivation for using $\varphi$ is that the selection of a few groups
might be easier to interpret than the selection of isolated variables. This
formulation extends non-convex group sparsity
regularization by allowing any group structure~$\GG$ to be considered.
Nevertheless, a major drawback is that computing this non-convex
penalty~$\varphi(\w)$ for a general group structure~$\G$ is difficult.
Equation~(\ref{eq:nonconvex}) is indeed an instance of a set cover problem,
which is NP-hard~\citep[see][]{cormen}, and appropriate
approximations, e.g., greedy algorithms, have to be used in practice. 

As often when dealing with non-convex penalties, one can either try to solve
directly the corresponding non-convex problems or look for a convex
relaxation. As we empirically show in Section~\ref{sec:exp}, having both
non-convex and convex variants of a penalty can be a significant asset. One
variant can indeed outperform the other one in some situations, while being the
other way around in some other cases.
It is therefore interesting to look for a convex relaxation of~$\varphi$. We
denote by~$\eta$ the vector $[\eta_g]_{g\in \GG}$ in~$\Real_+^{|\GG|}$,
and by~$\NN$ the binary matrix in $\{0,1\}^{p \times |\GG|}$ whose columns are
indexed by the groups $g$ in $\GG$, such that the entry~$\NN_{jg}$ is equal to
one when the index~$j$ is in the group $g$, and zero otherwise. 
Equation~(\ref{eq:nonconvex}) can be rewritten as a Boolean linear program,
a form which will be more convenient in the rest of the paper:
\begin{equation}
\varphi(\w) = \min_{  \x \in \{0,1\}^{|\GG|}} \left\{ \eta^\top \x  \st \NN\x
\geq \text{Supp}(\w) \right\}, \label{eq:altphi}
\end{equation} 
where, with an abuse of notation, $\text{Supp}(\w)$ is here a vector in
$\{0,1\}^p$ such that its $j$-th entry is $1$ if~$j$ is in the support of $\w$
and $0$ otherwise.  Let us also denote by~$|\w|$
the vector in~$\Real_+^p$ obtained by replacing the entries of~$\w$ by their absolute value.
We can now consider a convex relaxation of~$\varphi$:
\begin{equation}
\psi(\w) \defin \min_{  \x \in \Real_+^{|\GG|}} \left\{ \eta^\top \x  \st
\NN\x \geq |\w|  \right\}, \label{eq:convex} 
\end{equation} 
where not only the optimization problem above is a linear program, but in
addition $\psi$ is a convex function---in fact it can be shown to be a norm.
Such a relaxation is classical and corresponds to the same mechanism relating
the~$\ell_0$- to the~$\ell_1$-penalty, replacing $\text{Supp}(\w)$ by~$|\w|$.
The next lemma tells us that we have in fact obtained a variant of the penalty
introduced by~\citet{jacob}.
\begin{lemma}[Relation Between~$\psi$ and the Penalty of~\citet{jacob}.]\label{lemma:equiv}~\newline 
Suppose that any pattern in~$\{0,1\}^p$ can be represented by a union of groups
in~$\GG$. Then, the function~$\psi$ defined in~(\ref{eq:convex}) is equal to the penalty of~\citet{jacob} with~$\ell_\infty$-norms.
\end{lemma}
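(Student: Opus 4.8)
The plan is to recall the latent-group decomposition form of the penalty of \citet{jacob} and then establish the identity claimed for $\psi$ through two matching inequalities. With weights $\eta_g$ and $\ell_\infty$-norms, that penalty reads
\begin{equation*}
\Omega_{\text{Jacob}}(\w) = \min_{(\v^g)_{g \in \GG}} \Big\{ \sum_{g \in \GG} \eta_g \NormInf{\v^g} \st \w = \sum_{g \in \GG} \v^g \text{ and } \text{Supp}(\v^g) \subseteq g \Big\},
\end{equation*}
where each $\v^g$ lies in $\Real^p$. The observation I would rely on is that, unlike the $\ell_2$ case, the constraint $\NormInf{\v^g} \leq x_g$ decouples across coordinates: it is equivalent to $|v^g_j| \leq x_g$ for all $j \in g$. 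This is exactly what lets a latent decomposition be matched to the linear program defining $\psi$ in (\ref{eq:convex}).

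First I would prove $\psi(\w) \leq \Omega_{\text{Jacob}}(\w)$. Given any feasible decomposition $(\v^g)$, set $x_g \defin \NormInf{\v^g} \geq 0$. Since $\text{Supp}(\v^g) \subseteq g$, the triangle inequality gives, for each coordinate $j$, $|\w_j| = \big|\sum_{g} v^g_j\big| \leq \sum_{g \ni j} |v^g_j| \leq \sum_{g \ni j} x_g = (\NN\x)_j$. Hence $\x$ is feasible for (\ref{eq:convex}) and has the same objective value $\eta^\top\x = \sum_g \eta_g \NormInf{\v^g}$, so $\psi(\w)$ is at most this value; taking the infimum over decompositions yields the inequality.

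For the converse $\Omega_{\text{Jacob}}(\w) \leq \psi(\w)$, I would turn any feasible $\x$ of (\ref{eq:convex}) into a decomposition, coordinate by coordinate. For each $j$ with $\sum_{g \ni j} x_g > 0$, put $v^g_j \defin \w_j\, x_g / \sum_{h \ni j} x_h$ for $g \ni j$ and $v^g_j \defin 0$ otherwise; when $\sum_{g \ni j} x_g = 0$, feasibility forces $\w_j = 0$ and I set every $v^g_j \defin 0$. Then $\text{Supp}(\v^g) \subseteq g$ and $\sum_g v^g_j = \w_j$ by construction, while the feasibility inequality $|\w_j| \leq \sum_{h \ni j} x_h$ gives $|v^g_j| \leq x_g$, hence $\NormInf{\v^g} \leq x_g$. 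Therefore $\Omega_{\text{Jacob}}(\w) \leq \sum_g \eta_g \NormInf{\v^g} \leq \sum_g \eta_g x_g = \eta^\top\x$, and minimizing over feasible $\x$ gives the claim.

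Finally I would note the role of the hypothesis: requiring that any pattern be a union of groups ensures every coordinate belongs to at least one group, so both feasible sets are nonempty and both penalties are finite (otherwise each equals $+\infty$ on exactly the same vectors and the identity holds trivially). The main obstacle is the converse direction---producing an explicit latent decomposition from the flow-like vector $\x$---but the proportional split above resolves it precisely because the $\ell_\infty$ constraint separates over coordinates; the analogous construction would fail for $\ell_2$-norms, which is why the lemma is stated with $\ell_\infty$. I would also make sure to write the penalty of \citet{jacob} in the exact weighted $\ell_\infty$ form so that the weights $\eta_g$ correspond on both sides.
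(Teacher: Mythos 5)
Your proof is correct, but it follows a genuinely different route from the paper's. The paper (Appendix~B) works first on $\Real_+^p$: it rewrites the linear program defining $\psi$ by inserting redundant latent variables $\xib^g$ constrained to sum to $\w$, takes an \emph{optimal} pair $(\x,(\xib^g)_{g\in\GG})$, and then repairs it by a case analysis---using the quantities $\tau_j^+$ and $\tau_j^-$ that aggregate the positive and negative slacks $x_g-\xib^g_j$ over groups containing $j$---into another optimal pair $(\x,(\xib^{\prime g})_{g\in\GG})$ satisfying $\NormInf{\xib^{\prime g}}\leq x_g$, arguing further that equality holds at optimality; signs are handled only at the very end, by a symmetry argument extending the identity from $\Real_+^p$ to $\Real^p$. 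You instead prove two inequalities by mapping an \emph{arbitrary} feasible point of one problem to a feasible point of the other: the easy direction via $x_g \defin \NormInf{\v^g}$ and the triangle inequality, and the converse via the proportional split $v^g_j \defin \w_j x_g / \sum_{h \ni j} x_h$. This buys two real simplifications: you never need to reason about optimal solutions or preserve optimality under modification, and the proportional split carries the sign of $\w_j$ automatically, so the restriction to the nonnegative orthant disappears. What both arguments share is the key structural fact that the $\ell_\infty$ constraint $\NormInf{\v^g}\leq x_g$ decouples coordinatewise, which you correctly identify as the reason the lemma fails for $\ell_2$-norms (consistent with the paper's remark that its $\ell_2$ variant is a different penalty). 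Two small points worth making explicit if you write this up: the converse inequality uses the nonnegativity of the weights $\eta_g$ (assumed in the paper) to pass from $\NormInf{\v^g}\leq x_g$ to $\sum_{g\in\GG}\eta_g\NormInf{\v^g}\leq \eta^\top\x$; and your reading of the hypothesis---that it guarantees every coordinate lies in some group, so both problems are feasible for every $\w$---matches exactly how the paper's proof invokes it.
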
 
Note that~\citet{jacob} have introduced their penalty from a
different perspective, and the link between~(\ref{eq:convex}) and their work
is not obvious at first sight. In addition, their penalty involves a sum
of~$\ell_2$-norms, which needs to be replaced by~$\ell_\infty$-norms for the
lemma to hold. Hence, $\psi$ is a ``variant'' of the penalty of~\citet{jacob}. 
We give more details and the proof
of this lemma in Appendix~\ref{appendix:links}.\footnote{At
the same time as us, \citet{obozinski3} have studied a larger class of
non-convex combinatorial penalties and their corresponding convex relaxations, obtaining
 in particular a more general result than Lemma~\ref{lemma:equiv}, showing that~$\psi$ is the tightest convex relaxation of $\varphi$.}

Now that~$\varphi$ and~$\psi$ have been introduced, we are interested in
automatically selecting a small number of connected subgraphs from a directed
acyclic graph~$G=(V,E)$.  In Section~\ref{sec:intro}, we already discussed
group structures~$\G$ and introduced $\G_p$ the
\emph{set of paths in~$G$}. As a result, the path coding penalties~$\varphip$
and~$\psip$ encourage solutions that are sparse while forming a subgraph that
can be covered by a small number of paths. As we show in this section, this choice leads
to tractable formulations when the weights~$\eta_g$ for every path~$g$ in~$\G_p$ are appropriately chosen.

We will show in the sequel that a natural choice is to define
for all~$g$ in~$\G_p$ 
\begin{equation}
\eta_g\defin\gamma + |g|,\label{eq:choice}
\end{equation}
where~$\gamma$ is a new parameter encouraging the connectivity of the solution
whereas~$|g|$ encourages sparsity.  It is indeed possible to show that
when~$\gamma\!=\!0$, the functions~$\varphip$ and~$\psip$ respectively become
the~$\ell_0$- and the $\ell_1$-penalties, therefore encouraging sparsity but
not connectivity. On the other hand, when~$\gamma$ is large and the term~$|g|$
is negligible,~$\varphip(\w)$ simply ``counts'' how many paths are required to
cover the support of~$\w$, thereby encouraging connectivity regardless of the
sparsity of~$\w$. 

In fact, the choice~(\ref{eq:choice}) is a particular case of a more general
class of weights~$\eta_g$, which our algorithmic framework can handle. Let us enrich
the original directed acyclic graph~$G$ by introducing a source node~$s$ and a
sink node~$t$. Formally, we define a new graph $G'=(V',E')$ with
\begin{displaymath}
\begin{split}
V' & \defin V \cup \{s,t\},\\
E' &\defin E \cup \{ (s,v) : v \in V\} \cup  \{ (u,t) : u \in V\}.
\end{split}
\end{displaymath}
In plain words, the graph $G'$, which is a DAG, contains the graph $G$ and two
nodes $s,t$ that are linked to every vertices of $G$. Let us also assume
that some costs $c_{uv}$ in~$\Real$ are defined for all arcs~$(u,v)$
in~$E'$. Then, for a path~$g=(u_1,u_2,\ldots,u_k)$ in~$\G_p$, we define the weight~$\eta_g$ as
\begin{equation}
   \eta_g \defin c_{su_1} + \Big(\sum_{i=1}^{k-1} c_{u_{i}u_{i+1}}\Big) + c_{u_k t} = \sum_{(u,v) \in (s,g,t)} c_{uv},\label{eq:choice2}
\end{equation}
where the notation $(s,g,t)$ stands for the path $(s,u_1,u_2,\ldots,u_k,t)$ in $G'$.
The decomposition of the weights~$\eta_g$ as a sum of costs on $(s,t)$-paths of~$G'$ (the paths $(s,g,t)$ with $g$ in~$\G_p$) is 
a key component of the algorithmic framework we present next.
The construction of the graph $G'$ is illustrated in Figures~\ref{subfig:grapha}
and~\ref{subfig:graphb} for two cost configurations.
We remark that the simple choice of weights~(\ref{eq:choice}) corresponds to the
choice~(\ref{eq:choice2}) with the costs $c_{su}=\gamma$ for all~$u$ in~$V$ and
$c_{uv}=1$ otherwise (see Figure~\ref{subfig:grapha}).
Designing costs~$c_{uv}$ that go beyond the simple choice~(\ref{eq:choice})
can be useful whenever one has additional knowledge about the graph structure. 
For example, we experimentally exploit this property in Section~\ref{subsec:image} to
privilege or penalize paths~$g$ in~$\GG_p$ starting from a particular vertex. This is
illustrated in Figure~\ref{subfig:graphb} where the cost on the arc~$(s,1)$ is much smaller
than on the arcs $(s,2)$, $(s,3)$, $(s,4)$, therefore encouraging paths starting from vertex~$1$.

Another interpretation connecting the path-coding penalties with coding lengths
and random walks can be drawn using information theoretic arguments derived
from~\citet{huang}. We find these connections interesting, but for simplicity only
present them in Appendix~\ref{appendix:coding}.  In the next sections, we address the following
issues: (i) how to compute the penalties~$\varphip$ and~$\psip$ given a
vector~$\w$ in~$\Real^p$? (ii) how to optimize the objective function~(\ref{eq:prob})?  (iii) in the convex case (when~$\Omega=\psip$),
can we obtain practical optimality guarantees via a duality gap?  All of these
questions will be answered using network flow and convex optimization, or
algorithmic tools on graphs.

 \tikzstyle{source}=[circle,thick,draw=blue!75,fill=blue!20,minimum size=8mm]
 \tikzstyle{sink}=[circle,thick,draw=blue!75,fill=blue!20,minimum size=8mm]
 \tikzstyle{group}=[place,thick,draw=red!75,fill=red!20, minimum size=8mm]
 \tikzstyle{var}=[place,thick,draw=red!75,fill=red!20, minimum size=8mm]
 \tikzstyle{arrow}=[->,very thick]
 \tikzstyle{arrowb}=[->,very thick,dotted]
 \tikzstyle{arrowc}=[->,line width=0.8mm,red]
 \def\distnode{2.6cm}
 \def\distnodeshort{2.3cm}
 \def\distnodeshortb{2.6cm}
 \begin{figure}[hbtp]
 \tikzstyle{every label}=[red]
 \subfloat[Graph $G'$ with arcs costs and a path~$g$ in bold red.]{ \label{subfig:grapha}
\begin{tikzpicture}[node distance=\distnode,>=stealth',bend angle=45,auto]
\begin{scope}
\node [group]    (v1)                       {$1$};
\node [group]    (v2)  [right of=v1,yshift=6mm,xshift=4mm]        {$2$};
\node [group]    (v3)  [right of=v2,yshift=-6mm,xshift=0mm]                     {$3$};
\node [group]    (v4)  [left of=v3,yshift=-6mm,xshift=-4mm]  {$4$};
\node [source]    (s)  [above of=v2,xshift=-2mm,yshift=-3mm]  {s};
\node [sink]    (t)  [below of=s,yshift=-30mm]  {t};
\draw[arrow]        (v1) edge node[above=1pt] {$1$} (v2);
\draw[arrowc]        (v2) edge node[above=1pt] {$1$} (v3);
\draw[arrow]        (v4) edge node[below=1pt] {$1$} (v3);
\draw[arrow]        (v1) edge node[below=1pt] {$1$} (v4);
\draw[arrowc]        (v4) edge node[yshift=-2mm,xshift=0mm] {$1$} (v2);
\draw[arrowb]        (s) edge[in=80,out=200] node[above,yshift=0mm]{$\gamma$} (v1);
\draw[arrowb]        (s) edge[in=100,out=-20] node[yshift=-1mm,xshift=-1mm] {$\gamma$} (v3);
\draw[arrowb]        (s) edge[in=90,out=-80]  node[yshift=-2mm]{$\gamma$} (v2);
\draw[arrowc]        (s) edge[in=120,out=-110]  node[above,xshift=-0mm,yshift=4mm]{$\gamma$} (v4);
\draw[arrowb]        (v1) edge[in=170,out=-90] node[below] {$1$}(t);
\draw[arrowc]        (v3) edge[in=10,out=-90]  node {$1$}(t);
\draw[arrowb]        (v2) edge[in=70,out=-80]  node[near end,yshift=2mm] {$1$}(t);
\draw[arrowb]        (v4) edge[in=100,out=-90]  node[left,xshift=1mm] {$1$}(t);
\end{scope}
\end{tikzpicture}
} \hfill
\subfloat[Graph~$G'$ with different arcs costs and a path~$g$.]{ \label{subfig:graphb}
\begin{tikzpicture}[node distance=\distnode,>=stealth',bend angle=45,auto]
\begin{scope}
\node [group]    (v1)                       {$1$};
\node [group]    (v2)  [right of=v1,yshift=6mm,xshift=4mm]        {$2$};
\node [group]    (v3)  [right of=v2,yshift=-6mm,xshift=0mm]                     {$3$};
\node [group]    (v4)  [left of=v3,yshift=-6mm,xshift=-4mm]  {$4$};
\node [source]    (s)  [above of=v2,xshift=-2mm,yshift=-3mm]  {s};
\node [sink]    (t)  [below of=s,yshift=-30mm]  {t};
\draw[arrowc]        (v1) edge node[above=1pt] {$2$} (v2);
\draw[arrow]        (v2) edge node[above=1pt] {$2$} (v3);
\draw[arrow]        (v4) edge node[below=1pt] {$3$} (v3);
\draw[arrow]        (v1) edge node[below=1pt] {$3$} (v4);
\draw[arrowb]        (v4) edge node[yshift=-2mm,xshift=0mm] {$1$} (v2);
\draw[arrowc]        (s) edge[in=80,out=200] node[above,yshift=0mm]{$1$} (v1);
\draw[arrowb]        (s) edge[in=100,out=-20] node[yshift=-1mm,xshift=-1mm] {$5$} (v3);
\draw[arrowb]        (s) edge[in=90,out=-80]  node[yshift=-2mm]{$5$} (v2);
\draw[arrowb]        (s) edge[in=120,out=-110]  node[above,xshift=-0mm,yshift=4mm]{$5$} (v4);
\draw[arrowb]        (v1) edge[in=170,out=-90] node[below] {$1$}(t);
\draw[arrowb]        (v3) edge[in=10,out=-90]  node {$1$}(t);
\draw[arrowc]        (v2) edge[in=70,out=-80]  node[near end,yshift=2mm] {$1$}(t);
\draw[arrowb]        (v4) edge[in=100,out=-90]  node[left,xshift=1mm] {$1$}(t);
\end{scope}
\end{tikzpicture}
} 
 \caption{\subref{subfig:grapha} $G'$ is obtained by adding a source~$s$ and sink~$t$ to a DAG with four nodes. The cost configuration is such that the weights~$\eta_g$ satisfy $\eta_g=\gamma+|g|$. For example, for~$g=(4,2,3)$, the sum of costs along~$(s,g,t)$ is $\eta_g=\gamma+3$.
 \subref{subfig:graphb} Same graph $G'$ as \subref{subfig:grapha} but with different costs. The weight~$\eta_g$ associated to the path~$g=(1,2)$ is the sum of costs along $(s,1,2,t)$---that is,~$\eta_g=4$. 
 }
 \label{fig:paths}
 \end{figure}
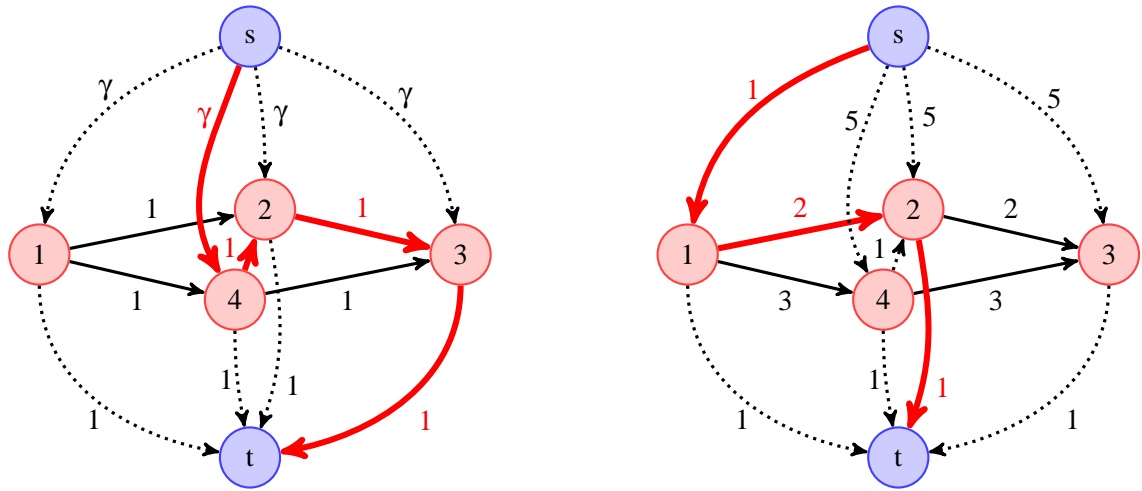

\subsection{Flow Definitions of the Path Coding Penalties}\label{subsec:graphopt}
Before precisely stating the flow definitions of~$\varphip$ and~$\psip$,
let us sketch the main ideas.
The first key component is to transform the
optimization problems~(\ref{eq:nonconvex}) and~(\ref{eq:convex}) over the paths
in~$G$ into optimization problems over $(s,t)$-\emph{path flows} in~$G'$. We recall that 
$(s,t)$-path flows are defined as flow vectors carrying the same positive value on every arc of a
path between $s$ and $t$. It intuitively corresponds to sending from~$s$ to~$t$
a positive amount of flow along a path, an interpretation we have presented in
Figure~\ref{fig:flows} from Section~\ref{subsec:flowreview}.
Then, we use the \emph{flow decomposition theorem}
(see Section~\ref{subsec:flowreview}),  
which provides two equivalent viewpoints for solving a minimum cost flow problem
on a DAG. One should be either looking for the value $f_{uv}$ of a flow on
every arc $(u,v)$ of the graph, or one should decide how much flow should be
sent on every $(s,t)$-path.

We assume that a cost configuration~$[c_{uv}]_{(u,v) \in E'}$ is available and
that the weights~$\eta_g$ are defined according to Equation~(\ref{eq:choice2}).
We denote by~$\FF$ the set of flows on~$G'$.  The second key component of our
approach is the fact that the cost of a flow $f$ in~$\FF$ sending one unit
from~$s$ to~$t$ along a path~$g$ in~$G$, defined as $\sum_{(u,v) \in E'}
f_{uv}c_{uv} = \sum_{(u,v) \in (s,g,t)} c_{uv}$ is exactly~$\eta_g$, according
to Equation~(\ref{eq:choice2}).  This enables us to reformulate our
optimization problems~(\ref{eq:nonconvex}) and~(\ref{eq:convex}) on paths
in~$G$ as optimization problems on~$(s,t)$-path flows in~$G'$, which in turn
are equivalent to minimum cost flow problems and can be solved in polynomial
time.  Note that this equivalence does not hold when we have cycle flows (see
Figure~\ref{subfig:flow4}), and this is the reason why we have assumed $G$ to
be acyclic. 

We can now formally state the mappings between the penalties~$\varphip$ and~$\psip$ on one hand, and network
flows on the other hand. An important quantity in the upcoming propositions is the amount of flow going
through a vertex~$j$ in~$V\!=\!\{1,\ldots,p\}$, which we denote by
$$s_j(f)\defin \sum_{u \in V' : (u,j) \in E'} f_{uj}.$$ 
Our formulations involve capacity constraints and costs for~$s_j(f)$,
which can be handled by network flow solvers; in fact, a vertex can always be
equivalently replaced in the network by two vertices, linked by an arc that
carries the flow quantity $s_j(f)$~\citep{ahuja}.
The main propositions are presented below, and the proofs are given in Appendix~\ref{appendix:proofs}.
\begin{proposition}[Computing $\varphip$.]\label{prop:varphi}~\newline 
 Let $\w$ be in $\Real^p$. Consider the network~$G'$ defined in Section~\ref{subsec:penalties} with costs $[c_{uv}]_{(u,v) \in E'}$, and define~$\eta_g$ as in~(\ref{eq:choice2}). Then,
  \begin{equation}
     \varphip(\w) =  \min_{f \in \FF} \left\{ \sum_{(u,v) \in E'} f_{uv} c_{uv} \st  s_j(f) \geq 1,~\forall j \in \text{Supp}(\w) \right\},  \label{eq:phiflow}
\end{equation}
where~$\FF$ is the set of flows on~$G'$. This is a minimum cost flow problem with some lower-capacity constraints,
which can be computed in strongly polynomial time.\footnote{See the definition of ``strongly polynomial time'' in Section~\ref{subsec:flowreview}.}
\end{proposition}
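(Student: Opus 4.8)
The plan is to prove~(\ref{eq:phiflow}) by two inequalities relating the combinatorial penalty $\varphip(\w)$ --- defined in~(\ref{eq:nonconvex}) with $\GG=\G_p$ as a minimum over subsets $\JJ\subseteq\G_p$ of paths whose union covers $\text{Supp}(\w)$ --- to the minimum-cost flow value on the right-hand side. The bridge between the two is the cost decomposition~(\ref{eq:choice2}): routing one unit of flow from $s$ to $t$ along the augmented path $(s,g,t)$ costs exactly $\eta_g$. Combined with the flow decomposition theorem recalled in Section~\ref{subsec:flowreview}, this lets me translate back and forth between selected paths and $(s,t)$-path flows.

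First I would show that the flow minimum is at most $\varphip(\w)$. Taking an optimal cover $\JJ$ attaining $\varphip(\w)$, I build a flow $f$ by superimposing one unit of $(s,t)$-path flow along $(s,g,t)$ for each $g\in\JJ$. By linearity of the arc cost and~(\ref{eq:choice2}), the cost of $f$ equals $\sum_{g\in\JJ}\eta_g=\varphip(\w)$; and since $\bigcup_{g\in\JJ}g\supseteq\text{Supp}(\w)$, each $j\in\text{Supp}(\w)$ lies on some selected path, so $s_j(f)\geq 1$. Hence $f$ is feasible for~(\ref{eq:phiflow}), giving the desired inequality.

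For the converse, the crucial ingredient is the integrality property. After encoding each constraint $s_j(f)\geq 1$ with the node-splitting trick mentioned above~\citep{ahuja} (replacing $j$ by two vertices joined by an arc that carries $s_j(f)$ and bears lower capacity~$1$), problem~(\ref{eq:phiflow}) becomes a minimum-cost flow problem with integer capacities, which therefore admits an \emph{integral} optimal flow $f^\star$. Since $G'$ is a DAG it has no cycles, so the flow decomposition theorem writes $f^\star$ as a sum of $(s,t)$-path flows with nonnegative integer multiplicities $k_g$. Setting $\JJ\defin\{g : k_g\geq 1\}$, feasibility gives $s_j(f^\star)\geq 1>0$ for every $j\in\text{Supp}(\w)$, which forces a selected path through $j$, so $\JJ$ covers the support; and using $\eta_g\geq 0$ with $k_g\geq 1$ on $\JJ$, the cost obeys $\sum_{(u,v)\in E'}f^\star_{uv}c_{uv}=\sum_g k_g\eta_g\geq\sum_{g\in\JJ}\eta_g\geq\varphip(\w)$. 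The two inequalities together yield the equality, and the strongly-polynomial-time claim is the standard complexity of minimum-cost flow with integer data~\citep{ahuja}.

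The step I expect to be the main obstacle is this converse direction, precisely the passage from an arbitrary optimal flow to a bona fide subset of paths: this is exactly where acyclicity of $G'$ (excluding cycle flows from the decomposition) and the integrality property (guaranteeing integer $k_g\geq 1$, so that $\sum_g k_g\eta_g$ dominates $\sum_{g\in\JJ}\eta_g$) are both indispensable. I would also flag explicitly the standing assumption $\eta_g\geq 0$ coming from~(\ref{eq:nonconvex}); without it the flow could decrease its cost by routing extra units, and the combinatorial and continuous quantities would decouple.
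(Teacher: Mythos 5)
Your proof is correct and follows essentially the same route as the paper's: both rely on the correspondence between path covers and $(s,t)$-path flows via the cost decomposition~(\ref{eq:choice2}), and the converse direction uses the integrality of minimum cost flows with integer capacities together with the flow decomposition theorem on the acyclic network $G'$. The only cosmetic difference is how multiplicities larger than one are handled: you dispose of them with the inequality $\sum_g k_g \eta_g \geq \sum_{g \in \JJ} \eta_g$ (using $\eta_g \geq 0$ and $k_g \geq 1$), whereas the paper argues that sending more than one unit of flow along a path is never optimal---the same observation phrased differently.
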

Given the definition of the penalty~$\varphi$ in Eq.~(\ref{eq:altphi}),
computing~$\varphip$ seems challenging for two reasons: (i)
Eq.~(\ref{eq:altphi}) is for a general group structure~$\G$ a NP-hard
Boolean linear program with~$|\G|$ variables; (ii) the size of $\G_p$ is
exponential in the graph size. Interestingly,
Proposition~\ref{prop:varphi} tells us that these two difficulties can be
overcome when~$\G=\G_p$ and that the non-convex penalty~$\varphip$ can be computed in
polynomial time by solving the convex optimization problem defined in
Eq.~(\ref{eq:phiflow}).
The key component
to obtain the flow definition of~$\varphip$ is the
decomposability property of the weights~$\eta_g$ defined
in~(\ref{eq:choice2}). This allows us to identify the cost of sending one unit
of flow in~$G'$ from~$s$ to~$t$ along a path~$g$ to the cost of selecting the
path~$g$ in the context of the path coding penalty~$\varphip$.
We now show that the same methodology applies to the convex penalty $\psip$.
 \begin{proposition}[Computing $\psip$.]\label{prop:psi}~\newline
 Let $\w$ be in $\Real^p$. Consider the network~$G'$ defined in Section~\ref{subsec:penalties} with costs $[c_{uv}]_{(u,v) \in E'}$, and define~$\eta_g$ as in~(\ref{eq:choice2}). Then,
  \begin{equation}
     \psip(\w) =  \min_{f \in \FF} \left\{ \sum_{(u,v) \in E'} f_{uv} c_{uv} \st  s_{j}(f) \geq |\w_j|,~\forall j \in \{1,\ldots,p\} \right\},\label{eq:psiflow}
  \end{equation}
  where~$\FF$ is the set of flows on~$G'$. This is a minimum cost flow problem with some lower-capacity constraints, which can be computed in strongly polynomial time.
\end{proposition}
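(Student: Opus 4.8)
The plan is to prove \refEq{eq:psiflow} by exhibiting a cost- and constraint-preserving correspondence between the feasible points of the linear program defining $\psip$ in \refEq{eq:convex} (with $\GG=\G_p$) and the flows feasible for the right-hand side, following exactly the strategy of \refProp{prop:varphi} but with the Boolean variables replaced by nonnegative reals, so that the flow decomposition theorem plays the role the integrality property played there. The central object is the $(s,t)$-path flow $\1_{(s,g,t)}$, the flow vector equal to one on every arc of the augmented path $(s,u_1,\ldots,u_k,t)$ attached to a path $g=(u_1,\ldots,u_k)$ in $\G_p$, and zero elsewhere. Two linear identities drive everything. First, by the decomposability \refEq{eq:choice2} of the weights, a flow $f=\sum_{g\in\G_p}\x_g\1_{(s,g,t)}$ with $\x\in\Real_+^{|\G_p|}$ has cost $\sum_{(u,v)\in E'}f_{uv}c_{uv}=\sum_g\x_g\sum_{(u,v)\in(s,g,t)}c_{uv}=\eta^\top\x$. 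Second, since each augmented path $(s,g,t)$ enters a vertex $j\in V$ through exactly one arc precisely when $j\in g$, the vertex flow satisfies $s_j(f)=\sum_{g:\,j\in g}\x_g=(\NN\x)_j$, so the constraints $s_j(f)\geq|\w_j|$ are exactly $\NN\x\geq|\w|$.

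For the inequality $(\text{rhs})\leq\psip(\w)$, I would take an optimal $\x$ for \refEq{eq:convex} and set $f\defin\sum_g\x_g\1_{(s,g,t)}$. A nonnegative combination of $(s,t)$-path flows always satisfies the conservation and nonnegativity constraints, so $f$ lies in $\FF$; by the two identities it is feasible, since $s_j(f)=(\NN\x)_j\geq|\w_j|$, and has cost $\eta^\top\x=\psip(\w)$. Hence the minimum over flows is no larger than $\psip(\w)$.

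For the converse, the key tool is the flow decomposition theorem \citep[Theorem 3.5]{ahuja}. Given any flow $f^\star$ attaining the right-hand side, the theorem writes it as a sum of $(s,t)$-path flows and cycle flows with nonnegative coefficients; because $G'$ is a DAG---$s$ has no incoming arc, $t$ no outgoing arc, and $G$ itself is acyclic---there are no cycle flows, and every $(s,t)$-path in $G'$ has the form $(s,g,t)$ for some $g\in\G_p$. Collecting coefficients yields $\x\in\Real_+^{|\G_p|}$ with $f^\star=\sum_g\x_g\1_{(s,g,t)}$, and the same two identities give $\NN\x\geq|\w|$ and $\eta^\top\x=\sum_{(u,v)}f^\star_{uv}c_{uv}$, so $\psip(\w)\leq\eta^\top\x=(\text{rhs})$. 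Combining the two inequalities yields the equality, and the formulation is a minimum cost flow with lower-capacity constraints on the split vertices carrying $s_j(f)$, hence solvable in strongly polynomial time by the results recalled in \refSec{subsec:flowreview}. I expect the only delicate point to be this converse direction: one must invoke acyclicity of $G'$ to rule out cycle flows in the decomposition---these would have no counterpart among the $\x_g$ and, with possibly negative costs $c_{uv}$, are exactly what the DAG assumption excludes---and verify that the path-flow decomposition, being a linear change of variables, transports both the objective and every vertex-flow constraint exactly.
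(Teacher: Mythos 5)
Your proposal is correct and follows essentially the same route as the paper: the paper's proof of \refProp{prop:psi} simply states that it builds on the definition~(\ref{eq:convex}) and mirrors the proof of \refProp{prop:varphi}, which is exactly the correspondence you spell out---path variables mapped to $(s,t)$-path flows via the cost decomposability~(\ref{eq:choice2}), constraints $\NN\x \geq |\w|$ identified with $s_j(f) \geq |\w_j|$, and the flow decomposition theorem (with acyclicity of $G'$ ruling out cycle flows) for the converse. Your observation that the integrality argument used for $\varphip$ can be dropped here, since nonnegative real coefficients suffice in the convex case, is precisely the adaptation the paper leaves implicit.
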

From the similarity between Equations~(\ref{eq:phiflow})
and~(\ref{eq:psiflow}), it is easy to see that~$\psip$ and~$\varphip$ are
closely related, one being a convex relaxation of the other as explained
in Section~\ref{subsec:penalties}. 
We have shown here that~$\varphip$ and~$\psip$ can be computed in
polynomial time and will discuss in Section~\ref{subsec:alg} practical
algorithms to do it in practice. Before that, we address the problem of optimizing~(\ref{eq:prob}).

\subsection{Using Proximal Gradient Methods with the Path Coding Penalties}
To address the regularized problem~(\ref{eq:prob}), we use proximal gradient methods, which 
we have presented in Section~\ref{subsec:proxgrad}. We need for that to compute the proximal operators of $\varphip$ and~$\psip$ from Definition~\ref{definition:prox}.
We show that this operator can be efficiently computed by using network flow optimization.
\begin{proposition}[Computing the Proximal Operator of ${\varphip}$.]\label{prop:proxphi}~\newline
 Let $\u$ be in $\Real^p$. Consider the network~$G'$ defined in Section~\ref{subsec:penalties} with costs $[c_{uv}]_{(u,v) \in E'}$, and define~$\eta_g$ as in~(\ref{eq:choice2}).
Let us define
 \begin{equation}
    f^\star \in  \argmin_{f \in \FF} \left\{ \sum_{(u,v) \in E'} f_{uv} c_{uv} + \sum_{j=1}^p \frac{1}{2}\max\left(\u_j^2 (1-s_{j}(f)), 0\right)  \right\}, \label{eq:proxphi}
 \end{equation}
 where~$\FF$ is the set of flows on~$G'$. This is a minimum cost flow problem, with piecewise linear costs,
which can be computed in strongly polynomial time.
Denoting by $\w^\star \! \defin \! \text{Prox}_{\varphip}[\u]$, we have for all~$j$ in $V=\{1,\ldots,p\}$ that $\w^\star_j = \u_j $ if $s_f(f^\star)\! > \! 0$ and $0$ otherwise.
\end{proposition}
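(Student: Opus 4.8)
The plan is to reduce the computation of $\text{Prox}_{\varphip}[\u]$ to a support-selection problem and then to recognize that problem as the minimum cost flow problem~(\ref{eq:proxphi}). First I would unfold the definition of the proximal operator from Definition~\ref{definition:prox} together with the combinatorial form~(\ref{eq:nonconvex}) of $\varphip$ specialized to the path group structure $\G_p$. This gives a joint minimization over $\w \in \Real^p$ and over subsets $\JJ \subseteq \G_p$ of paths whose union covers $\text{Supp}(\w)$, with objective $\frac{1}{2}\|\u-\w\|_2^2 + \sum_{g \in \JJ} \eta_g$. The key observation is that once a candidate support $S \subseteq \{1,\ldots,p\}$ is fixed, the optimal $\w$ is obtained coordinatewise: for $j \in S$ we set $\w_j = \u_j$ (since the penalty no longer depends on the magnitudes, only on which paths are chosen), and for $j \notin S$ we set $\w_j = 0$, paying $\frac{1}{2}\u_j^2$. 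This is why the final claim reads $\w^\star_j = \u_j$ on the selected vertices and $0$ elsewhere.

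Next I would rewrite the resulting outer problem over the support $S$ as an optimization over which vertices are ``covered.'' Introducing an indicator $\sigma_j \in \{0,1\}$ for whether vertex $j$ is selected, the objective becomes $\sum_{j=1}^p \frac{1}{2}\u_j^2(1-\sigma_j) + \varphip(\text{pattern } S)$, where the path-covering cost is exactly the quantity computed by Proposition~\ref{prop:varphi}. The crucial move is to merge these two pieces into a single minimum cost flow problem on $G'$. Using the flow formulation~(\ref{eq:phiflow}) of $\varphip$, covering a vertex $j$ corresponds to routing at least one unit of flow through $j$, i.e.\ $s_j(f) \geq 1$. The penalty $\frac{1}{2}\u_j^2$ for \emph{not} covering $j$ is then encoded by the term $\frac{1}{2}\max(\u_j^2(1-s_j(f)),0)$: when $s_j(f) \geq 1$ this term vanishes (the vertex is covered, no penalty), and when $s_j(f) = 0$ it equals $\frac{1}{2}\u_j^2$ (the vertex is dropped and we pay its squared value). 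I would argue that relaxing the hard constraint $s_j(f)\geq 1$ to this soft piecewise-linear penalty is lossless: at optimality each $s_j(f^\star)$ is either $0$ or at least $1$, so no intermediate value of $s_j$ can be beneficial, which is exactly the content of~(\ref{eq:proxphi}).

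The heart of the argument, and the step I expect to be the main obstacle, is justifying that the continuous flow problem~(\ref{eq:proxphi}) has an optimal solution whose vertex-flows $s_j(f^\star)$ are integral (indeed $0$ or $\geq 1$), so that it genuinely solves the underlying discrete set-cover-over-paths problem rather than some fractional relaxation. Here I would invoke the \emph{integrality property} of minimum cost flows stated in Section~\ref{subsec:flowreview}: after the standard reformulation of the piecewise-linear costs $\frac{1}{2}\max(\u_j^2(1-s_j(f)),0)$ as parallel arcs with appropriate breakpoints and integer capacities, the resulting network has an integral optimal flow, and by the flow decomposition theorem this flow decomposes into $(s,t)$-path flows whose supporting paths form the covering family $\JJ$. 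I would then verify that the cost of such an integral flow matches the objective of the original proximal problem term by term, using the decomposability~(\ref{eq:choice2}) of the weights $\eta_g$ to identify the path-routing cost with $\sum_{g\in\JJ}\eta_g$. Combining these, the minimizer $f^\star$ of~(\ref{eq:proxphi}) determines the optimal support $S = \{ j : s_j(f^\star) > 0\}$, and reading off $\w^\star$ coordinatewise completes the proof. The delicate points are making the piecewise-linear-to-flow reduction precise and confirming that the threshold behavior (each $s_j$ being $0$ or $\geq 1$) is preserved under the relaxation.
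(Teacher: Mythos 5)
Your proposal is correct and follows essentially the same route as the paper's own proof: reduce to support selection with the closed-form readout $\w^\star_j=\u_j$ on the selected support, plug in the flow formulation of $\varphip$ from Proposition~\ref{prop:varphi}, encode the cost of dropping vertex $j$ by the soft term $\frac{1}{2}\max(\u_j^2(1-s_j(f)),0)$, and justify losslessness of the relaxation via the integrality of minimum cost flows after the standard conversion of convex piecewise-linear costs to linear ones. The only cosmetic difference is ordering---the paper first restricts to integer flows with $S$ fixed, then eliminates $S$, then relaxes back to $\FF$, whereas you merge first and invoke integrality afterwards---but the underlying argument is the same.
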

Note that even though the formulation~(\ref{eq:prox_problem}) is
non-convex when~$\Omega$ is the function~$\varphip$, its global optimum can
be found by solving the convex problem described in
Equation~(\ref{eq:proxphi}). 
As before, the key component to establish the mapping to a network flow problem
is the decomposability property of the weights~$\eta_g$. More details are provided
in the proofs of Appendix~\ref{appendix:proofs}.
Note also that any minimum cost flow problem with convex piecewise linear
costs can be equivalently recast as a classical minimum cost flow problem
with linear costs~\citep[see][]{ahuja}, and therefore the
above problem can be solved in strongly polynomial time.  We now present
similar results for~$\psip$.
\begin{proposition}[Computing the Proximal Operator of ${\psip}$.]\label{prop:proxpsi}~\newline
 Let $\u$ be in $\Real^p$. Consider the network~$G'$ defined in Section~\ref{subsec:penalties} with costs $[c_{uv}]_{(u,v) \in E'}$, and define~$\eta_g$ as in~(\ref{eq:choice2}).
Let us define
 \begin{equation}
    f^\star \in  \argmin_{f \in \FF} \left\{ \sum_{(u,v) \in E'} f_{uv} c_{uv} + \sum_{j=1}^p \frac{1}{2}\max\big(|\u_j| - s_j(f), 0\big)^2  \right\}, \label{eq:flowpsi}
 \end{equation}
 where~$\FF$ is the set of flows on~$G'$. This is a minimum cost flow problem, with piecewise quadratic costs, which can be computed in polynomial time.
Denoting by $\w^\star \! \defin \! \text{Prox}_{\psip}[\u]$, we have for all $j$ in $V\!=\!\{1,\ldots,p\}$, $\w^\star_j = \sign(\u_j)\min(|\u_j|,s_{j}(f^\star))$.
\end{proposition}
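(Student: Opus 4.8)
The plan is to start directly from Definition~\ref{definition:prox} and substitute the flow characterization of $\psip$ given by Proposition~\ref{prop:psi}. By definition $\text{Prox}_{\psip}[\u]$ is the minimizer over $\w$ of $\frac{1}{2}\|\u-\w\|_2^2 + \psip(\w)$, and since \refEq{eq:psiflow} expresses $\psip(\w)$ as itself a minimization over $f\in\FF$, plugging it in and merging the two minima yields a single joint problem over the pair $(\w,f)$:
\begin{equation*}
   \min_{\w \in \Real^p,\, f \in \FF} \Big[ \tfrac{1}{2}\|\u-\w\|_2^2 + \sum_{(u,v)\in E'} f_{uv}c_{uv} \st s_j(f) \geq |\w_j|,~\forall j \in \{1,\ldots,p\} \Big].
\end{equation*}
First I would record two structural facts. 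The problem is jointly convex: the data term is convex in $\w$, the flow cost is linear in $f$, the set $\FF$ is a polytope, and each coupling constraint $|\w_j|-s_j(f)\leq 0$ carves out a convex set because $s_j(f)$ is linear in $f$. Moreover, strong convexity in $\w$ guarantees that the optimal $\w^\star$ is unique, while the optimal flow $f^\star$ need not be.

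The core step is to interchange the two minimizations, writing the joint problem as $\min_{f\in\FF}\min_{\w}$, and to eliminate $\w$ in closed form. For a fixed flow $f$ the inner problem decouples across coordinates, each $\w_j$ solving $\min\{\tfrac12(\u_j-\w_j)^2 \st |\w_j|\leq s_j(f)\}$. Since $s_j(f)\geq 0$ (flows are non-negative), this is the projection of $\u_j$ onto the interval $[-s_j(f),s_j(f)]$, whose unique solution is $\w_j=\sign(\u_j)\min(|\u_j|,s_j(f))$ and whose optimal value is $\tfrac12\max(|\u_j|-s_j(f),0)^2$. Substituting these per-coordinate values, the outer problem over $f$ is exactly \refEq{eq:flowpsi}, which defines $f^\star$. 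The reconstruction formula then follows from uniqueness: for any $f^\star$ optimal in \refEq{eq:flowpsi}, the pair $(\w,f^\star)$ with $\w_j=\sign(\u_j)\min(|\u_j|,s_j(f^\star))$ attains the value of the joint problem and is therefore optimal, so by uniqueness of $\w^\star$ we obtain $\w^\star_j=\sign(\u_j)\min(|\u_j|,s_j(f^\star))$ no matter which optimal flow is returned.

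It remains to argue that \refEq{eq:flowpsi} is a bona fide minimum-cost flow problem solvable in polynomial time, and this is where I expect the only genuine difficulty. The cost attached to vertex $j$ is $\tfrac12\max(|\u_j|-s_j(f),0)^2$, a convex, separable, piecewise-quadratic function of the flow $s_j(f)$ passing through $j$; using the node-splitting transformation recalled in Section~\ref{subsec:flowreview} (replace $j$ by two vertices joined by an arc carrying $s_j(f)$), this becomes a convex piecewise-quadratic cost on a single arc. The resulting problem is then a convex separable minimum-cost flow problem, and I would invoke the known polynomial-time algorithms for such problems~\citep{ahuja} to conclude. The contrast with Proposition~\ref{prop:proxphi} is worth flagging: there the induced arc costs are piecewise \emph{linear}, so the problem reduces to a linear minimum-cost flow and is solvable in \emph{strongly} polynomial time, whereas the quadratic costs here only yield a (weakly) polynomial guarantee.
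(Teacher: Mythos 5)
Your proof is correct and takes essentially the same route as the paper's: substitute the flow characterization of $\psip$ from Proposition~\ref{prop:psi} into Definition~\ref{definition:prox} to obtain a joint problem over $(\w,f)$, then eliminate $\w$ in closed form for fixed $f$, which yields exactly the cost in~(\ref{eq:flowpsi}) and the reconstruction formula. The only differences are cosmetic: the paper first reduces to $\u\in\Real_+^p$ by a sign-symmetry argument whereas you handle signs inline via projection onto $[-s_j(f),s_j(f)]$, and you additionally make explicit the uniqueness argument justifying the reconstruction for an arbitrary optimal $f^\star$ and the polynomial-time solvability, both of which the paper leaves implicit.
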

The proof of this proposition is presented in Appendix~\ref{appendix:proofs}.
We remark that we are dealing in Proposition~\ref{prop:proxpsi} with a minimum cost flow problem with
quadratic costs, which is more difficult to solve than
when the costs are linear.
Such problems with quadratic costs can be solved in weakly
(instead of strongly) polynomial time~\citep[see][]{hochbaum2}---that is,
a time polynomial in~$|V|$,~$|E|$ and~$\log(\|\u\|_\infty / \varepsilon)$
to obtain an $\varepsilon$-accurate solution to
problem~(\ref{eq:flowpsi}), where~$\varepsilon$ can possibly be set to
the machine precision.
We have therefore shown that the computations
of~$\varphip$,~$\psip$, $\text{Prox}_{\varphip}$ and~$\text{Prox}_{\psip}$ can
be done in polynomial time. We now discuss practical algorithms, which have
empirically shown to be efficient and scalable~\citep{goldberg2,bertsekas2}.

\subsection{Practical Algorithms for Solving the Network Flow Problems}\label{subsec:alg}
The minimum cost flow problems involved in the computations of~$\varphip$, $\psip$ and $\text{Prox}_{\varphip}$ can be solved in the 
worst-case with $O\big((|V| \log |V|)(|E| + |V|\log |V|)\big)$
operations~\citep[see][]{ahuja}.
However, this analysis corresponds to
the worst-case possible and the empirical complexity of network
flow solvers is often much better~\citep{boykov}.  Instead of a strongly
polynomial algorithm, we have chosen to implement the scaling
push-relabel algorithm~\cite{goldberg2}, also known as an $\varepsilon$-relaxation method~\citep{bertsekas2}. This algorithm is indeed empirically
efficient despite its weakly polynomial worst-case complexity.  It
requires transforming the capacities and costs of the minimum cost flow
problems into integers with an appropriate scaling and rounding
procedure, and denoting by~$C$ the (integer) value of the maximum cost
in the network its worst-case complexity is $O\big(|V|^2 |E| \log(C|V|)\big)$.  This algorithm is appealing because of its empirical
efficiency when the right heuristics are used~\cite{goldberg2}.  We
choose~$C$ to be as large as possible (using $64$ bits integers) not to
lose numerical precision, even though choosing~$C$ according to the
desired statistical precision and the robustness of the proximal gradient
algorithms would be more appropriate. It has indeed been shown recently
by~\citet{schmidt} that proximal gradient methods for convex optimization are
robust to inexact computations of the proximal operator, as long as the
precision of these computations iteratively increases with an appropriate rate.

Computing the proximal operator~$\text{Prox}_{\psip}[\u]$ requires dealing with
piecewise quadratic costs, which are more complicated to deal with than
linear costs. Fortunately, cost scaling or $\varepsilon$-relaxation techniques can
be modified to handle any convex costs, while keeping a polynomial complexity~\citep{bertsekas2}.
Concisely describing $\varepsilon$-relaxation
algorithms is difficult because their convergence properties do not come from
classical convex optimization theory. We present here an interpretation of these methods,
but we refer the reader to Chapter 9 of~\citet{bertsekas2} for more details and
implementation issues. In a nutshell, consider a primal convex cost flow
problem $\min_{f \in \FF} \sum_{(u,v) \in E} C_{uv}(f_{uv})$, where the
functions $C_{uv}$ are convex, and without capacity constraints.
Using classical Lagrangian
duality, it is possible to obtain the following dual formulation
\begin{displaymath}
\max_{ \pi \in \Real^{p}} \sum_{(u,v) \in E} q_{uv}(\pi_u - \pi_v),
~~\text{where}~~ q_{uv}(\pi_u - \pi_v) \defin \min_{ f_{uv} \geq 0}
\left[C_{uv}(f_{uv}) - (\pi_u-\pi_v)f_{uv}\right].
\end{displaymath}
This formulation is unconstrained and involves for each node $u$ in $V$ a dual
variable $\pi_u$ in~$\Real$, which is called the \emph{price} of node $u$.
$\varepsilon$-relaxation techniques rely on this dual formulation, and can be
interpreted as approximate dual coordinate ascent algorithms. They exploit the
network structure to perform computationally cheap updates of the dual and primal variables, and can deal
with the fact that the functions $q_{uv}$ are concave but not differentiable in
general. Presenting how this is achieved exactly would be too long for this paper;
we instead refer the reader to Chapter~9 of~\citet{bertsekas2}.
In the next section, we introduce algorithms to compute the dual norm of~$\psip$, which is an
important quantity to obtain optimality guarantees
for~(\ref{eq:prob}) with $\Omega=\psip$, or for implementing active set methods
that are adapted to very large-scale very sparse problems~\citep[see][]{bach8}.

\subsection{Computing the Dual-Norm of~$\psip$}
The dual norm $\psip^*$ of the norm $\psip$ is defined for any
vector $\kappab$ in $\R{p}$ as $\psip^*(\kappab)\defin\max_{\psip(\w)\leq
1}\w^\top\kappab$ \citep[see][]{boyd}. 
We show in this section that~$\psip^*$ can be computed efficiently.
\begin{proposition}[Computing the Dual Norm $\psip^*$.]\label{prop:psistar}~\newline
 Let $\kappab$ be in $\Real^p$.
 Consider the network~$G'$ defined in Section~\ref{subsec:penalties} with costs $[c_{uv}]_{(u,v) \in E'}$, and define~$\eta_g$ as in~(\ref{eq:choice2}).
For $\tau \geq 0$, and all $j$ in $\{1,\ldots,p\}$, we define
an additional cost for the vertex~$j$ to be $-|\kappab_j|/\tau$. We then define for every path~$g$ in~$\GG_p$, the length $l_\tau(g)$ to be the sum of the costs along the corresponding $(s,t)$-path from~$G'$. Then,
 \begin{displaymath}
    \psip^*(\kappab) =  \min_{\tau \in \Real_+}  \left\{ \tau \st \min_{g \in \GG_p} l_\tau(g) \geq 0 \right\}, 
 \end{displaymath}
and $\psip^*(\kappab)$ is the smallest factor $\tau$ such that the shortest $(s,t)$-path on~$G'$ has nonnegative length.
\end{proposition}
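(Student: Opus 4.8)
The plan is to reduce the dual-norm computation to an elementary linear program over the paths, solve that program in closed form to obtain a maximum-of-ratios expression, and then reinterpret this ratio as the threshold on $\tau$ cut out by a shortest-path condition on $G'$. First I would work directly from the variational definition~(\ref{eq:convex}) of $\psip$ rather than from the flow reformulation of Proposition~\ref{prop:psi}, since the group description is the most transparent for this computation.

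The first step is to rewrite the feasible set. By definition $\psip^*(\kappab)=\max\{\w^\top\kappab : \psip(\w)\leq 1\}$, and since $\psip$ depends on $\w$ only through $|\w|$, I can align the signs of $\w$ with those of $\kappab$ and restrict to $\w\geq 0$, giving $\psip^*(\kappab)=\max\{|\kappab|^\top\w : \w\geq 0,\ \psip(\w)\leq 1\}$. Expanding~(\ref{eq:convex}), the constraint $\psip(\w)\leq 1$ amounts to the existence of $\x\in\Real_+^{|\GG_p|}$ with $\NN\x\geq\w$ and $\eta^\top\x\leq 1$. For fixed $\x$, the inner maximization over $\w\geq 0$ is monotone in $\w$ (because $|\kappab|\geq 0$), so it is attained at the upper bound $\w=\NN\x$, and the whole problem collapses to $\psip^*(\kappab)=\max_{\x\geq 0}\{(\NN^\top|\kappab|)^\top\x : \eta^\top\x\leq 1\}$.

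Next I would solve this program explicitly. Writing $b_g\defin(\NN^\top|\kappab|)_g=\sum_{j\in g}|\kappab_j|$ and using that $\eta_g>0$ for every path $g$ (which holds for the costs of~(\ref{eq:choice}), and more generally whenever~(\ref{eq:choice2}) produces positive path costs), the maximum of $\sum_g b_g x_g$ under the single budget constraint $\sum_g \eta_g x_g\leq 1$ is attained by concentrating all the mass on the path with the largest ratio $b_g/\eta_g$ (the continuous ``bang-per-buck'' argument), and $\GG_p$ being finite the maximum is attained. This yields the explicit formula $\psip^*(\kappab)=\max_{g\in\GG_p}\frac{\sum_{j\in g}|\kappab_j|}{\eta_g}$, the only care being the degenerate case $\kappab=0$, where both sides vanish.

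Finally I would translate this ratio into the stated $\tau$-characterization. For $\tau>0$ the length of the $(s,t)$-path associated with $g$ is exactly $l_\tau(g)=\eta_g-\frac{1}{\tau}\sum_{j\in g}|\kappab_j|$, since adding the vertex cost $-|\kappab_j|/\tau$ at each $j\in g$ shifts the original path cost $\eta_g$ by $-\frac{1}{\tau}b_g$. Because $\eta_g>0$, the condition $l_\tau(g)\geq 0$ is equivalent to $\tau\geq b_g/\eta_g$, so $\min_{g\in\GG_p}l_\tau(g)\geq 0$ holds iff $\tau\geq\max_g b_g/\eta_g=\psip^*(\kappab)$; hence the smallest feasible $\tau$ is $\psip^*(\kappab)$, which is the claim. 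The shortest-path reading is then immediate, since $\min_g l_\tau(g)$ is the length of a shortest $(s,t)$-path in the DAG $G'$ once the vertex costs are folded onto the arcs, and shortest paths on a DAG are computable in polynomial time. The main obstacle I anticipate is not any single step but ordering the reductions correctly and making them rigorous---in particular justifying that the inner maximum over $\w$ is attained at $\w=\NN\x$ and that $\eta_g>0$---whereas the closing shortest-path reinterpretation is routine.
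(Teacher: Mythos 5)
Your proof is correct, and it establishes the pivotal identity $\psip^*(\kappab)=\max_{g\in\GG_p}\|\kappab_g\|_1/\eta_g$ by a genuinely different route than the paper. The paper works in the opposite direction: it applies strong LP duality to the variational definition~(\ref{eq:convex}) of $\psip$ itself, obtaining $\psip(\w)=\max\{\w^\top\kappab \st \NN^\top\kappab\leq\eta,~\kappab\geq 0\}$ on $\Real_+^p$, extends this representation to $\Real^p$ so that $\psip(\w)$ becomes the maximum of $\w^\top\kappab$ over the unit ball of $\kappab\mapsto\max_{g}\|\kappab_g\|_1/\eta_g$, and then concludes by norm biduality that this max-of-ratios function must be the dual norm $\psip^*$. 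You instead start from the definition of $\psip^*$, fold the variational definition of $\psip$ into the constraint, eliminate $\w$ by monotonicity (the maximum of $|\kappab|^\top\w$ over $0\leq\w\leq\NN\x$ sits at $\w=\NN\x$), and solve the resulting one-constraint linear program $\max_{\x\geq 0}\{(\NN^\top|\kappab|)^\top\x \st \eta^\top\x\leq 1\}$ in closed form via the bang-per-buck bound. Your route is more elementary and self-contained: it uses neither strong duality nor the bidual theorem (the latter tacitly requires already knowing that $\psip$ is a norm), and it surfaces explicitly the positivity assumption $\eta_g>0$ that both arguments silently rely on, both for the ratios to make sense and for the equivalence $l_\tau(g)\geq 0 \Leftrightarrow \tau\geq\|\kappab_g\|_1/\eta_g$. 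What the paper's route buys is brevity and a reusable dual representation of $\psip$ (its Equation~(\ref{eq:dualpsi}), which parallels Lemma~2 of \citet{jacob}); what yours buys is a direct derivation with attainment checked at every step and weaker prerequisites. The concluding translation of the ratio formula into the $\tau$-threshold and shortest-path statement is identical in both proofs.
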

The proof is given in Appendix~\ref{appendix:proofs}. We note that
the above quantity~$l_\tau(g)$ satisfies $l_\tau(g)= \eta_g -
\|\kappab_g\|_1/\tau$, for every~$\tau>0$ and~$\kappab$ in~$\Real^p$. We 
present a simple way for computing~$\psip^*$ in
Algorithm~\ref{algo:dual_norm}, which is proven in Proposition~\ref{prop:algphi} to be correct and to converge in polynomial time.
\begin{algorithm}[!hbtp]
\caption{Computation of the Dual Norm~$\psip^*$.}\label{algo:dual_norm}
\begin{algorithmic}[1]
\INPUT $\kappab \in \R{p}$ such that~$\kappab \neq 0$.  
\STATE Choose any path $g \in \GG_p$ such that~$\kappab_g \neq 0$;
\STATE $\delta \leftarrow -\infty$;
\WHILE{ $\delta < 0$}
    \STATE $\tau \leftarrow \frac{\|\kappab_g\|_1}{\eta_g}$;
    \STATE $g \leftarrow \argmin_{h \in \GG_p} l_\tau(h)$; (shortest path problem in a directed acyclic graph);
    \STATE $\delta \leftarrow l_\tau(g)$;
\ENDWHILE
\STATE {\bf{Return:}} $\tau=\psip^*(\kappab)$ (value of the dual norm).
\end{algorithmic}
\end{algorithm}
\begin{proposition}[Correctness and Complexity of Algorithm~\ref{algo:dual_norm}.]\label{prop:algphi}~\newline 
For~$\kappab$ in~$\Real^p$, the algorithm~\ref{algo:dual_norm} computes $\psip^*(\kappab)$ in at most $O(p|E'|)$ operations.
\end{proposition}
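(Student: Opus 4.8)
The plan is to recognize Algorithm~\ref{algo:dual_norm} as a Dinkelbach/Newton iteration for a fractional optimization problem and to read off both correctness and the iteration bound from the concavity of a parametric shortest-path value. Using the remark that $l_\tau(g) = \eta_g - \|\kappab_g\|_1/\tau$, Proposition~\ref{prop:psistar} says $\psip^*(\kappab)$ is the smallest $\tau \in \Real_+$ with $\min_{g \in \GG_p} l_\tau(g) \geq 0$; since $\eta_g > 0$, this is exactly $\tau^* \defin \max_{g \in \GG_p} \|\kappab_g\|_1/\eta_g$. I would introduce the auxiliary function $G(\tau) \defin \min_{g \in \GG_p}\big(\tau \eta_g - \|\kappab_g\|_1\big)$, so that $G(\tau) = \tau \min_g l_\tau(g)$ and $G(\tau) \geq 0 \iff \tau \geq \tau^*$. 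As a pointwise minimum of increasing affine functions of $\tau$, $G$ is concave, nondecreasing, and piecewise linear, with unique root $\tau^*$. The update $\tau \leftarrow \|\kappab_g\|_1/\eta_g$ in line~4 is precisely the root of the affine piece $\tau \mapsto \tau\eta_g - \|\kappab_g\|_1$ associated with the current path $g$, and line~5 recomputes the piece active at the new $\tau$ by a shortest-path computation.

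For correctness, first I would check that every iterate satisfies $\tau \leq \tau^*$, since $\|\kappab_g\|_1/\eta_g$ is one of the ratios whose maximum defines $\tau^*$. Because $G$ is concave with positive-slope active pieces, the affine piece at the current $\tau$ lies above $G$ and its root is $\le \tau^*$; this is the standard ``Newton from below'' behaviour, so the iterates increase monotonically toward $\tau^*$ (directly, $\delta<0$ gives $\|\kappab_g\|_1/\eta_g>\tau$, hence the next $\tau$ is strictly larger). When the loop exits we have $\delta = \min_{h} l_\tau(h) \geq 0$, i.e.\ $G(\tau)\geq 0$, hence $\tau \geq \tau^*$; combined with $\tau\le\tau^*$ this yields $\tau = \tau^* = \psip^*(\kappab)$. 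Here I would also note that $\argmin_{h \in \GG_p} l_\tau(h)$ is well defined and computable: since $G'$ is acyclic, combining the arc costs $c_{uv}$ with the vertex costs $-|\kappab_j|/\tau$ turns the problem into a shortest $(s,t)$-path in a DAG, solved in $O(|V'|+|E'|)$ by one pass in topological order.

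The hard part will be bounding the number of iterations by $O(p)$. The key lemma I would prove is that the slope $\eta_{g_i}$ of the successive active pieces strictly decreases. Let $g_i,g_{i+1}$ be two consecutive selected paths, let $\tau_{i+1} = \|\kappab_{g_i}\|_1/\eta_{g_i}$ be the value set in line~4, and set $\ell_i(\tau) \defin \tau\eta_{g_i} - \|\kappab_{g_i}\|_1$. By construction $\ell_i(\tau_{i+1}) = 0$, whereas $\ell_{i+1}(\tau_{i+1}) = \tau_{i+1}\,\delta < 0$ since the loop continues only while $\delta<0$; and at the previous point $\ell_i(\tau_i) \leq \ell_{i+1}(\tau_i)$ because $g_i$ minimizes $l_{\tau_i}$. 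Hence $\ell_{i+1}-\ell_i$ passes from nonnegative at $\tau_i$ to strictly negative at $\tau_{i+1} > \tau_i$, forcing its slope $\eta_{g_{i+1}} - \eta_{g_i} < 0$. For the natural weights $\eta_g = \gamma + |g|$ of~(\ref{eq:choice}), a strict decrease of $\eta_{g_i}$ is a strict decrease of the integer $|g_i| \in \{1,\ldots,p\}$, so the loop executes at most $p$ times.

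Putting the pieces together, the algorithm performs $O(p)$ iterations, each dominated by one shortest-path computation in $G'$ costing $O(|V'|+|E'|)=O(|E'|)$ (recall $|V'|=p+2$), giving the claimed $O(p|E'|)$ total. I expect the two-line crossing argument establishing strict monotonicity of $\eta_{g_i}$---and its translation into a strictly decreasing bounded integer quantity---to be the delicate step; the correctness part then follows routinely from Proposition~\ref{prop:psistar} and the concavity of $G$.
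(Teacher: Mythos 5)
Your proof is correct, and it shares the paper's overall skeleton (correctness via monotonicity of the parametric shortest-path value plus feasibility at exit; complexity via strict decrease of $\eta_{g_k}$ under the choice $\eta_g=\gamma+|g|$, with a DAG shortest path costing $O(|E'|)$ per iteration), but the way you establish the key lemma is genuinely different and arguably cleaner. The paper works directly with the quantities $\delta_k = l_{\tau_k}(g_k)$ and derives the multiplicative recurrence $\delta_{k+1} \geq \delta_k\bigl(1 - \|\kappab_{g_{k+1}}\|_1/\|\kappab_{g_k}\|_1\bigr)$, from which it first deduces $\|\kappab_{g_{k+1}}\|_1 \leq \|\kappab_{g_k}\|_1$ and then, in the strict case, $\eta_{g_{k+1}} < \eta_{g_k}$; your route multiplies through by $\tau$ to get the affine pieces $\ell_g(\tau) = \tau\eta_g - \|\kappab_g\|_1$ and reads off $\eta_{g_{i+1}} < \eta_{g_i}$ from a two-line crossing argument (nonnegative gap at $\tau_i$, strictly negative gap at $\tau_{i+1} > \tau_i$, hence negative slope difference), bypassing the intermediate monotonicity of $\|\kappab_{g_k}\|_1$ altogether. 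Your Dinkelbach/Newton framing also buys a cleaner correctness argument: the paper invokes existence and uniqueness of the root of $\tau \mapsto \min_h l_\tau(h)$ (and, incidentally, asserts this map is non-increasing when it is in fact non-decreasing), whereas you simply sandwich the exit value, $\tau \leq \psip^*(\kappab)$ because every iterate is one of the ratios $\|\kappab_g\|_1/\eta_g$, and $\tau \geq \psip^*(\kappab)$ because the exit condition certifies feasibility; concavity of $G$ explains why the iterates increase toward the root. One cosmetic caveat shared with the paper: the very first path is arbitrary rather than an $\argmin$, so the crossing argument only applies from the second selected path onward, giving at most $p+1$ iterations rather than $p$; this does not affect the claimed $O(p|E'|)$ bound.
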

The proof is also presented in Appendix~\ref{appendix:proofs}. We note that
this worst-case complexity bound might be loose. We have indeed
observed in our experiments that the empirical complexity is close to be linear in~$|E'|$.
To concretely illustrate why computing the dual norm can be useful, we now
give optimality conditions for problem~(\ref{eq:prob}) involving~$\psip^*$. The following lemma can immediately
be derived from~\citet[][Proposition 1.2]{bach8}.
\begin{lemma}[Optimality Conditions for Problem~(\ref{eq:prob}) with~$\Omega=\psi$.]\label{lemma:opt}~\newline
 A vector~$\w$ be in~$\Real^p$ is optimal for problem~(\ref{eq:prob}) with~$\Omega=\psi$ if and only if
 \begin{displaymath}
    \psi^*(\nabla L(\w)) \leq \lambda~~~\text{and}~~~-\nabla L(\w)^\top \w = \lambda \psi(\w).
 \end{displaymath}
\end{lemma}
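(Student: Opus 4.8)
The plan is to treat problem~(\ref{eq:prob}) with $\Omega=\psi$ as a convex program and to read off its first-order optimality condition, exactly along the lines of~\citet[Proposition 1.2]{bach8}. Since $L$ is convex and differentiable and $\psi$ is a norm (hence convex and finite on all of $\Real^p$), the objective $L+\lambda\psi$ is convex, so a vector $\w$ is a global minimizer if and only if $\mathbf{0}$ lies in the subdifferential of $L+\lambda\psi$ at $\w$. Because $L$ is differentiable everywhere, the Moreau--Rockafellar sum rule applies without any constraint qualification and gives $\partial(L+\lambda\psi)(\w)=\nabla L(\w)+\lambda\,\partial\psi(\w)$. Hence, assuming $\lambda>0$, optimality of $\w$ is equivalent to $-\tfrac{1}{\lambda}\nabla L(\w)\in\partial\psi(\w)$.

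The second step would be to invoke the standard characterization of the subdifferential of a norm in terms of its dual norm. Writing $\psi$ as the support function of the dual unit ball, $\psi(\w)=\max_{\psi^*(\z)\leq 1}\z^\top\w$, the subdifferential of a support function at $\w$ is precisely the set of maximizers; thus $\z\in\partial\psi(\w)$ holds if and only if the two conditions $\psi^*(\z)\leq 1$ and $\z^\top\w=\psi(\w)$ are met simultaneously (this includes $\w=0$, where the second condition is automatic and $\partial\psi(0)$ is the whole dual unit ball). Applying this with $\z=-\tfrac{1}{\lambda}\nabla L(\w)$ and using that a dual norm is symmetric and positively homogeneous, the first condition becomes $\psi^*(\nabla L(\w))\leq\lambda$, while the second becomes $-\nabla L(\w)^\top\w=\lambda\psi(\w)$, which are exactly the two stated conditions.

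The point that requires care, rather than a genuine obstacle, is justifying the support-function representation and the subdifferential formula for $\psi$: these rest on $\psi$ being a norm and on the biduality relation $\psi^{**}=\psi$ valid for norms, together with finiteness of $\psi$ on all of $\Real^p$ (equivalently, every coordinate is covered by at least one group of $\GG$, which is the covering hypothesis under which $\psi$ is a norm). The degenerate case $\lambda=0$ is handled directly, the conditions then reducing to $\nabla L(\w)=\mathbf{0}$, the first-order condition for the unregularized problem. Everything else is a direct substitution, so the lemma indeed follows immediately from~\citet[Proposition 1.2]{bach8} once the hypotheses on $\psi$ have been checked.
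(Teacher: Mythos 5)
Your proof is correct and follows essentially the same route as the paper, which simply invokes \citet[Proposition 1.2]{bach8}: that proposition is precisely the standard subdifferential/dual-norm optimality condition for a convex differentiable loss plus a norm, which you have spelled out via the sum rule and the characterization $\partial\psi(\w)=\{\z:\psi^*(\z)\leq 1,\ \z^\top\w=\psi(\w)\}$. Your added care about the covering hypothesis (so that $\psi$ is a finite-valued norm) and the degenerate case $\lambda=0$ is sound but does not change the substance of the argument.
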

The next section presents an active-set type of algorithm~\citep[see][Chapter 6]{bach8}
building upon these optimality conditions and adapted to our penalty~$\psip$.

\subsection{Active Set Methods for Solving Problem~(\ref{eq:prob}) when~$\Omega=\psip$}\label{subsec:activeset}
As experimentally shown later in Section~\ref{sec:exp}, proximal gradient
methods allows us to efficiently solve medium-large/scale problems ($p <
100\,000$). Solving larger scale problems can, however, be more difficult.
Algorithm~\ref{algo:activeset} is an active-set strategy that can overcome
this issue when the solution is very sparse.
It consists of solving a sequence of smaller instances of
Equation~(\ref{eq:prob}) on subgraphs~$\tilde{G}=(\tilde{V},\tilde{E})$,
with~$\tilde{V} \subseteq V$ and~$\tilde{E} \subseteq E$, which we call
\emph{active graphs}.  It is based on the computation of the dual-norm~$\psip^*$,
which we have observed can empirically be obtained in a time linear or close to linear in~$|E'|$.
Given such a subgraph~$\tilde{G}$, we denote by~$\tilde{\GG}_p$ the set of
paths in~$\tilde{G}$. The subproblems the active set strategy involve are the following:
\begin{equation}
   \min_{\w \in \Real^p} \big\{ L(\w) + \lambda{\oldpsi}_{\tilde{\GG}_p}(\w)  \st \w_j = 0 ~~\text{for all}~j \notin \tilde{V}\big\}. \label{eq:subprob}
\end{equation}
The key observations are that (i) when~$\tilde{G}$ is small, subproblem~(\ref{eq:subprob}) is
easy to solve; (ii) after solving~(\ref{eq:subprob}), one can check optimality conditions
for problem~(\ref{eq:prob}) using Lemma~\ref{lemma:opt}, and update~$\tilde{G}$ accordingly.
Algorithm~\ref{algo:activeset} presents the full algorithm, and the next proposition ensures that it is correct.
\begin{algorithm}[!hbtp]
\caption{Active-set Algorithm for Solving Equation~(\ref{eq:prob}) with~$\Omega=\psip$.}\label{algo:activeset}
\begin{algorithmic}[1]
\STATE {\bf{Initialization}} $\w \leftarrow 0$; $\tilde{G} \leftarrow (\emptyset,\emptyset)$ (active graph); 
\LOOP
   \STATE Update~$\w$ by solving subproblem~(\ref{eq:subprob}) (using the current value of~$\w$ as a warm start);
   \STATE Compute $\tau \leftarrow \psip^*(\nabla L(\w))$ (using Algorithm~\ref{algo:dual_norm});
   \IF{$\tau \leq \lambda$} 
      \STATE {\bfseries exit} the loop;
   \ELSE
       \STATE $g \leftarrow \argmin_{g \in \GG_p} l_\tau(g)$ (shortest path problem in a directed acyclic graph);
       \STATE $\tilde{V} \leftarrow \tilde{V} \cup g$; $\tilde{E} \leftarrow \tilde{E} \cup \{ (u,v) \in E : u \in g ~~\text{and}~~v \in g\}$ (update of the active graph);
   \ENDIF
\ENDLOOP
\STATE {\bf{Return:}} $\w^\star \leftarrow \w$, solution to Equation~(\ref{eq:prob}).
\end{algorithmic}
\end{algorithm}
\begin{proposition}[Correctness of Algorithm~\ref{algo:activeset}.]\label{prop:activeset}~\newline 
Algorithm~\ref{algo:activeset} solves Equation~(\ref{eq:prob}) when~$\Omega=\psip$.
\end{proposition}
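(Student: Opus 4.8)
The plan is to show that Algorithm~\ref{algo:activeset} terminates and that upon termination the returned vector~$\w$ satisfies the optimality conditions of Lemma~\ref{lemma:opt}, hence solves problem~(\ref{eq:prob}) with~$\Omega=\psip$. The key structural fact I would exploit is that the active-set method only ever adds vertices and arcs to~$\tilde{G}$ (it never removes them), so the sequence of active graphs is monotonically increasing and bounded by the finite graph~$G$; this will drive both termination and correctness. I would set up notation by letting~$\w^\star$ denote the vector returned on exit and noting that~$\w^\star$ is supported on~$\tilde V$ since every subproblem~(\ref{eq:subprob}) constrains~$\w_j=0$ for~$j \notin \tilde V$.

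First I would verify the \emph{second} optimality condition of Lemma~\ref{lemma:opt}, namely~$-\nabla L(\w^\star)^\top \w^\star = \lambda \psip(\w^\star)$. Here the main observation is that on the support of~$\w^\star$ (contained in~$\tilde V$) the restricted penalty~${\oldpsi}_{\tilde\GG_p}$ agrees with~$\psip$: any path realizing the optimal cover of~$\text{Supp}(\w^\star)$ in~$\GG_p$ can be taken inside~$\tilde G$, because whenever a path was added to grow the active graph, all its internal arcs were added too (line~9 of the algorithm), so~$\tilde\GG_p$ contains every path needed to cover a pattern living in~$\tilde V$. Since step~3 solves subproblem~(\ref{eq:subprob}) exactly, the vector~$\w^\star$ is optimal for the reduced problem, and the first-order optimality conditions for that reduced convex problem give precisely~$-\nabla L(\w^\star)^\top\w^\star = \lambda{\oldpsi}_{\tilde\GG_p}(\w^\star)=\lambda\psip(\w^\star)$. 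I would make this identification of the two penalties on~$\tilde V$ the central lemma of the argument.

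Next I would establish the \emph{first} optimality condition,~$\psip^*(\nabla L(\w^\star)) \leq \lambda$. This is immediate from the exit test: the loop is left only when step~4 computes~$\tau = \psip^*(\nabla L(\w^\star))$ (correct by Proposition~\ref{prop:algphi}) and finds~$\tau \leq \lambda$ at line~5. Combining the two conditions and invoking Lemma~\ref{lemma:opt} yields optimality of~$\w^\star$ for the full problem~(\ref{eq:prob}), which is the claim. The final piece is termination: I would argue that whenever the algorithm does \emph{not} exit, the shortest path~$g$ found in step~8 satisfies~$l_\tau(g) < 0$ with~$\tau > \lambda$, which forces at least one vertex of~$g$ to lie outside the current~$\tilde V$ (otherwise the optimality conditions for the reduced problem, already satisfied by the warm-started~$\w$, would imply~$\min_{g}l_\tau(g)\geq 0$, contradicting the violation); hence each non-terminating iteration strictly enlarges~$\tilde V$.

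The main obstacle I anticipate is the termination/progress step, and more precisely proving that each failed exit test genuinely adds a \emph{new} vertex to~$\tilde V$ rather than re-adding an already-present path. The crux is to show that the violating path~$g$ returned in step~8 cannot be entirely contained in~$\tilde G$: if it were, then since~$\w$ already satisfies the reduced optimality conditions one would have~$\psip[\tilde\GG_p]^*(\nabla L(\w)) \leq \lambda$, forcing~$l_\tau(g)\geq 0$ for all paths in~$\tilde\GG_p$ and contradicting the fact that step~8 found a path of negative length with~$\tau>\lambda$. Pinning down this contradiction cleanly—relating the dual-norm value over the full path set~$\GG_p$ to that over the restricted set~$\tilde\GG_p$ via the shortest-path characterization of~$\psip^*$ from Proposition~\ref{prop:psistar}—is where I would spend the most care, since it is what guarantees strict monotone growth of the finite active graph and therefore termination in at most~$|V|$ outer iterations.
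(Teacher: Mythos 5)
Your overall architecture (verify the two conditions of Lemma~\ref{lemma:opt} at exit, prove termination by strict growth of the active graph) matches the paper's, but the step you yourself designate as the ``central lemma'' is false, and this is a genuine gap. You claim that ${\oldpsi}_{\tilde{\GG}_p}$ agrees with $\psip$ on vectors supported in $\tilde{V}$ because ``$\tilde{\GG}_p$ contains every path needed to cover a pattern living in $\tilde{V}$.'' It does not. First, an optimal (fractional) cover of a pattern contained in $\tilde{V}$ may use paths of the \emph{full} graph that travel through vertices outside $\tilde{V}$: with the weights $\eta_g = \gamma + |g|$ of Equation~(\ref{eq:choice}), bridging two vertices $a,b \in \tilde{V}$ through an unselected intermediate vertex $c \notin \tilde{V}$ costs $\gamma+3$, whereas covering them by two separate paths inside $\tilde{G}$ costs $2\gamma+2$; for $\gamma>1$ the full-graph cover is strictly cheaper, so $\psip(\w^\star) < {\oldpsi}_{\tilde{\GG}_p}(\w^\star)$. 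Second, even a covering path whose vertices all lie in $\tilde{V}$ need not be a path of $\tilde{G}$: line~9 only adds arcs whose \emph{both} endpoints lie in the newly selected path, so arcs of $E$ joining vertices of two different previously added paths are absent from $\tilde{E}$. Consequently the reduced-problem optimality conditions only give you $-\nabla L(\w^\star)^\top\w^\star = \lambda\,{\oldpsi}_{\tilde{\GG}_p}(\w^\star)$, which is not yet the second condition of Lemma~\ref{lemma:opt}, and no structural argument about how $\tilde{G}$ was built will close the gap.

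The paper closes it with a sandwich argument that needs only the inequality $\psip \leq {\oldpsi}_{\tilde{\GG}_p}$ (immediate from $\tilde{\GG}_p \subseteq \GG_p$, since restricting the group set shrinks the feasible covers): writing $\kappab \defin \nabla L(\w^\star)$, reduced optimality gives $\lambda\,{\oldpsi}_{\tilde{\GG}_p}(\w^\star) = -\kappab^\top\w^\star$, the generalized H\"older inequality gives $-\kappab^\top\w^\star \leq \psip^*(\kappab)\,\psip(\w^\star)$, and the exit test gives $\psip^*(\kappab)\leq\lambda$; chaining these yields $\lambda\psip(\w^\star) \leq \lambda\,{\oldpsi}_{\tilde{\GG}_p}(\w^\star) = -\kappab^\top\w^\star \leq \lambda\psip(\w^\star)$, forcing equality throughout. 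So the agreement of the two penalties at $\w^\star$ is a \emph{consequence} of the stopping test, not a property of the active graph. Your termination argument also has a sign slip: by Proposition~\ref{prop:psistar}, at $\tau=\psip^*(\kappab)$ the shortest path has length exactly zero, not negative. The correct contrast is that every $g \in \tilde{\GG}_p$ satisfies $l_\lambda(g)\geq 0$, hence $l_\tau(g)>0$ once $\tau>\lambda$, while the path selected at line~8 has $l_\tau(g)=0$ and therefore cannot lie in $\tilde{\GG}_p$; note this only guarantees that a new vertex \emph{or} a new arc is added, so the iteration bound is $|V|+|E|$ rather than $|V|$.
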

The proof is presented in Appendix~\ref{appendix:proofs}. It mainly relies on Lemma~\ref{lemma:opt},
which requires computing the quantity~$\psip^*(\nabla L(\w))$.
More precisely, it can be shown that when~$\w$ is a solution to subproblem~(\ref{eq:subprob}) for a subgraph~$\tilde{G}$,
whenever $\psip^*(\nabla L(\w)) \leq \lambda$, it is also a solution to the original large problem~(\ref{eq:prob}).
Note that variants of Algorithm~\ref{algo:activeset} can be considered: one can
select more than a single path~$g$ to update the subgraph~$\tilde{G}$, or one can
approximately solve the subproblems~(\ref{eq:subprob}). In the latter case, 
the stopping criterion could be relaxed in practice. One could use the criterion $\tau \leq \lambda+\varepsilon$, where
$\varepsilon$ is a small positive constant, or one could use a duality gap to stop
the optimization when the solution is guaranteed to be optimal enough.

In the next section, we present various experiments, illustrating how
the different penalties and algorithms behave in practice.

\section{Experiments and Applications} \label{sec:exp}
We now present experiments on synthetic, genomic and image data.  
Our algorithms have been implemented in C++ with a Matlab interface, they
have been made available as part of the open-source software package SPAMS, originally
accompanying~\citet{mairal7}.\footnote{The source code is available here: \url{http://spams-devel.gforge.inria.fr/}.}
We have implemented the proximal gradient algorithm FISTA 
\citep{beck} for convex regularization functions and ISTA for non-convex
ones.  When available, we use a relative duality gap as a stopping criterion
and stop the optimization when the relative duality gap is smaller
than~$10^{-4}$.  In our experiments, we often need to solve
Equation~(\ref{eq:prob}) for several values of the parameter~$\lambda$,
typically chosen on a logarithmic grid.  We proceed with a continuation
strategy: first we solve the problem for the largest value of~$\lambda$, whose
solution can be shown to be~$0$ when~$\lambda$ is large enough; then we decrease the
value of~$\lambda$, and use the previously obtained solution as
initialization.  This warm-restart strategy allows us to quickly
follow a regularization path of the problem.  For non-convex problems, it
provides us with a good initialization for a given~$\lambda$. The algorithm
ISTA with the non-convex penalty~$\varphip$ is indeed only guaranteed to
iteratively decrease the value of the objective function. As often the case
with non-convex problems, the quality of the optimization is subject to a good
initialization, and this strategy has proven to be important to obtain good
results.

As far as the choice of the parameters is concerned, we have observed that all
penalties we have considered in our experiments are very sensitive to the
regularization parameter~$\lambda$. Thus, we use in general a fine grid to choose
$\lambda$ using cross-validation or a validation set. Some of the penalties
involve an extra parameter, $\gamma$ in the case of $\varphip$ and~$\psip$.
This parameter offers some flexibility, for example it promotes the connectivity
of the solution for $\varphip$ and~$\psip$, but it also requires to be tuned
correctly to prevent overfitting. In practice, we have found the choice of the
second parameter always less critical than~$\lambda$ to obtain a good
prediction performance, and thus we use a coarse grid to choose this parameter.
All other implementation details are provided in each experimental section.

\subsection{Synthetic Experiments}
In this first experiment, we study our penalties~$\varphip$ and~$\psip$ 
in a controlled setting. Since generating synthetic graphs 
reflecting similar properties as real-life networks is difficult, we have
considered three ``real'' graphs of different sizes, which are part of the
10$^\text{th}$ DIMACS graph partitioning and graph clustering
challenge:\footnote{The datasets are available here: \url{http://www.cc.gatech.edu/dimacs10/archive/clustering.shtml}.}
\begin{myitemize}
   \item the graph~\textsf{jazz} was compiled by~\citet{gleiser} and represents a community network of jazz musicians. It contains $p=198$ vertices and~$m=2\,742$ edges;
   \item the graph~\textsf{email} was compiled by~\citet{guimera} and represents e-mail exchanges in a university. It contains $p=1\,133$ vertices and~$m=5\,451$ edges;
   \item the graph~\textsf{PGP} was compiled by~\citet{boguna} and represents information interchange among users in a computer network. It contains~$p=10\,680$ vertices and~$m=24\,316$ edges.
\end{myitemize}
We choose an arbitrary topological ordering for all of these graphs, orient the
arcs according to this ordering, and obtain DAGs.\footnote{A topological
ordering~$\preceq$~of vertices in a directed graph is such that if there is an
arc from vertex~$u$ to vertex $v$, then~$u \prec v$. A directed graph is
acyclic is and only if it possesses a topological
ordering~\citep[see][]{ahuja}.} We generate structured sparse linear models
with measurements corrupted by noise according to different scenarii, and compare the ability of different
regularization functions to recover the noiseless model. More precisely, we
consider a design matrix~$\X$ in~$\Real^{n \times p}$ with less observations than predictors ($n \defin \lfloor
p/2 \rfloor$), and whose entries are i.i.d.  samples from a normal
distribution~${\mathcal N}(0,1)$.  For simplicity, we preprocess each column
of~$\X$ by removing its mean component and normalize it to have unit
$\ell_2$-norm.  Then, we generate sparse vectors~$\w_0$ with~$k$ non-zero
entries, according to different strategies described in the sequel.
We synthesize an observation vector~$\y=\X\w_0 + \varepsilonb$ in~$\Real^n$,
where the entries of~$\varepsilonb$ are i.i.d. draws from a normal
distribution~${\mathcal N(0,\sqrt{k / n} \sigma)}$, with different noise
levels:
\begin{myitemize}
   \item \textsf{high SNR}: we choose $\sigma=0.2$; this yields a signal noise ratio (SNR) $\|\X\w_0\|_2^2 / \|\varepsilonb\|_2^2$
   of about~$26$. We note that for~$\sigma \leq 0.1$ almost all penalties
   almost perfectly recover the true pattern;
   \item \textsf{medium SNR}: for $\sigma=0.4$, the SNR is about $6$;
   \item \textsf{low SNR}: for $\sigma=0.8$, the SNR is about $1.6$.
\end{myitemize}

Choosing a good criterion for comparing different penalties is difficult, and a
conclusion drawn from an experiment is usually only valid for a given
criterion.  For example, we present later an image denoising experiment in
 Section~\ref{subsec:image}, where non-convex penalties outperform convex ones
 according to one performance measure, while being the other way around for
 another one. 
In this experiment, we choose the relative mean square error
$\|\X\hatw-\X\w_0\|_2^2$ as a criterion, and use ordinary least square (OLS) to
refit the models learned using the penalties.  Whereas OLS does not change the
results obtained with the non-convex penalties we consider, it changes
significantly the ones obtained with the convex ones.  In practice, OLS
counterbalances the ``shrinkage'' effect of convex penalties, and
empirically improves the results quality for low noise regimes (high SNR), but
deteriorates it for high noise regimes ({low SNR}). 

For simplicity, we also assume (in this experiment only) that an oracle gives us the
optimal regularization parameter~$\lambda$, and therefore the conclusions we
draw from the experiment are only the existence or not of good solutions on the
regularization path for every penalty.  A more exhaustive comparison would
require testing different combinations (with OLS, without OLS) and
different criteria, and using internal cross-validation to select the
regularization parameters. This would require a much heavier computational
setting, which we have chosen not to implement in this experiment.  After
obtaining the matrix~$\X$, we propose several strategies to generate ``true''
models~$\w_0$:
\begin{myitemize}
   \item in the scenario~\textsf{flat} we randomly select~$k$ entries
   without exploiting the graph structure;
   \item the scenario~\textsf{graph}
   consists of randomly selecting~$5$ entries, and iteratively selecting new
   vertices that are connected in~$G$ to at least one previously selected
   vertex. This produces fairly connected sparsity patterns, but does not
   exploit arc directions;
   \item the scenario~\textsf{path} is similar to~\textsf{graph}, but we
   iteratively add new vertices following single paths in~$G$. It exploits 
   arc directions and produces sparsity patterns that can be covered by a small
   number of paths, which is the sort of patterns that our path-coding penalties
   encourage.
\end{myitemize}
The number of non-zero entries in~$\w_0$ is chosen to be $k \defin \lfloor 0.1p
\rfloor$ for the different graphs, resulting in a fairly sparse vector.
The values of the non-zero entries are randomly chosen in~$\{-1,+1\}$. 
We consider the formulation~(\ref{eq:prob}) where~$L$ is the square loss: $L(\w)=\frac{1}{2}\|\y-\X\w\|_2^2$ and~$\Omega$ is one of the following penalties:
\begin{myitemize}
   \item the classical~$\ell_0$- and~$\ell_1$-penalties;
   \item the penalty~$\psi$ of~\citet{jacob} where the groups~$\G$ are pairs of vertices linked by an~arc;
   \item our path-coding penalties~$\varphip$ or~$\psip$ with the weights~$\eta_g$ defined in~(\ref{eq:choice}).
   \item the penalty of~\citet{huang}, and their strategy to encourage sparsity pattern
   with a small number of connected components. We use their implementation of the greedy
   algorithm
   StructOMP\footnote{The source code is available here: \url{http://ranger.uta.edu/~huang/R_StructuredSparsity.htm}}.
   This algorithm uses a strategy dubbed ``block-coding'' to approximately deal
   with this penalty~\citep[see][]{huang}, and has an additional parameter, which we also denote by~$\mu$, to control
   the trade-off between sparsity and connectivity.
\end{myitemize}
For every method except StructOMP, the
regularization parameter~$\lambda$ is chosen among the values~$2^{i/4}$,
where~$i$ is an integer. We always start from a large value
for~$i$, and decrease its value by one, following the regularization
path. For the penalties~$\varphip$ and~$\psip$, the parameter~$\gamma$ is
simply chosen in~$\{1/4,1/2,1,2,4\}$.
Since the algorithm StructOMP is greedy and iteratively increases the model
complexity, we record every solution obtained on the regularization path during
one pass of the algorithm. Based on information-theoretic arguments,~\citet{huang}
propose a default value for their parameter~$\mu=1$. Since changing this parameter value empirically 
improves the results quality, we try the values $\{1/4,1/2,1,2,4\}$ for
a fair comparison with our penalties~$\varphip$ and~$\psip$.

We report the results for the three graphs, three scenarii for
generating~$\w_0$, three noise levels and the five penalties in
Figure~\ref{figure:synthetic}. We report on these graphs the ratio between the
prediction mean square error and the best achievable error if the sparsity
pattern was given by an oracle. In other words, denoting
by~$\hatw^{\text{oracle}}$ the OLS estimate if an oracle gives us the sparsity
pattern, we report the value $\|\X\hatw-\X\w_0\|_2^2 /
\|\X\hatw^{\text{oracle}}-\X\w_0\|_2^2$. The best achievable value for this
criterion is therefore~$1$, which is represented by a dotted line on all
graphs. We reproduce the experiment~$20$ times, randomizing every step,
including the way the vector~$\w_0$ is generated to obtain error bars representing
one standard deviation.

We make pairwise comparisons and 
statistically assess our conclusions using error bars
or, when needed, paired one-sided T-tests with a~$1\%$ significance level. The comparisons are the following:
\begin{itemize}
   \item {\bfseries convex vs non-convex ($\ell_0$ vs $\ell_1$ and~$\varphip$ vs $\psip$)}: \emph{For high SNR, non-convex penalties do significantly better than convex ones, whereas it is the other way around for low SNR.} The differences are highly significant for the graphs~\textsf{email} and~\textsf{PGP}. For medium SNR, conclusions are mixed, either the difference between a convex penalty and its non-convex counterpart are not significant or one is better than another. 
   \item {\bfseries unstructured vs path-coding ($\ell_0$ vs $\varphip$ and $\ell_1$ vs $\psip$)}: 
   \emph{In the structured scenarii \textsf{graph} and~\textsf{path}, the structured penalties $\varphip$ and~$\psip$ respectively do better than~$\ell_0$ and~$\ell_1$.
   In the unstructured~\textsf{flat} scenario,~$\ell_0$ and~$\ell_1$ should be preferred.}
   More precisely, for  the scenarii \textsf{graph} and~\textsf{path},
   $\varphip$ and~$\psip$ respectively outperform $\ell_0$ and~$\ell_1$ with
   statistically significant differences, except (i) for high SNR,
   both~$\varphip$ and~$\ell_0$ achieve perfect recovery; (ii) with the
   smallest graph \textsf{jazz}, the $p$-values obtained to compare
   $\psip$ vs $\ell_1$ are slightly above our~$1\%$ significance level.  In
   the~\textsf{flat} scenario, $\ell_0$ and~$\varphip$ give similar
   results, whereas~$\psip$ performs slightly worse than~$\ell_1$ in general
   even though they are very close.
   \item {\bfseries \citet{jacob} vs path-coding ($\psi$ with pairs vs $\psip$)}: \emph{in the scenario~\textsf{path}, $\psip$ outperforms $\psi$ (pairs). It is generally also the case in the scenario~\textsf{graph}}. The differences are always significant in the low SNR regime and with the largest graph~\textsf{PGP}.
   \item {\bfseries \citet{huang} vs path-coding (StructOMP vs $\varphip,\psip$)}: \emph{For the scenario~\textsf{path}, either $\varphip$ (for high SNR) or~$\psip$ (for low SNR) outperform StructOMP}. 
   \emph{For the scenario~\textsf{graph}, the best results are shared between StructOMP and our penalties for high and medium SNR, and our penalties do better for low SNR}.
   More precisely in the scenario~\textsf{graph}: (i) there is no significant difference for high SNR
   between~$\varphip$ and StructOMP; (ii) for medium SNR, StructOMP does slightly
   better with the graph~\textsf{PGP} and similarly as~$\varphi$ for the two
   other graphs; (iii) for low SNR, our penalties do better than StructOMP with the
   two largest graphs~\textsf{email} and~\textsf{PGP} and similarly with the
   smallest graph~\textsf{jazz}.
\end{itemize}
To conclude this experiment, we have shown that our penalties offer a
competitive alternative to StructOMP and the penalty of~\citet{jacob},
especially when the ``true'' sparsity pattern is exactly a union of a few paths
in the graph. We have also identified different noise and size regimes, where a
penalty should be preferred to another.  Our experiment also
shows that having both a non-convex and convex variant of a penalty
can be interesting. In low SNR, convex penalties are indeed better
behaved than non-convex ones, whereas it is the other way around when the SNR
is high.
\begin{figure}[h]
\centering
\includegraphics[width=0.325\linewidth]{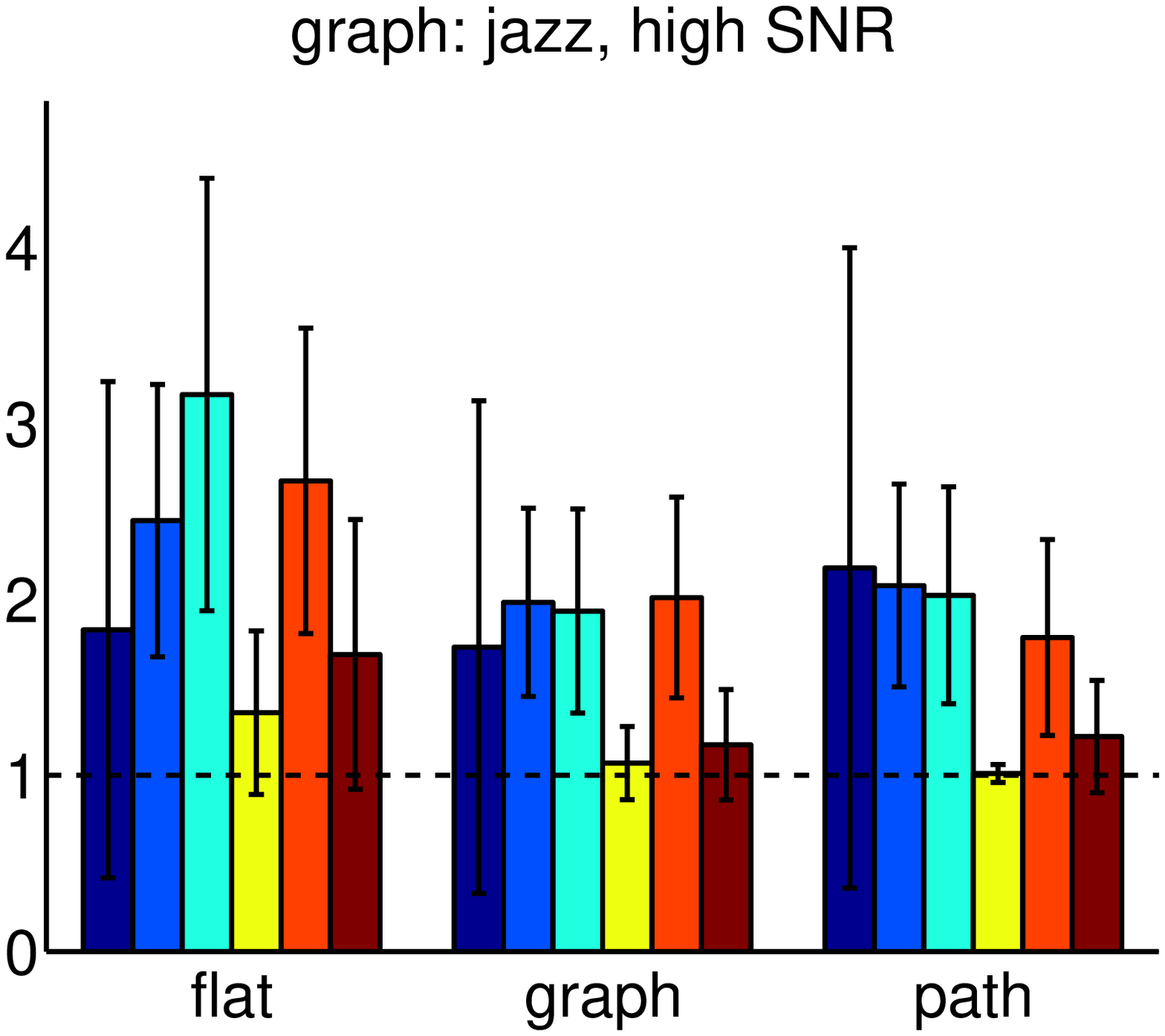} \hfill
\includegraphics[width=0.325\linewidth]{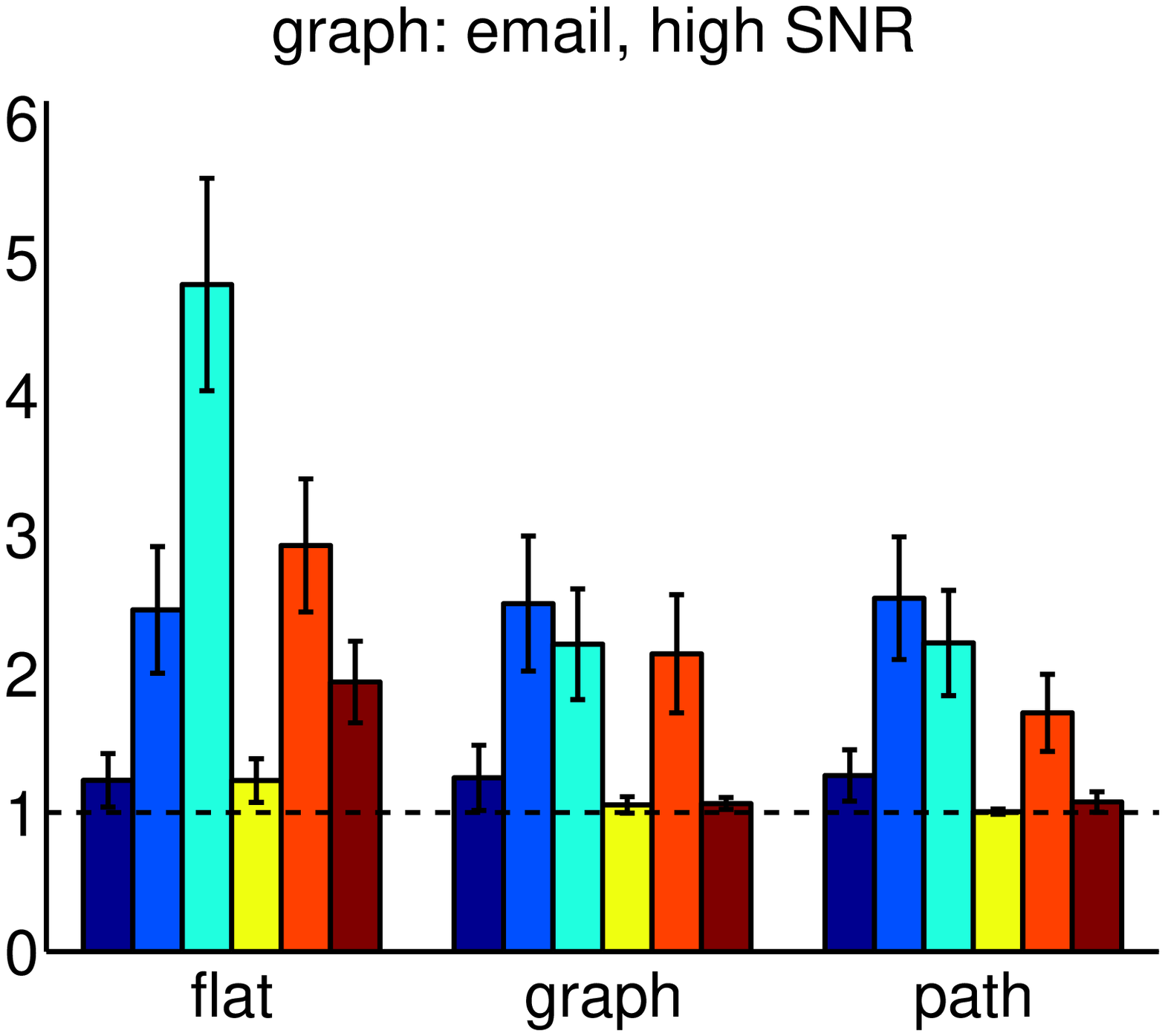} \hfill
\includegraphics[width=0.325\linewidth]{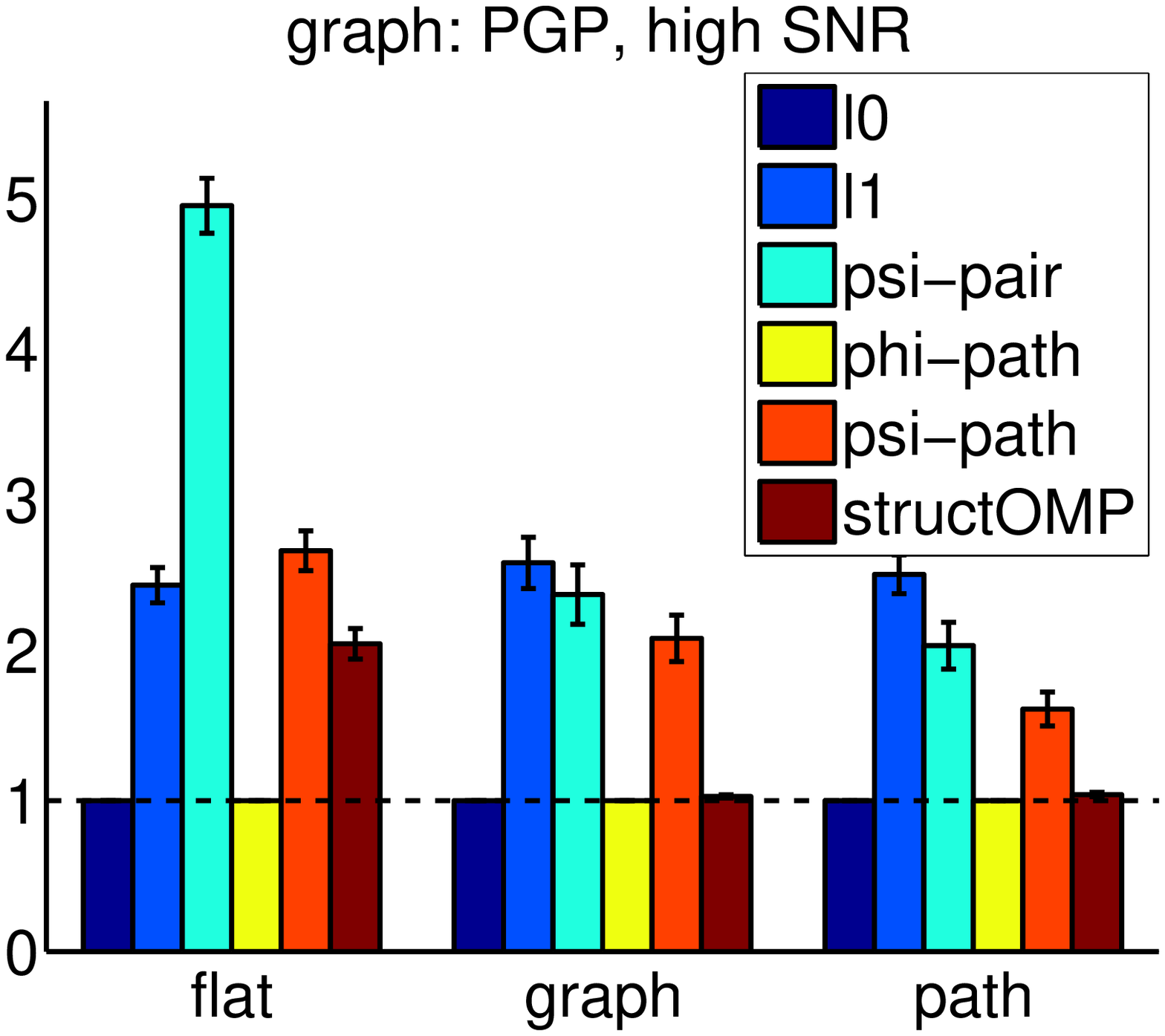} \\
\includegraphics[width=0.325\linewidth]{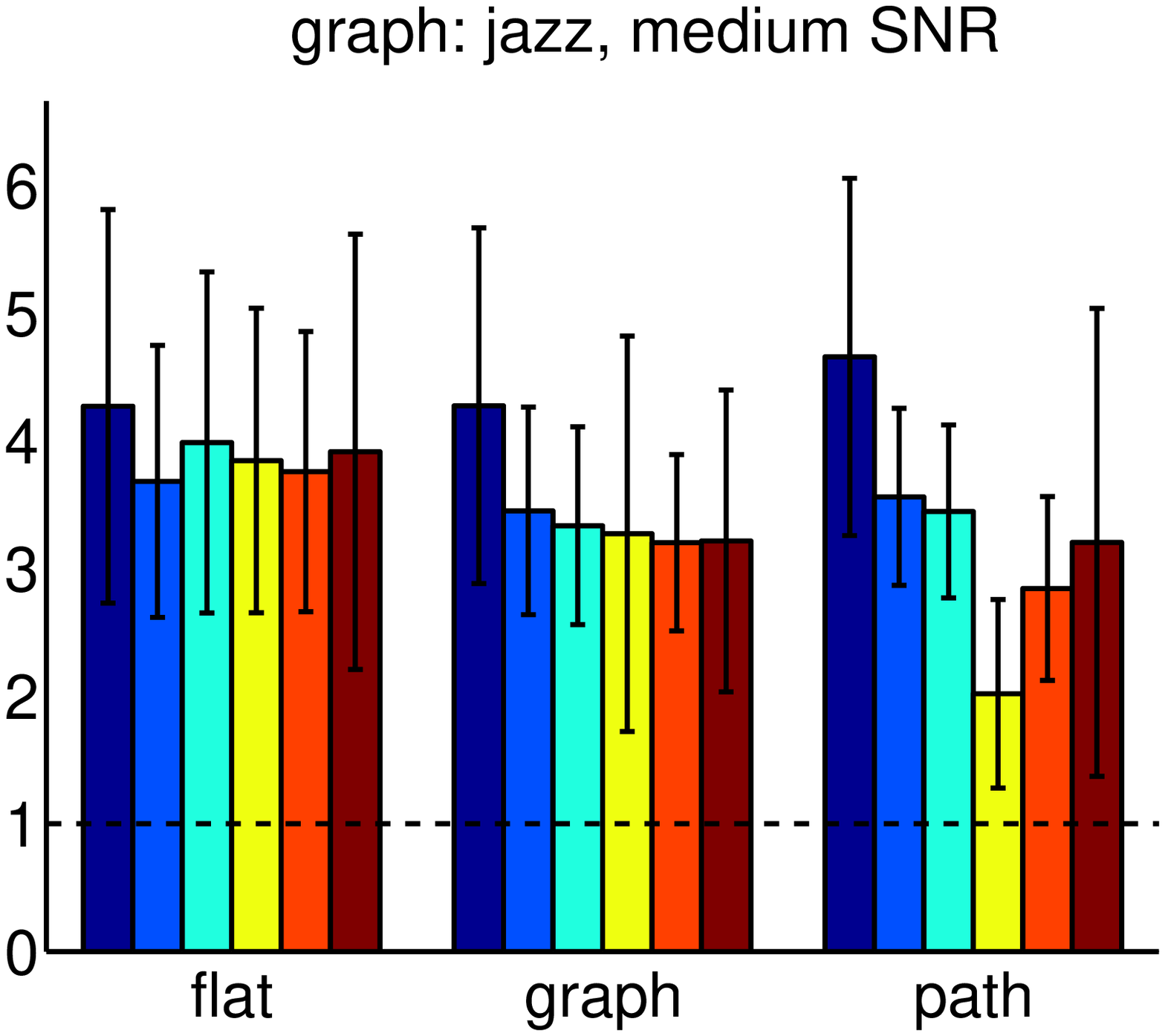} \hfill
\includegraphics[width=0.325\linewidth]{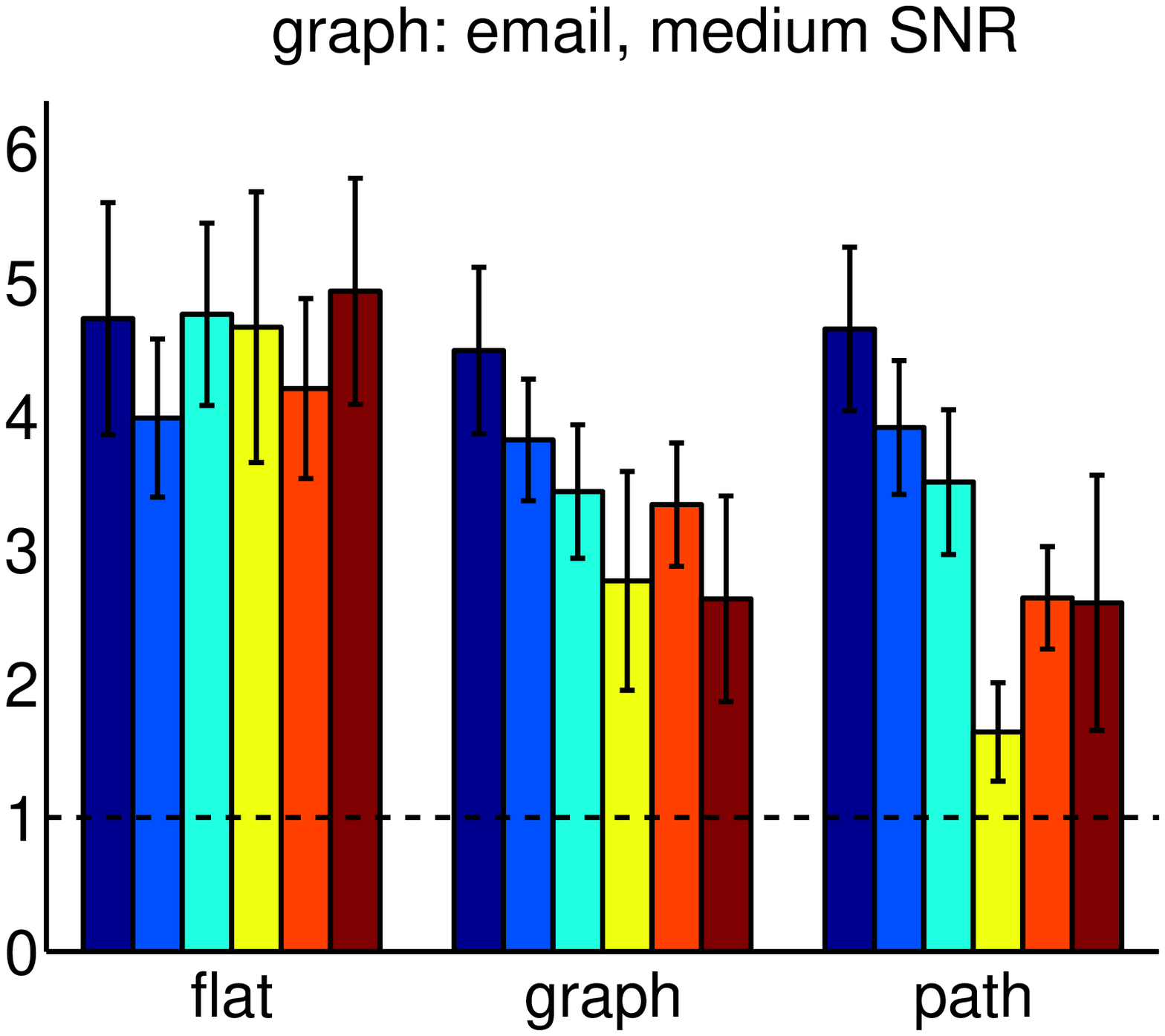} \hfill
\includegraphics[width=0.32\linewidth]{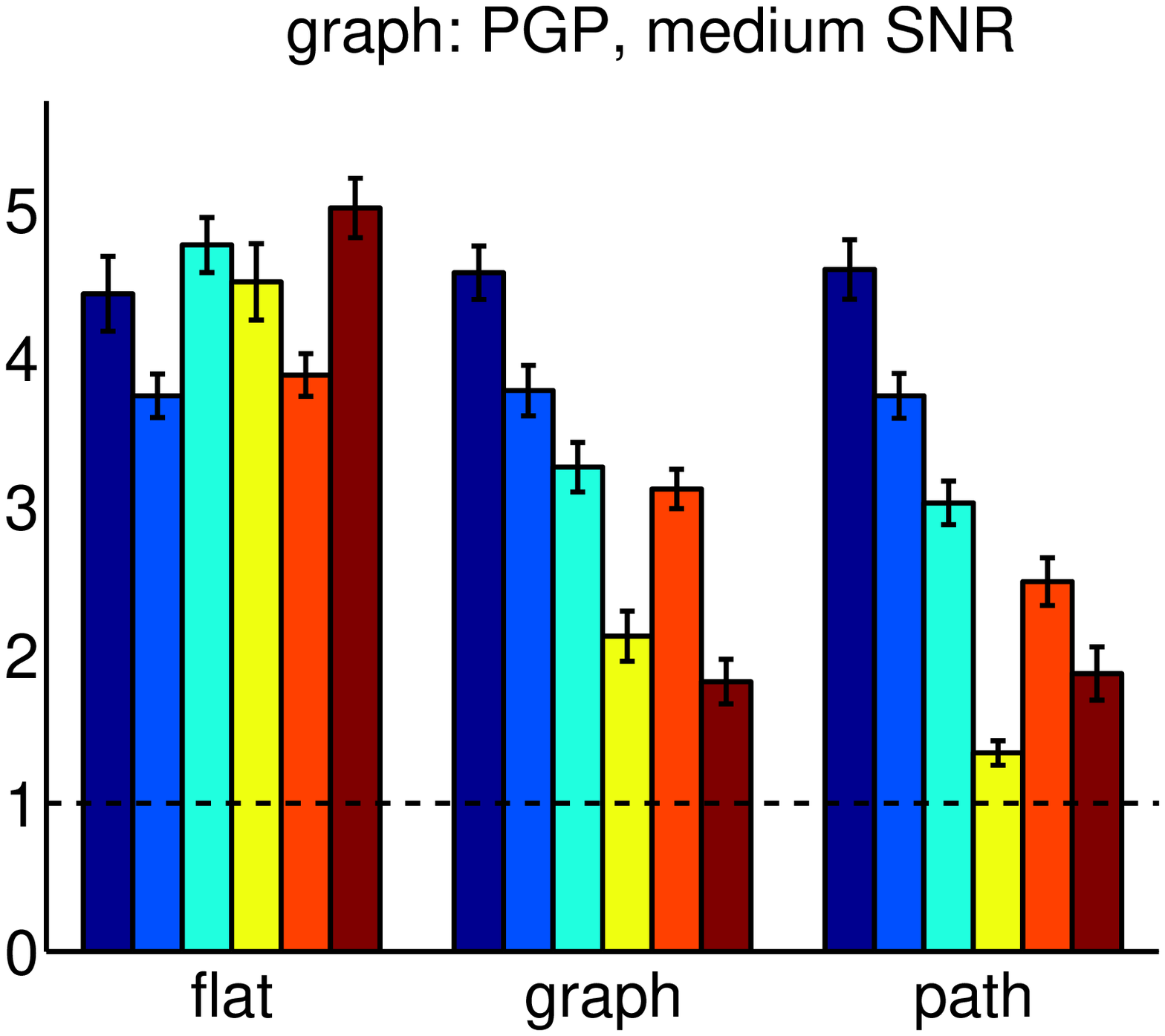} \\
\includegraphics[width=0.325\linewidth]{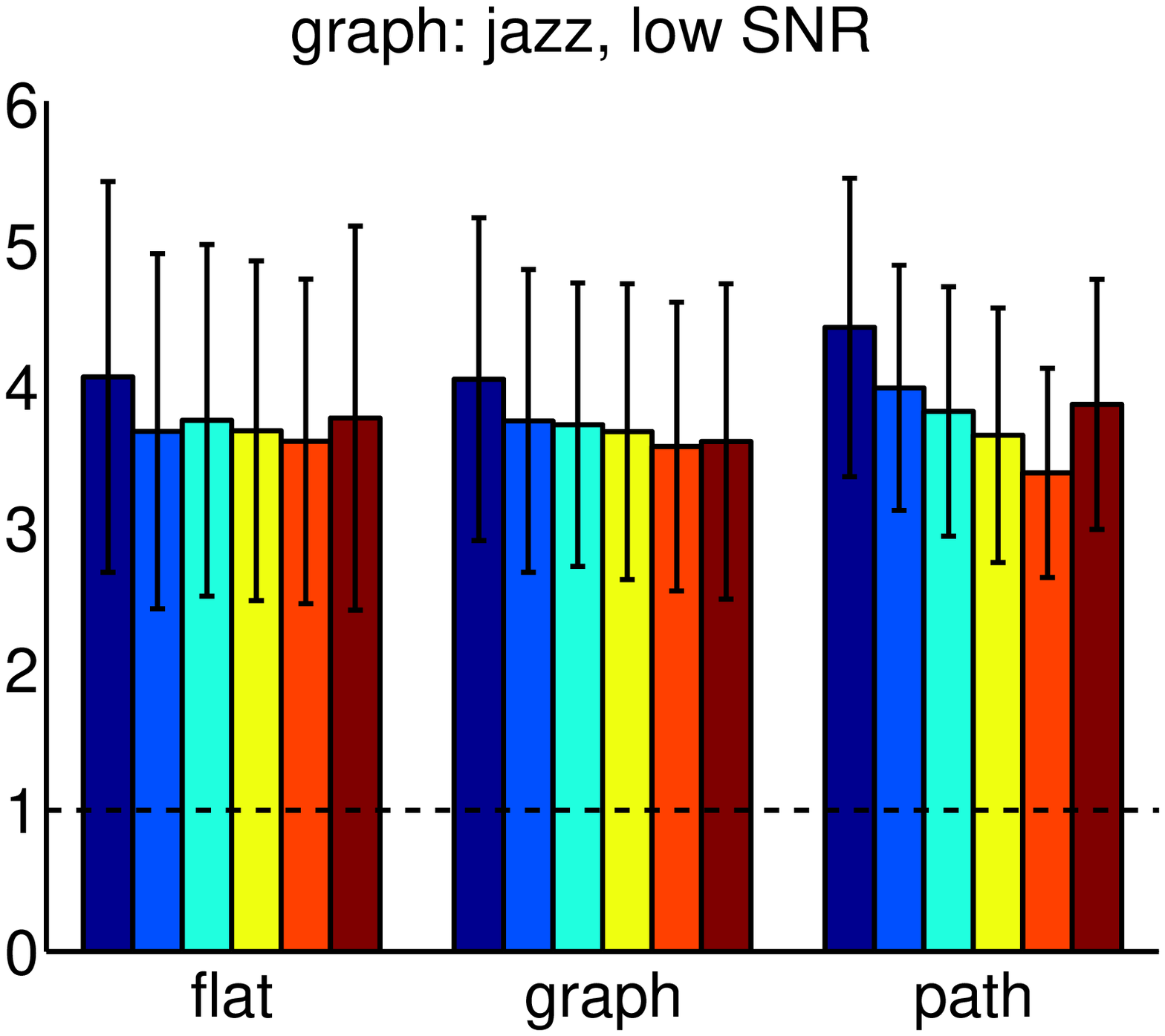} \hfill
\includegraphics[width=0.325\linewidth]{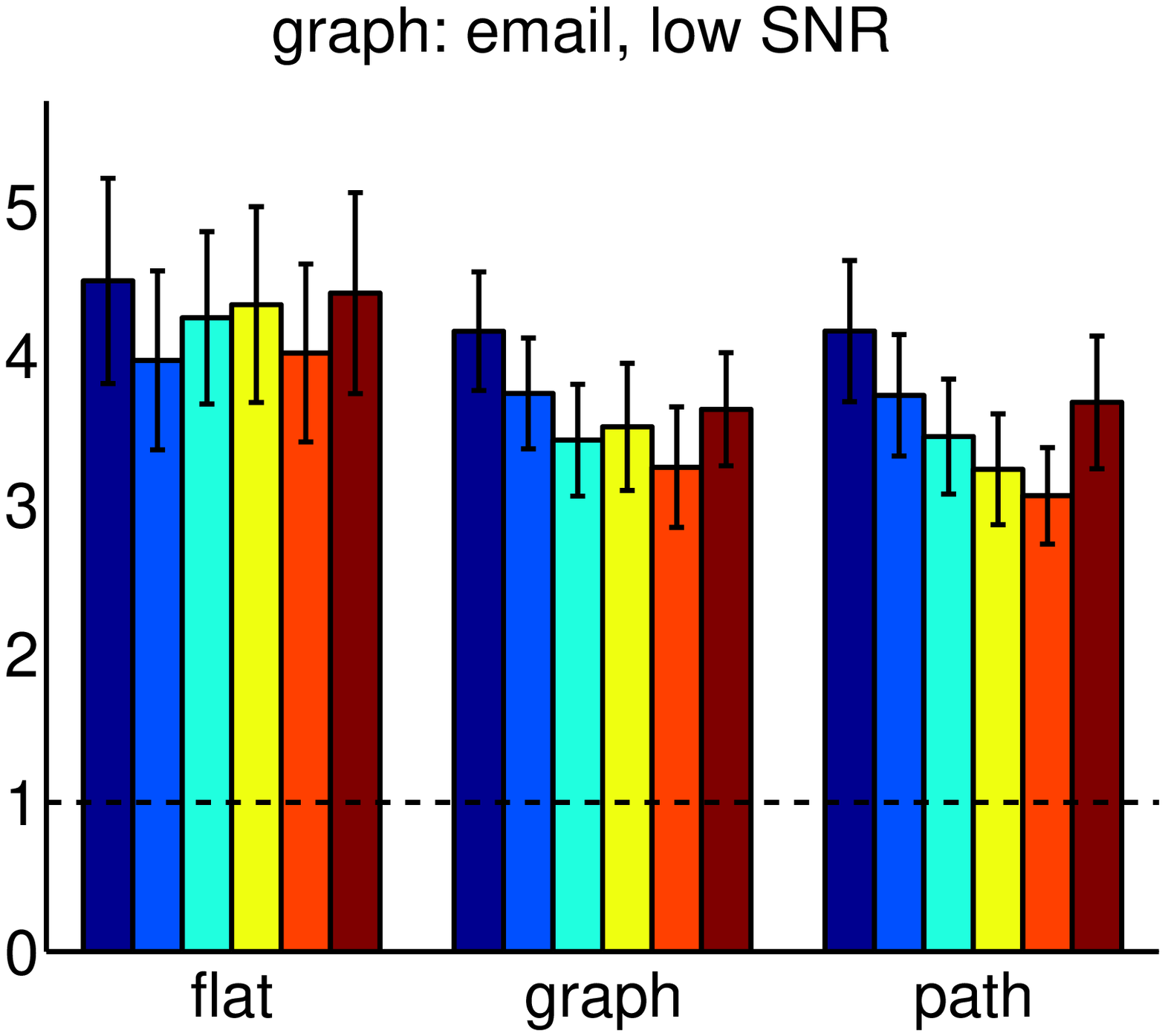} \hfill
\includegraphics[width=0.325\linewidth]{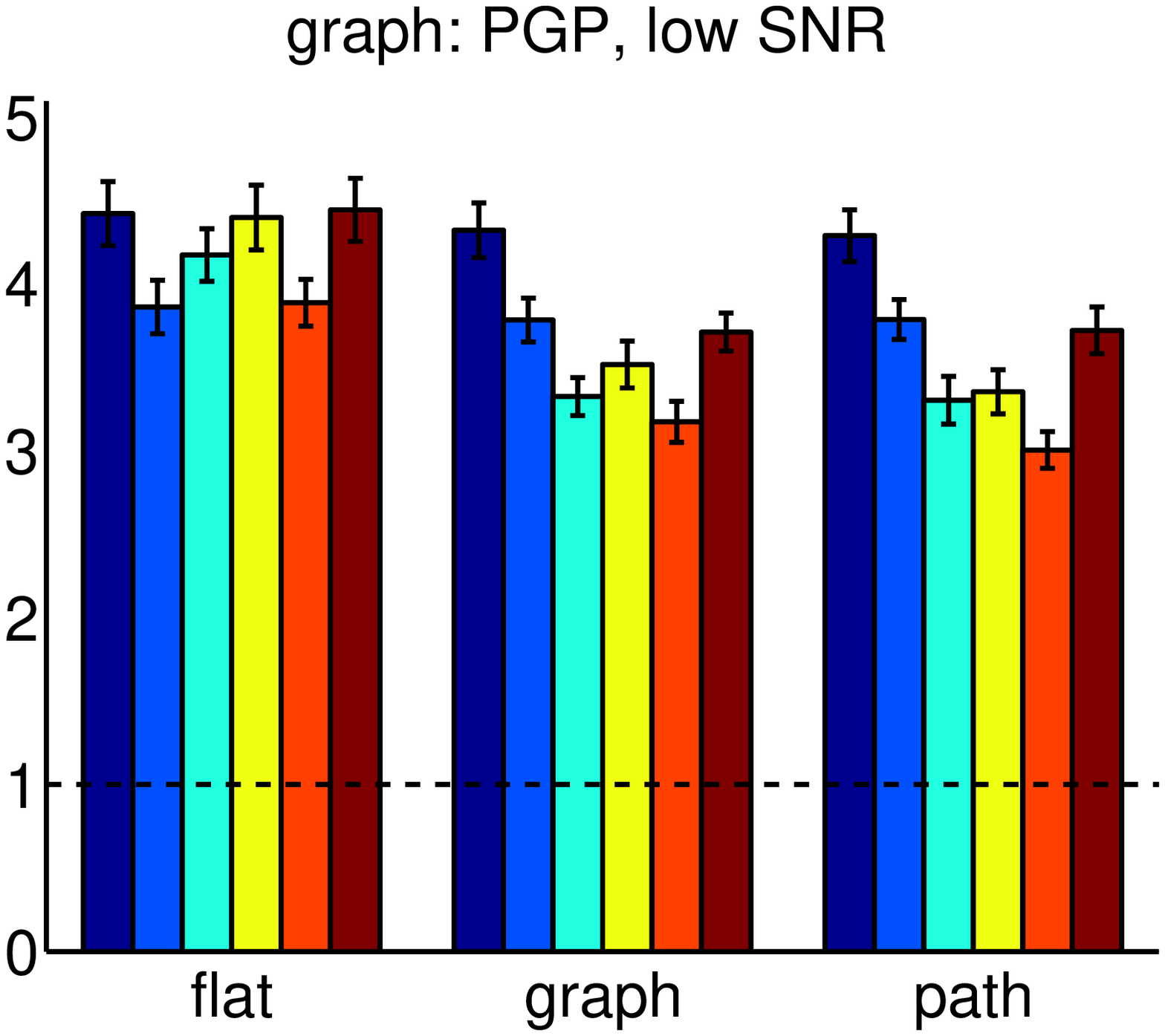} 
\caption{Every bar represents the ratio between the mean square error estimate and the oracle mean square error estimate (see main text for an explicit formula and the full experimental setting). The error bars represent one standard deviation. Each row corresponds to a specific noise level, and every column to a different graph. For a specific noise level and specific graph, the results for three scenarii, \textsf{flat}, \textsf{graph} and~\textsf{path} are reported. Each group of six bars represents the results obtained with six penalties, from left to right: $\ell_0$, $\ell_1$, $\psi$ (with~$\G$ being the pairs of vertices linked by an arc), $\varphip$, $\psip$, and the method StructOMP. A legend is presented in the top right figure. } \label{figure:synthetic}
\end{figure}

\subsection{Image Denoising}\label{subsec:image}
State-of-the-art image restoration techniques are often exploiting a good model
for small image patches~\citep{elad,dabov2,mairal8}.
We consider here the task of denoising natural images corrupted by white
Gaussian noise, following an approach introduced by~\citet{elad}. It consists of
the following steps: 
\begin{enumerate}
\item extract all overlapping patches~$(\y^i)_{i=1,\ldots,m}$ from a
noisy image;
\item compute a sparse approximation of every individual patch $\y^i$:\label{step2}
\begin{equation}
   \min_{\w^i \in \Real^p} \Big[\frac{1}{2}\|\y^i - \X\w^i\|_2^2 + \lambda \Omega(\w^i)\Big],\label{eq:denoisepatch}
\end{equation}
where the matrix~$\X=[\x^1,\ldots,\x^p]$ in~$\Real^{n \times p}$ is a
predefined ``dictionary'', $\lambda \Omega$ is a sparsity-inducing regularization and the
term~$\X\w^i$ is the clean estimate of the noisy patch~$\y^i$;
\item since the patches overlap, each pixel admits several estimates. The last
step consists of averaging the estimates of each pixel to reconstruct the full
image. \label{step3}
\end{enumerate}

Whereas~\citet{elad} learn an overcomplete basis set to obtain a ``good''
matrix~$\X$ in the step \ref{step2} above, we choose a simpler approach
and use a classical orthogonal discrete cosine transform (DCT) dictionary~$\X$~\citep{ahmed}.
We present such a dictionary in
Figure~\ref{fig:dct} for~$8 \times 8$ image patches.
As shown in the figure, 
DCT elements can be organized on a two-dimensional grid and ordered by horizontal
and vertical frequencies. We use the DAG structure 
connecting neighbors on the grid, going from low to high frequencies.
In this experiment, we
address the following questions:
\begin{itemize}
\item[{\bfseries (A)}] \emph{In terms of optimization, is our approach
efficient for this experiment?} Because the number of problems to solve is large (several millions),
the task is difficult.
\item[{\bfseries (B)}] \emph{Do we get better results by using the graph
   structure than with classical~$\ell_0$- and~$\ell_1$-penalties?}  
\item[{\bfseries (C)}] \emph{How does the method compare with the state of the art?}  
\end{itemize}
\begin{figure}[btp]
\centering
\begin{tikzpicture}[scale=0.7]
\begin{scope}
    \foreach \i in { 0, 1, 2, 3, 4, 5, 6 } {
       \foreach \j in { 1, 2, 3, 4, 5, 6, 7 } {
          \draw[arrow] (\i,\j) -- (\i,\j-0.7);
          \draw[arrow] (\i,\j) -- (\i+0.7,\j);
       }
    }
    \foreach \j in { 1, 2, 3, 4, 5, 6, 7 } {
       \draw[arrow] (7,\j) -- (7,\j-0.7);
    }
    \foreach \i in { 0, 1, 2, 3, 4, 5, 6 } {
       \draw[arrow] (\i,0) -- (\i+0.7,0);
    }
    \newcounter{test}; 
    \setcounter{test}{0};
    \foreach \j  in { 7, 6, 5, 4, 3, 2, 1, 0 } {
       \foreach \i in { 0, 1, 2, 3, 4, 5, 6, 7 } {
          \addtocounter{test}{1};
          \node [thick]  (toto) at (\i,\j) {\includegraphics[width=0.5cm]{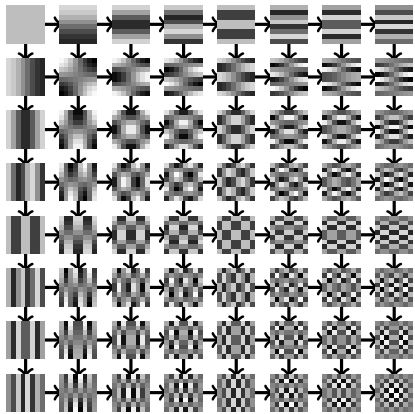}};
       }
    }
\end{scope}
\end{tikzpicture}
\caption{Orthogonal DCT dictionary with $n=8 \times 8$ image patches. The dictionary elements are organized by horizontal and vertical frequencies.} \label{fig:dct}
\end{figure}
Note that since the dictionary~$\X$ in~$\Real^{n \times p}$ is orthogonal, the non-convex problems we
address here are solved exactly. Indeed, 
it can be
shown that Equation~(\ref{eq:denoisepatch}) is equivalent~to
\begin{displaymath}
   \min_{\w^i \in \Real^p} \Big[\frac{1}{2}\|\X^\top\y^i - \w^i\|_2^2 + \lambda \Omega(\w^i)\Big],
\end{displaymath}
and therefore the solution admits a closed form~$\w^{i\star}\!\defin\!\text{Prox}_{\lambda\Omega}(\X^\top \y^i)$. For $\ell_0$ and $\ell_1$, the
 solution is respectively obtained by hard and soft-thresholding,
and we have introduced some tools in Section~\ref{sec:approach} to compute
the proximal operators of~$\varphip$ and~$\psip$.
We consider~$e \times e$ image patches, with~$e \in
\{6,8,10,12,14,16\}$, and a parameter~$\lambda$ on a logarithmic scale with
step~$2^{1/8}$.  We also exploit the variant of our penalties presented in
Section~\ref{sec:approach} that allows us to choose the costs on the arcs of the 
graph~$G'$. We choose here a small cost on the arc~$(s,1)$ of
the graph~$G'$, and a large one for every arc~$(s,j)$, for~$j$
in~$\{2,\ldots,p\}$, such that all paths selected by our approach are
encouraged to start by the variable~$1$ (equivalently the dictionary
element~$\x^1$ with the lowest frequencies). 
We use a dataset of~$12$ classical high-quality images (uncompressed and free
of artifact). We optimize the parameters~$\lambda$ and $e$ on the first~$3$
images, keeping the~$9$ last images as a test set and report denoising results
on Table~\ref{table:denoise}. Even though this dataset is relatively small,
it is standard in the image processing literature, making the
comparison easy with other approaches.\footnote{This dataset can be 
found for example in~\citet{mairal8}.}

We start by answering question~{\bfseries (A)}: we have observed that the time of
computation depends on several factors, including the problem size
and the sparsity of the solution (the sparser, the faster).
In the setting~$\sigma\!=\!10$ and~$e\!=\!8$, we are able to 
denoise approximately~$4\,000$ patches per second using~$\varphip$,
and~$1\,800$ in the setting~$\sigma\!=\!50$ and~$e\!=\!14$ with our
laptop 1.2Ghz CPU (core i3 330UM). The penalty~$\psip$ requires solving
quadratic minimum cost flow problems, and is slower to use in practice:
the numbers $4\,000$ and~$1\,800$ above respectively become~$70$ and~$130$. Our
approach with~$\varphip$ proves therefore to be fairly efficient for our
task, allowing us to process an image with about $250\,000$ patches in between
one and three minutes. As expected, simple penalties are faster to use: about $65\,000$ patches
per second can be processed using $\ell_0$.

Moving to question~{\bfseries (B)}, the best performance among the penalties
$\ell_0$, $\ell_1$, $\varphip$ and~$\psip$ is obtained by~$\varphip$.
This difference is statistically significant: we measure for instance an
average improvement of $0.38\pm0.21$ dB over~$\ell_0$ for~$\sigma \geq 20$.
For this denoising task, it is indeed typical to have non-convex penalties
outperforming convex ones~\citep[see][Section 1.6.5, for a benchmark
between~$\ell_0$ and~$\ell_1$-penalties]{mairal_thesis}, and this is why the
original method of~\citet{elad} uses the~$\ell_0$-penalty.  Interestingly, this
superiority of non-convex penalties in this denoising scheme based on
overlapping patches is usually only observed after the averaging
step~\ref{step3}. When one evaluates the quality of the denoising of individual
patches without averaging---that is, after step~\ref{step2}, opposite
conclusions are usually drawn~\citep[see again][Section 1.6.5]{mairal_thesis}. We
therefore report mean-square error results for individual patches without
averaging in Table~\ref{table:denoisepatches} when~$e=10$. As expected, the
penalty~$\psip$ turns out to be the best at this stage of the algorithm.
Note that the bad results obtained by convex penalties 
after the averaging step are possibly due to the shrinkage effect of these
penalties. It seems that the shrinkage is helpful for denoising individual patches, but hurts after the averaging process.

We also present the performance of state-of-the-art image denoising approaches
in Table~\ref{table:denoise} to address question~{\bfseries (C)}. We have
chosen to include in the comparison several methods that have successively been
considered as the state of the art in the past: the Gaussian Scale Mixture (GSM) method
of~\citet{portilla}, the K-SVD algorithm of~\citet{elad}, the BM3D method
of~\citet{dabov2} and the sparse coding approach of~\citet{mairal8} (LSSC).
We of course do not claim to do better than the most recent approaches
of~\citet{dabov2} or \citet{mairal8}, which in addition to sparsity exploit non-local self
similarities in images~\citep{buades}. Nevertheless, given  the fact that we use a simple orthogonal DCT
dictionary, unlike~\citet{elad} who learn overcomplete dictionaries adapted to the image, our approach based
on the penalty~$\varphip$ performs relatively well. We indeed obtain similar
results as~\citet{elad} and \citet{portilla}, and show that structured parsimony could be
a promising tool in image processing.
\begin{table}[hbtp]
\centering
\begin{tabular}{|c|c|c|c|c|c|c|c|}
\hline
 $\sigma$  & 5 & 10 & 15 & 20 & 25 & 50 & 100 \\
\hline
\hline
\multicolumn{8}{|c|}{Our approach} \\
\hline
$\ell_0$   & \textbf{37.04} & 33.15          & 31.03          & 29.59          & 28.48          & 25.26          & 22.44  \\
\hline
$\ell_1$   & 36.42          & 32.28          & 30.06          & 28.59          & 27.51          & 24.48          & 21.96  \\
\hline
$\varphip$  & 37.01          & \textbf{33.22} & \textbf{31.21} & \textbf{29.82} & \textbf{28.77} & \textbf{25.73} & \textbf{22.97} \\
\hline
$\psip$     & 36.32          & 32.17          & 29.99          & 28.54          & 27.49          & 24.54          & 22.12  \\
\hline
\hline
\multicolumn{8}{|c|}{State-of-the-art approaches} \\
\hline
\citealp{portilla} (GSM) &   36.96 &  33.19  & 31.17 & 29.78 & 28.74  & 25.67 & 22.96  \\
\hline
\citealp{elad} (K-SVD) &   37.11  & 33.28   & 31.22 &  29.81  &  28.72  &  25.29 &  22.02   \\
\hline
\citealp{dabov2} (BM3D) &  37.24  &   33.60 & 31.68  &  \textbf{30.36}  &  \textbf{29.36}  &  26.11 &  23.11   \\
\hline
\citealp{mairal8} (LSSC) & \textbf{37.29}   & \textbf{33.64}  &  \textbf{31.70} &  \textbf{30.36}  &   29.33    &  \textbf{26.20}     & \textbf{23.20}  \\
\hline
\end{tabular}
\caption[Denoising results]{Denoising results for~$9$ test images. The numbers represent the
   average PSNR in dB (higher is better). Denoting by \textrm{MSE} the
    mean-squared-error for images whose intensities are between $0$ and
    $255$, the \textrm{PSNR} is defined as $\textrm{PSNR}=10\log_{10}( 255^2 /
 \textrm{MSE} )$. 
      Pixel values are scaled between $0$ and~$255$ and $\sigma$ (the standard deviation of the noise) is between~$5$ and~$100$. The top part of the table presents the results of the denoising scheme obtained with different penalties. The bottom part presents the results obtained with various state-of-the-art denoising methods (see main text for more details). Best results are in bold for both parts of the table.} \label{table:denoise}
\end{table}
\begin{table}[t]
\centering
\begin{tabular}{|c|c|c|c|c|c|c|c|}
\hline
 $\sigma$  & 5 & 10 & 15 & 20 & 25 & 50 & 100 \\
\hline
\hline
$\ell_0$ & 3.60  & 10.00 & 16.65  & 23.22 & 29.58 & 57.97 & 95.79 \\
\hline
$\ell_1$ & 2.68  & 7.65 & 13.42   & 19.22 & 25.23 & 52.38 & 87.90 \\
\hline
$\varphip$ & 3.26  & 8.36 & 13.62 & 18.83 & 23.99 & 47.66 & 84.74 \\
\hline
$\psip$ & \textbf{2.66} & \textbf{7.27} & \textbf{12.29} & \textbf{17.35} & \textbf{22.65} & \textbf{45.04} & {\bfseries 76.85} \\
\hline
\end{tabular}
\caption{Denoising results for individual~$10 \times 10$ image patches on the~$9$ test images. The numbers represent the mean-squared error for the image patches (lower the better). Best results are in bold.} \label{table:denoisepatches}
\end{table}

\subsection{Breast Cancer Data}\label{subsec:cancer}
One of our goal was to develop algorithmic tools improving the
approach of~\citet{jacob}. It is therefore natural to try one of the dataset
they used to obtain an empirical comparison.  On the one hand, we have
developed tools to enrich the group structure that the penalty~$\psi$
could handle, and thus we expect better results. On the other hand, the 
graph in this experiment is undirected and we need to use heuristics to transform it
into a DAG.

We use in this task the breast cancer dataset of~\citet{vijver}. It
consists of gene expression data from $8\,141$ genes in $n\!=\!295$ breast
cancer tumors and the goal is to classify metastatic samples versus
non-metastatic ones.  Following~\citet{jacob}, we use the gene network compiled
by~\citet{chuang}, obtained by concatenating several known biological networks. 
As argued by~\citet{jacob}, taking into account the graph structure into the
regularization has two objectives: (i) possibly improving the
prediction performance by using a better prior; (ii) identifying connected
subgraphs of genes that might be involved in the metastatic form of the
disease, leading to results that yield better interpretation than the selection of isolated genes.
Even though prediction is our ultimate goal in this task, interpretation is
equally important since it is necessary in practice to design drug targets.
In their paper,~\citet{jacob} have succeeded in the sense that their penalty
is able to extract more connected patterns than the~$\ell_1$-regularization, even though
they could not statistically assess significant improvements in terms of
prediction.  Following~\citet{jacob}, we also assume that connectivity of the
solution is an asset for interpretation.  The questions we address here are
the following:
\begin{itemize}
\item[\textbf{(A)}] \emph{Despite the heuristics described below to transform the graph into a DAG, does our approach lead to well-connected solutions in the original (undirected) graph? Do our penalties lead to better-connected solutions than other penalties?}.
\item[\textbf{(B)}] \emph{Do our penalties lead to better classification performance than~\citet{jacob} and other classical unstructured and structured regularization functions? Is the graph structure useful to improve the prediction? Does sparsity help prediction?}
\item[\textbf{(C)}] \emph{How efficient is our approach for this task?} The problem here is of medium/large scale but should be solved a large number of times (several thousands of times) because of the internal cross-validation procedure.
\end{itemize}
The graph of genes, which we denote by~$G_0$, contains~$42\,587$ edges, and,
like~\citet{jacob}, we keep the~$p\!=\!7\,910$~genes that are present in~$G_0$.
In order to obtain interpretable results and select connected components
of~$G_0$, \citet{jacob} use their structured sparsity penalty~$\psi$ where the
groups~$\G$ are all pairs of genes linked by an arc. 
Our approach requires a DAG, but we will show in the sequel that we
nevertheless obtain good results after heuristically transforming~$G_0$ into a
DAG.  To do so, we first treat~$G_0$ as directed by choosing random
directions on the arcs, and second we remove some arcs along cycles in the graph.
It results in a DAG containing~$33\,303$ arcs, which we denote by~$G$.  This
pre-processing step is of course questionable since our penalties are originally not
designed to deal with the graph~$G_0$.  We of course do not claim to be able to
individually interpret each path selected by our method, but, as we show, it
does not prevent our penalties~$\varphip$ and~$\psip$ to achieve their ultimate
goal---that is connectivity in the original graph~$G_0$.

We consider the formulation~(\ref{eq:prob}) where~$L$ is a weighted 
logistic regression loss:
\begin{displaymath}
   L(\w) \defin \sum_{i=1}^n \frac{1}{n_{y_i}}\log(1+ e^{-y_i \w^\top \x_i}),
\end{displaymath}
where the~$y_i$'s are labels in~$\{-1,+1\}$, the~$\x_i$'s are gene expression vectors
in~$\Real^p$.  The weight~$n_1$ is the number of positive samples,
whereas~$n_{-1}$ the number of negative ones.
This model does not include an intercept, but the gene expressions are
centered. The regularization functions included in the comparison are the following:
\begin{myitemize}
\item our path-coding penalties~$\varphip$ and~$\psip$ with the weights~$\eta_g$ defined in~(\ref{eq:choice});
\item the squared~$\ell_2$-penalty (ridge logistic regression);
\item the~$\ell_1$-norm (sparse logistic regression);
\item the elastic-net penalty of~\citet{zou}, which has the form~$\w \to \|\w\|_1 + (\mu/2)\|\w\|_2^2$, where~$\mu$ is an additional parameter;
\item the penalty~$\psi$ of~\citet{jacob} where the groups~$\G$ are pairs of vertices linked by an~arc; 
\item a variant of the penalty~$\psi$ of~\citet{jacob} whose form is given in Equation~(\ref{eq:convex2}) of Appendix~\ref{appendix:links}, where the~$\ell_2$-norm is used in place of the~$\ell_\infty$-norm;
\item the penalty~$\zeta_\G$ of~\citet{jenatton} given in Appendix~\ref{appendix:rodolphe} where the groups are all pairs of vertices linked by an arc;
\item the penalty~$\zeta_\G$ of~\citet{jenatton} using the group structure
adapted to DAGs described in Appendix~\ref{appendix:rodolphe}. This penalty has shown to be
empirically problematic to use directly. The number of groups
each variable belongs to significantly varies from a variable to another, resulting in
overpenalization for some variables and underpenalization for some others. To
cope with this issue, we have tried different strategies to choose
the weights~$\eta_g$ for every group in the penalty, similarly as those
described by~\citealt{jenatton}, but we have been unable to obtain sparse
solutions in this setting (typically the penalty selects here more than a
thousand variables). A heuristic that has proven to be much better is to
add a weighted~$\ell_1$-penalty to~$\zeta_\G$ to correct the
over/under-penalization issue. Denoting for a variable~$j$ in~$\{1,\ldots,p\}$ by~$d_j$ the number
of groups the variable~$j$ belongs to---in other words $d_j \defin \sum_{g \in \G : g \ni j} 1$, we
add the term $\sum_{j=1}^p (\max_k d_k - d_j)|\w_j|$ to the penalty~$\zeta_\G$.
\end{myitemize}
The parameter~$\lambda$ in Eq.~(\ref{eq:prob}) is chosen on a logarithmic scale
with steps $2^{1/4}$.  The elastic-net parameter~$\mu$ is chosen
in~$\{1,10,100\}$.  The parameter~$\gamma$ for the penalties~$\varphip$
and~$\psip$ is chosen in~$\{2,4,8,16\}$.
We proceed by randomly sampling~$20\%$ of the data as a test set,
keeping~$80\%$ for training, selecting the parameters $\lambda,\mu,\gamma$
using internal~$5$-fold cross-validation on the training set, and we measure
the average balanced error rate between the two classes on the test set.  We have
repeated this experiment~$20$ times and we report the averaged results in
Table~\ref{table:breastcancer}. 

We start by answering question~\textbf{(A)}. We remark that our
penalties~$\varphip$ and~$\psip$ succeed in selecting very few connected
components of~$G_0$, on average~$1.3$ for~$\psip$ and~$1.6$ for $\varphip$
while providing sparse solutions. This significantly improves the connectivity
of the solutions obtained using the approach of~\citet{jacob}
or~\citet{jenatton}.  To claim interpretable results, one has of course
to trust the original graph. Like~\citet{jacob}, we assume that connectivity
in~$G_0$ is good a priori.  We also study the effect of the
preprocessing step we have used to obtain a directed acyclic graph~$G$
from~$G_0$.  We report in the row ``$\psip$-random'' in
Table~\ref{table:breastcancer} the results obtained when randomizing the
pre-processing step between every experimental run (providing us a different
graph $G$ for every run). We observe that the outcome~$G$ does not significantly
change the sparsity and connectivity in~$G_0$ of the sparsity patterns our penalty
selects.

As far as the prediction performance is concerned, $\psip$ seems to
be the only penalty that is able to produce sparse and connected solutions while
providing a similar average error rate as the~$\ell_2$-penalty. The non-convex
penalty~$\varphip$ produces a very sparse solution which is connected as well,
but with an approximately $6\%$ higher classification error rate. Note that
because of the high variability of this performance measure, clearly assessing
the statistical significance of the observed difference is difficult. As it was
previously observed by~\citet{jacob}, the data is very noisy and the number of
samples is small, resulting in high variability. As~\citet{jacob}, we have been
unable to test the statistical significance rigorously---that is, without
assuming independence of the different experimental runs.  We can
therefore not clearly answer the first part of question~\textbf{(B)}.  The
second part of the question is however clearer:
neither sparsity, nor the graph structure seems to help prediction in this
experiment. We have for example tried to use the same graph~$G$, but where we
randomly permute the~$p$ predictors (genes) at every run, making the graph
structure irrelevant to the data. We report in Table~\ref{table:breastcancer}
at the row~``$\psip$-permute'' the average classification error rate, which is
not significantly different than without permutation.

Our conclusions about the use of structured sparse estimation for this task are
therefore mixed. On the one hand, none of the tested method are shown to
perform statistically better in prediction than simple ridge regularization.
On the other hand, our penalty~$\psip$ is the only one that performs as well
as ridge while selecting a few predictive genes forming a a well-connected
sparsity pattern. According to~\citet{jacob}, this is a significant asset
for biologists, assuming the original graph should be trusted. 

Another aspect we would like to study is the
stability properties of the selected sparsity patterns, which is often an issue
with features selection methods~\citep{meinshausen}. By introducing strong
prior knowledge in the regularization, structured sparse estimation seems to
provide more stable solutions than $\ell_1$. For instance, $5$ genes are
selected by~$\ell_1$ in more than half of the experimental runs, whereas this
number is $10$ and~$14$ for the penalties of~\citet{jacob}, and~$33$
for~$\psip$. Whereas we believe that stability is important,
it is however hard to claim direct benefits of having a ``stable''
penalty without further study. By encouraging connectivity of the solution,
variables that are highly connected in the graph tend to be more often
selected, improving the stability of the solution, but not necessarily its
interpretation in the absence of biological prior knowledge that prefers
connectivity.

\begin{table}[hbtp]
\centering
\begin{tabular}{|c||c|c|c|c|}
\hline
   &  test error (\%) & sparsity & connected components \\
\hline
\hline
$\ell_2^2$ &    $31.0 \pm 6.1$  &  $7910$  & $58$   \\
\hline
$\ell_1$ &    $36.0 \pm 6.5$  &  $32.6$  & $30.9$    \\
\hline
$\ell_2^2+\ell_1$ &    $31.5 \pm 6.7$  &  $929.6$  & $355.2$   \\
\hline
\citet{jacob}-$\ell_\infty$ &    $35.9 \pm 6.8$  &  $68.4$  & $13.2$   \\
\hline
\citet{jacob}-$\ell_2$ & $36.0 \pm 7.2 $    & $58.5$  & $11.1$   \\
\hline
\citet{jenatton} (pairs) &  $34.5 \pm 5.2$  &  $33.4$  & $28.8$   \\
\hline
\citet{jenatton} (DAG)+weighted~$\ell_1$ & $35.6 \pm 7.0$  & $54.6$ & $28.4$ \\
\hline
\hline
$\varphip$ & $36.0 \pm 6.8$  & $10.2$  & $1.6$  \\
\hline
$\psip$ & $30.2 \pm 6.8$  & $69.9$  & $1.3$ \\
\hline
\hline
$\psip$-permute & $33.2 \pm 7.6$  & $143.2$  & $1.7$  \\
\hline
$\psip$-random & $31.6 \pm 6.0$  & $78.5$  & $1.4$  \\
\hline
\end{tabular}
\caption{Experimental results on the breast cancer dataset. Column ``{test error}'': average balanced classification error rate on the test set in percents with standard deviations; the results are averaged over~$20$ runs and the parameters are selected for each run by internal $5$-fold cross-validation. Column~``{sparsity}'': average number of selected genes. Column~``{connected components}'': average number of selected connected components in~$G_0$.}\label{table:breastcancer}
\end{table}

To answer question \textbf{(C)}, we study the computational efficiency of our implementation.
One iteration of the proximal gradient method for the selected parameters is
relatively fast, approximately $0.17$s for $\psip$ and $0.15$s for $\varphip$  on a $1.2$GHz laptop
CPU (Intel core i3 330UM), but it tends to be significantly slower when the
solution is less sparse, for instance with small values for~$\lambda$.  Since
solving an instance of problem~(\ref{eq:prob}) requires computing about~$500$
proximal operators to obtain a reasonably precise solution, our method is fast
enough to conduct this experiment in a reasonable amount of time.
Of course, simpler penalties are faster to use. An iteration of the proximal
gradient method takes about $0.15$s for $\zeta_\GG$, $0.05$s for \citet{jacob},
$0.01$s for $\ell_2$ and~$0.003$s for $\ell_1$.

\section{Conclusion}\label{sec:ccl}
Our paper proposes a new form of structured penalty for supervised learning
problems where predicting features are sitting on a DAG, and where one wishes
to automatically select a few connected subgraphs of the DAG.  The
computational feasibility of this form of penalty is established by making a
new link between supervised path selection problems and network flows. Our
penalties admit non-convex and convex variants, which can be used within the
same network flow optimization framework. These penalties are flexible in the
sense that they can control the connectivity of a problem solution, whether one
wishes to encourage large or small connected components, 
and are able to model long-range interactions between variables.

Some of our conclusions show that being able to provide both non-convex and
convex variants of the penalties is valuable. In various contexts,
we have been able to find situations where convexity is helpful, and others
where the non-convex approach leads to better solutions than the convex
one. Our experiments show that when connectivity of a sparsity pattern is a
good prior knowledge our approach is fast and effective for solving different
prediction problems. 

Interestingly, our penalties seem to perform empirically well on more general
graphs than DAGs, when heuristically removing cycles, and we would like in the
future to find a way to better handle~them. We also would like to make further
connections with image segmentation techniques, which exploit different but
related optimization techniques~\citep[see][]{boykov,couprie}, and kernel
methods, where other types of feature selection in DAGs occur~\citep{bach7}. 

Finally, we are also interested in applying our techniques to sparse estimation
problems where the sparsity pattern is expected to be exactly a combination of
a few paths in a DAG. While the first version of this manuscript was under
review, the first author started a collaboration with computational biologists
to address the problem of isoform detection in RNA-Seq data. In a nutshell, a
mixture of small fragments of mRNA is observed and the goal is to find a few
mRNA molecules that explain the observed mixture. In mathematical terms, it
corresponds to selecting a few paths in a directed acyclic graph in a penalized
maximum likelihood formulation. Preliminary results obtained by~\citet{bernard}
confirm that one could achieve state-of-the-art results for this task by
adapting part of the methodology we have developed in this paper.

\acks{This paper was supported in part by NSF grants SES-0835531, CCF-0939370,
DMS-1107000, DMS-0907632, and by ARO-W911NF-11-1-0114.  Julien Mairal would
like to thank Laurent Jacob, Rodolphe Jenatton, Francis Bach, Guillaume
Obozinski and Guillermo Sapiro for interesting discussions and suggestions
leading to improvements of this manuscript, and his former research lab, the
INRIA WILLOW and SIERRA project-teams, for letting him use computational
resources funded by the European Research Council (VideoWorld and
Sierra projects). He would also like to thank Junzhou Huang for providing the
source code of his StructOMP software.
}

\appendix
  
\section{The Penalty of~\citet{jenatton} for DAGs} \label{appendix:rodolphe}
The penalty of~\citet{jenatton} requires a pre-defined set of possibly
overlapping groups~$\GG$ and is defined as follows:
\begin{equation}
\zeta_\GG(\w) \defin \sum_{g \in {\mathcal G}} \eta_g \|\w_g\|_\nu,
\label{eq:group} 
\end{equation} 
where the vector $\w_g$ in $\Real^{|g|}$ records the coefficients of $\w$
indexed by $g$ in $\GG$, the scalars $\eta_g$ are positive weights,
and~$\nu$ typically equals~$2$ or~$\infty$. This penalty can be
interpreted as the~$\ell_1$-norm of the vector $[\eta_g
\|\w_g\|_\nu]_{g \in \GG}$, therefore inducing sparsity at the group
level. It extends the Group Lasso~\citep{turlach,yuan,zhao} by allowing
the groups to overlap.

Whereas the penalty~$\psi$ of~\citet{jacob} encourages solutions whose set
of non-zero coefficients is a \emph{union} of a few groups, the
penalty~$\zeta_\GG$ promotes solutions whose sparsity pattern is in the
\emph{intersection} of some selected groups.  This subtlety
makes these two lines of work significantly different.
It is for example unnatural to use the penalty~$\zeta_\GG$ to encourage
connectivity in a graph. When the groups are defined as the pairs of vertices
linked by an arc, it is indeed not clear that sparsity patterns defined as the
intersection of such groups would lead to a well-connected subgraph. As shown
experimentally in Section~\ref{sec:exp}, this setting indeed performs poorly for
this task.

However, when the graph is a DAG, there exists an appropriate group setting~$\GG$
\emph{when the sparsity pattern of the solution is expected to be a single
connected component of the DAG}. Let us indeed define the groups to be the sets
of ancestors, and sets of descendents for every vertex; the set of
descendents of a vertex~$u$ in a DAG are defined as all vertices~$v$ such
that there exists a path from~$u$ to~$v$. Similarly the set of ancestors
contains all vertices such that there is a path from~$v$ to~$u$.
The corresponding penalty~$\zeta_\GG$ encourages sparsity patterns
which are intersections of groups in~$\GG$, which can be shown to be
exactly the connected subgraphs of the DAG.\footnote{This setting was suggested
to us by Francis Bach, Rodolphe Jenatton and Guillaume Obozinski in a
private discussion. Note that we have assumed here for simplicity that
the DAG is connected---that is, has a single connected component.} This penalty
is tractable since the number of groups is linear in the number of vertices,
but as soon as the sparsity pattern of the
solution is not connex (contains more than one connected component), it is
unable to recover it, making it useful to seek for a more flexible approach.
For this group structure~$\GG$, the penalty~$\zeta_\GG$ also suffers from other
practical issues concerning the overpenalization of variables belonging to many
different groups. These issues are empirically discussed in
Section~\ref{sec:exp} on concrete examples.

Interestingly, \citet{mairal11} have shown that the penalty~$\zeta_\GG$ with
$\nu=\infty$ and any arbitrary group structure $\GG$ is related to network
flows, but for different reasons than the penalties~$\varphip$ and~$\psip$. The
penalty $\zeta_\GG$ is indeed unrelated to the concept of graph sparsity since
it does not require the features to have any graph structure. Solving
regularized problems with~$\zeta_\GG$ is however challenging, and
\citet{mairal11} have shown that the proximal operator of~$\zeta_\GG$ could be
computed by means of a parametric maximum flow formulation. It involves a
bipartite graph where the nodes represent variables and groups, and arcs
represent inclusion relations between a variable and a group. \citet{mairal11}
address thus a significantly different problem than ours, even though there is
a common terminology between their work and ours.

\section{Links Between~\citet{huang} and~\citet{jacob}} \label{appendix:links}
Similarly as the penalty of~$\varphi$ of~\citet{huang}, the penalty of~\citet{jacob}
encourages the sparsity pattern of a solution to be the union of a small number
of predefined groups~$\GG$.  Unlike the function~$\varphi$, it is convex (it can be shown to be a norm), and is
defined as follows: 
\begin{equation} 
\psi'(\w) \defin
\min_{(\xib^g \in \Real^p)_{g \in \GG}} \left\{ \sum_{g \in \GG} \eta_g
\|\xib^g\|_\nu  \st \w = \sum_{g \in \GG} \xib^g ~~~\text{and}~~~ \forall~g \in
\GG,~ \text{Supp}(\xib^g) \subseteq g \right\}, \label{eq:convex2}
\end{equation} 
where~$\|.\|_\nu$ typically denotes the~$\ell_2$-norm
($\nu\!=\!2$) or~$\ell_\infty$-norm ($\nu\!=\!\infty$).   In this equation, the
vector~$\w$ is decomposed into a sum of latent vectors $\xib^g$, one for every
group~$g$ in~$\GG$, with the constraint that the support of~$\xib^g$ is itself
in~$g$.  The objective function is a group Lasso penalty~\citep{yuan,turlach}
as presented in Equation~(\ref{eq:group}) which encourages the vectors~$\xib^g$
to be zero. As a consequence, the support of~$\w$ is contained in the union of
a few groups~$g$ corresponding to non-zero vectors $\xib^g$, which is exactly
the desired regularization effect.
We now give a proof of Lemma~\ref{lemma:equiv} relating this penalty to the
convex relaxation of~$\varphi$ given in Equation~(\ref{eq:convex}) when~$\nu=\infty$.

\begin{proof}
We start by showing that $\psi'$ is equal to the penalty $\psi$ defined in Equation~(\ref{eq:convex}) on $\Real_+^p$.
We consider a vector~$\w$ in~$\Real_+^p$ and introduce for all groups $g$ in $\GG$ appropriate variables $\xib^g$ in $\Real^p$. The linear program defining~$\psi$ can be equivalently rewritten
\begin{displaymath}
   \psi(\w) =  \min_{\substack{  \x \in \Real_+^{|\GG|} \\ (\xib^g \in \Real^p)_{g \in \GG}}} \left\{ \eta^\top \x  \st \sum_{g \in \GG} \xib^g = \w,~ \NN\x \geq \sum_{g\in\GG}\xib^g ~\text{and}~  \forall~g \in \GG,~ \text{Supp}(\xib^g) \subseteq g\right\},
\end{displaymath}
where we use the assumption that for all vector $\w$ in $\Real_+^p$, there exist vectors $\xib^g$ such that $\sum_{g \in \GG} \xib^g = \w$.
Let us consider an optimal pair $(\x,(\xib^g)_{g\in\GG})$. For all indices $j$ in $\{1,\ldots,p\}$, the constraint~$\NN\x \geq \sum_{g\in\GG}\xib^g$ leads to the following inequality 
$$
\underbrace{\sum_{g \ni j : x_g \geq \xib^g_j} x_g- \xib^g_j}_{\tau_j^+ \geq 0}  + \underbrace{\sum_{g \ni j : x_g < \xib^g_j} x_g- \xib^g_j}_{\tau_j^- \leq 0}  \geq 0,
$$ 
where $x_g$ denotes the entry of $\x$ corresponding to the group $g$, and
two new quantities~$\tau_j^+$ and~$\tau_j^-$ are defined. 
For all $g$ in $\GG$, we define a new vector $\xib^{\prime g}$ such that for every pair $(g,j)$ in $\GG \times \{1,\ldots,p\}$:
\begin{enumerate}
 \item if $j \notin g$, $\xib^{\prime g}_j \defin 0$; 
  \item if $j \in g$ and $x_g \geq \xib^g_j$, then $\xib^{\prime g}_j \defin x_g$;
  \item if $j \in g$ and $x_g < \xib^g_j$, then $\xib^{\prime g}_j \defin \xib_j^g - (x_g-\xib_j^g)\frac{\tau_j^+}{\tau_j^-}$. 
\end{enumerate}
Note that if there exists~$j$ and~$g$ such that~$x_g < \xib^g_j$, then~$\tau_j^-$ is nonzero and the quantity~$\tau_j^+ / \tau_j^-$ is well defined.
Simple verifications show that for all indices $j$ in~$\{1,\ldots,p\}$, we have $\sum_{g \ni j} x_g- \xib^{\prime g}_j= \tau_j^+ +\tau_j^- =\sum_{g \ni j} x_g-\xib^g_j$, and therefore $\sum_{g \in \GG} \xib^{\prime g} = \sum_{g \in \GG} \xib^{g} = \w$.
The pair $(\x,(\xib^{\prime g})_{g\in\GG})$ is therefore also optimal.
In addition, for all groups $g$ in $\GG$ and index~$j$ in~$\{1,\ldots,p\}$, it is easy to show that~$x_g - \xib_j^{\prime g} \geq 0$ and that we have at optimality $\sign(\xib^g_j) = \sign(\w_j) = 1$ for any nonzero~$\xib^g_j$. Therefore, the condition
$\|\xib^{\prime g}\|_\infty \leq x_g$ is satisfied, which is stronger than the original
constraint $\NN\x \geq \sum_{g \in \GG} \xib^{\prime g}$.
Moreover, it is easy to show that $\|\xib^{\prime g}\|_\infty$ is necessary
equal to $x_g$ at optimality (otherwise, one could decrease the value of~$x_g$
to decrease the value of the objective function).  We can now
rewrite $\psi(\w)$ as
\begin{displaymath}
   \psi(\w) = \left\{ \min_{  (\xib^g \in \Real^p)_{g \in \GG} } \sum_{g\in\GG} \eta_g \|\xib^g\|_\infty  \st \sum_{g \in \GG} \xib^g = \w, ~\text{and}~ \forall~g \in \GG,~ \text{Supp}(\xib^g) \subseteq g \right\},
\end{displaymath}
and we have shown that $\psi'=\psi$ on $\Real_+^p$.
By noticing that in Equation~(\ref{eq:convex}) a solution $(\xib^g)_{g\in\GG}$ necessary satisfies $\sign(\xib^g_j) = \sign(\w_j)$ for every group $g$ and index $j$ such that $\xib^g_j \neq 0$, we can extend the proof from~$\Real_+^p$ to~$\Real^p$.
\end{proof}

\section{Interpretation of the Weights~$\eta_g$ with Coding Lengths} \label{appendix:coding}
\citet{huang} have given an interpretation of the penalty~$\varphi$ defined in 
Equation~(\ref{eq:nonconvex}) in terms of coding length. We use similar arguments
to interpret the path-coding penalty~$\varphip$ from an information-theoretic point of view.
For appropriate weights~$\eta_g$,
the quantity~$\varphip(\w)$ for a vector~$\w$ in~$\Real^p$ can be seen as
a coding length for the sparsity pattern of~$\w$---that is, the
following Kraft-MacMillan inequality~\cite[see][]{cover,mackay} is satisfied:
\begin{displaymath}
    \sum_{S \in \{0,1\}^p} 2^{-\varphip(S)} \leq 1.
\end{displaymath}
It is indeed well known in the information theory literature that there exists a binary
uniquely decodeable code over $\{0,1\}^p$ with code length $\varphip(S)$ for
every pattern $S$ in $\{0,1\}^p$ if and only if the above inequality is
satisfied~\cite[see][]{cover}.
We now show that a particular choice for the weights~$\eta_g$ leads
to an interesting interpretation.

Let us consider the graph $G'$ with source and sink vertices~$s$ and~$t$
defined in Section~\ref{sec:approach}.  We assume that a probability matrix
transition $\pi(u,v)$ for all~$(u,v)$ in~$E'$ is given.
With such a matrix transition, it is easy to obtain a coding length for the set
of paths~$\GG_p$:
 \begin{lemma}[Coding Length for Paths.]\label{lemma:coding}~\newline
    Let $\text{cl}_g$ for a path $g=(v_1,\ldots,v_k)$ in $\GG_p$ be defined as
    \begin{displaymath}
       \text{cl}_g \defin -\log_2 \pi(s,v_1) - \Big(\sum_{i=1}^{k-1}\log_2 \pi(v_i,v_{i+1}) \Big) - \log_2\pi(v_k,t).
    \end{displaymath}
    Then $\text{cl}_g$ is a coding length on $\GG_p$.
 \end{lemma}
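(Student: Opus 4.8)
The plan is to check that $\text{cl}_g$ satisfies the Kraft--MacMillan inequality $\sum_{g \in \GG_p} 2^{-\text{cl}_g} \leq 1$, which, exactly as recalled above for $\varphip$, is the defining property of a coding length on $\GG_p$. The first step is to exponentiate the definition: the sum of $\log_2$-terms becomes a product, so that for a path $g = (v_1,\ldots,v_k)$,
\[
 2^{-\text{cl}_g} = \pi(s,v_1)\Big(\prod_{i=1}^{k-1}\pi(v_i,v_{i+1})\Big)\pi(v_k,t) = \prod_{(u,v) \in (s,g,t)} \pi(u,v),
\]
where $(s,g,t)$ denotes the associated $(s,t)$-path in $G'$. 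Hence $2^{-\text{cl}_g}$ is precisely the product of transition probabilities along $(s,g,t)$, i.e. the probability that a Markov chain with kernel $\pi$ started at $s$ traces out that path.

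Next I would record the combinatorial fact that $g \mapsto (s,g,t)$ is a bijection from $\GG_p$ onto the set of $(s,t)$-paths of $G'$. Indeed, $E' = E \cup \{(s,v):v\in V\} \cup \{(u,t):u\in V\}$ contains no arc $(s,t)$, while $s$ has no incoming arc and $t$ no outgoing arc, so every $(s,t)$-path necessarily has the form $(s,v_1,\ldots,v_k,t)$ with $(v_1,\ldots,v_k)$ a path of $G$. Consequently $\sum_{g \in \GG_p} 2^{-\text{cl}_g}$ equals the total transition-probability mass carried by all $(s,t)$-paths of $G'$, and it only remains to bound this mass by $1$.

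For the bound I would argue by backward induction along a topological order of the DAG $G'$, in which $s$ is minimal and $t$ maximal. For each vertex $u$ set $P(u) \defin \sum_{h}\prod_{(a,b)\in h}\pi(a,b)$, the sum over all directed paths $h$ from $u$ to $t$, with the convention $P(t) \defin 1$. Splitting each path on its first arc gives the recursion $P(u) = \sum_{v : (u,v)\in E'}\pi(u,v)\,P(v)$ for $u \neq t$. Processing the vertices from $t$ backwards and using substochasticity of $\pi$, namely $\sum_{v:(u,v)\in E'}\pi(u,v) \leq 1$ for every $u$, the inductive hypothesis $P(v)\leq 1$ yields $P(u) \leq \sum_{v}\pi(u,v) \leq 1$. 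Evaluating at $u=s$ and combining with the previous two steps gives $\sum_{g\in\GG_p} 2^{-\text{cl}_g} = P(s) \leq 1$, as required.

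The argument is essentially routine, and the only point that needs care is the stochasticity hypothesis on $\pi$: I read ``probability transition matrix'' as meaning that each row sums to at most one over the arcs of $G'$, which is exactly what the induction consumes. If the assumption is instead phrased on the rows of $G$ alone, the one thing to verify is that the extra arcs from $s$ and into $t$ preserve substochasticity; this single modeling convention is what the whole proof rests on, so I would state it explicitly at the outset.
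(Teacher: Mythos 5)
Your proof is correct, and it takes a genuinely different route from the paper's. Both arguments begin identically, by reading $2^{-\text{cl}_g}$ as the product of transition probabilities along the $(s,t)$-path $(s,g,t)$ in $G'$, but they diverge in how they control the total mass $\sum_{g\in\GG_p}2^{-\text{cl}_g}$. You proceed deterministically: a first-arc decomposition gives the recursion $P(u)=\sum_{v:(u,v)\in E'}\pi(u,v)P(v)$, and backward induction along a reverse topological order of the DAG yields $P(s)\leq 1$ from row substochasticity. The paper instead argues probabilistically: it pads every path $(s,v_1,\ldots,v_k,t)$ with repetitions of $t$ into a walk of length exactly $|V'|$ (implicitly making $t$ absorbing, i.e., $\pi(t,t)=1$), so that $\sum_{g\in\GG_p}2^{-\text{cl}_g}$ is identified with $\pi^{|V'|}(s,t)$, the probability that the Markov chain started at $s$ sits at $t$ at time $|V'|$; since in a DAG every trajectory is absorbed at $t$ within $|V'|$ steps, this probability equals $1$. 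The trade-off is as follows: the paper's padding argument is shorter and delivers the Kraft--MacMillan relation with \emph{equality} (a complete code), but it silently requires $\pi$ to be genuinely row-stochastic on $G'$ with an absorbing sink; your induction only yields the inequality $\leq 1$---which is all the definition of a coding length demands---but it works under the weaker substochasticity hypothesis, uses nothing beyond products of transition weights, and, commendably, states explicitly the modeling convention (rows of $\pi$ summing to at most one over the arcs of $G'$) that the paper's proof leaves implicit.
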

 \begin{proof}
 We observe that for every path $(v_1,\ldots,v_k)$ in $\GG_p$ corresponds a unique
 walk of length $|V'|$ of the form $(s,v_1,\ldots,v_b,t,t,\ldots,t)$,
 and vice versa.
 Denoting by $\pi^t(s,t)$ the probability that a Markov chain associated to the probability transition matrix $\pi$ starting at the vertex $u$ is
 at the vertex $v$ at time~$t$, it is easy to show that 
 \begin{displaymath}
    \sum_{g \in \GG_p} 2^{-\text{cl}_g} = \pi^{|V'|}(s,t) = 1,
 \end{displaymath}
 and therefore $\text{cl}_g$ is a coding length on $\GG_p$.
 \end{proof}
the term $-\log_2 \pi(s,v_1)$ represents the number of bits used to indicate
that a path~$g$ starts with the vertex~$v_1$, whereas the bits corresponding to
the terms $-\log_2 \pi(v_i,v_{i+1})$ indicate that the vertex following~$v_i$
is~$v_{i+1}$.  The bits corresponding to last term $-\log_2 \pi(v_k,t)$
indicate the end of the path.
To define the weights~$\eta_g$, we now define the following costs:
$$
   c_{uv}\defin \left\{ \begin{array}{rl} 1 - \log_2 \pi(u,v) & ~\text{if}~u=s \\ -\log_2 \pi(u,v) & ~\text{otherwise.} \end{array}\right.
$$
The weight~$\eta_g$ therefore satisfies $\eta_g= \sum_{(u,v) \in E'} c_{uv} =
\text{cl}_g + 1$, and as shown by~\citet{huang}, this is a sufficient condition
for $\varphip(\w)$ to be a coding length for $\{0,1\}^p$.

We have therefore shown that (i) the different terms composing the
weights~$\eta_g$ can be interpreted as the number of bits used to encode the
paths in the graph; (ii) it is possible to use probability transition matrices
(or random walks) on the graph to design the weights~$\eta_g$.

\section{Proofs of the Propositions} \label{appendix:proofs}
In this section, we provide the proofs of our main results.
\subsection{Proofs of Propositions~\ref{prop:varphi} and~\ref{prop:psi}}
 \begin{proof}
 We start by proving Proposition~\ref{prop:varphi}.
 Let us consider the alternative definition of $\varphip$ given in
 Equation~(\ref{eq:altphi}). This is an optimization problem over all paths in $G$, or
 equivalently all $(s,t)$-paths in~$G'$ (since these two sets are in bijection).
 We associate to a vector $\x$ in $\{0,1\}^p$ a flow $f$ on $G'$, obtained by
 sending one unit of flow on every $(s,t)$-path $g$ satisfying $x^g = 1$ ($x^g$
 denotes the entry of~$\x$ associated to the group/path~$g$). Each of these
 $(s,t)$-path flow has a cost $\eta_g$ and  
 the total cost of~$f$ is exactly~$\eta^\top\x$. 
 
 We also observe that within this network flow framework, the constraint $\NN\x
 \geq \text{Supp}(\w)$ in Equation~(\ref{eq:altphi}) is equivalent to saying that
 for all $j$ in $\{1,\ldots,p\}$ the amount of flow going through the
 vertex~$j$ (denoted by $s_j(f)$) is greater than one if $\w_j \neq 0$.  We have
 therefore shown that $\varphip(\w)$ is the minimum cost achievable by a flow $f$
 such that the constraint $s_j(f) \geq 1$ is satisfied for all $j$ in
 $\text{Supp}(\w)$ and such that $f$ can be decomposed into binary $(s,t)$-path
 flows.
 
 To conclude the proof of Proposition~\ref{prop:varphi}, we show that there
 exists an optimal flow that admits a decomposition into binary $(s,t)$-path
 flows.  We notice that all arc capacities in Equation~(\ref{eq:phiflow}) are
 integers.  A classical result~\citep[][Theorem 9.10]{ahuja} says that there
 exists an optimal integer minimum-cost flow (a flow whose values on arcs are
 integers).  We denote by $f$ such a solution.  Then, the flow decomposition
 theorem~\citep[][Proposition 1.1]{bertsekas2} tells us that $f$ can be
 decomposed into $(s,t)$-path flows, but it also says that if $f$ is integer,
 then $f$ can be decomposed into integer  $(s,t)$-path flows. We conclude
 the proof by noticing that sending more than one unit of flow on a path is not
 optimal (one can reduce the cost by sending only one unit of flow, while
 keeping the capacity constraints satisfied), and therefore there exists in fact
 a decomposition of $f$ into binary $(s,t)$-path flows.
 The quantity presented in Equation~(\ref{eq:phiflow}) is therefore equal to $\varphip(\w)$.

The proof of Proposition~\ref{prop:psi} builds upon the definition
of $\psi$ given in Equation~(\ref{eq:convex}) and is
similar to the one of Proposition~\ref{prop:varphi}.
\end{proof}
\subsection{Proof of Proposition~\ref{prop:proxphi}}
\begin{proof}
Using the definition of the proximal operator in Equation~(\ref{eq:prox_problem})
and the definition of $\varphi$ in Equation~(\ref{eq:altphi}), 
there exists a pattern $S$ in $\{0,1\}^p$ such that the solution
$\w^\star$ of the proximal problem satisfies for all~$j$, $\w^\star_j=\u_j$ if $j$ is in $S$,
and $\w^\star_j=0$ otherwise.
We therefore rewrite Equation~(\ref{eq:prox_problem}) by using the result of Proposition~\ref{prop:varphi}
\begin{displaymath}
    \min_{S \in \{0,1\}^p, f \in \FF}  \left\{ \frac{1}{2} \sum_{j \notin S} \u_j^2 + \sum_{(u,v) \in E'} f_{uv}c_{uv} \st s_j(f) \geq 1, \forall j \in S \right\}.
 \end{displaymath}
 When $S$ is fixed, the above expression is a minimum cost flow
 problem with integer capacity constraints. Thus, there exists an integer flow
 solution, and we can, without loss of generality, constrain $f$ to be integer,
 and replace the constraints $s_j(f) \geq 1$ by $s_j(f) > 0$.
 After this modification, for $f$ is fixed, minimizing with respect to $S$ gives
 us the following closed form: for all $j$ in $\{1,\ldots,p\}$, $S_j=1$ if $s_j(f) > 0$
 and $0$ otherwise. With this choice for $S$, we have in addition $\sum_{j \notin S} \u_j^2 = \sum_{j=1}^p \max\big(\u_j^2(1-s_j(f)),0\big)$,
 and denoting by $\FF_{\text{int}}$ the set of integer flows, we can equivalently rewrite the optimization problem
 \begin{displaymath}
    \min_{f \in \FF_{\text{int}}}  \left\{\sum_{(u,v) \in E'} f_{uv}c_{uv} + \sum_{j=1}^p \frac{1}{2}\max\big(\u_j^2(1-s_j(f)),0\big)\right\}.
 \end{displaymath}
 It is easy to transform this minimum cost flow problem with piecewise linear
 costs to a classical minimum cost flow problem~\citep[see][Exercise
 1.19]{bertsekas2} with integral constraints.  Therefore, it is possible to
 remove the constraint $f \in \FF_{\text{int}}$ and replace it by $f \in \FF$
 without changing the optimal value of the cost function, leading to the
 formulation proposed in Equation~(\ref{eq:proxphi}).
 \end{proof}
 \subsection{Proof of Proposition~\ref{prop:proxpsi}}
 \begin{proof}
  Without loss of generality, let us suppose that $\u$ is in
 $\Real_+^p$. Let us indeed denote by $\w^\star \defin \text{Prox}_{\psip}[\u]$.
 It is indeed easy to see that the signs of the entries of $\w^\star$ are
 necessary the same as those of $\u$, and flipping the signs of some entries of
 $\u$ results in flipping the signs of the corresponding entries in~$\w^\star$.
 According to Proposition~\ref{prop:psi}, we can write the proximal problem as
 \begin{displaymath}
    \min_{\w \in \Real_+^p, f \in \FF}  \left\{ \frac{1}{2} \sum_{j=1}^p (\u_j-\w_j)^2 + \sum_{(u,v) \in E'} f_{uv}c_{uv} \st s_j(f) \geq \w_j, \forall j \in \{1,\ldots,p\} \right\}.
 \end{displaymath}
 When $f$ is fixed, minimizing with respect to $\w$ yields for all $j$, $\w_j^\star=\min(\u_j,s_j(f^\star))$. Plugging this closed form in the above equation yields the desired formulation.
 \end{proof}
\subsection{Proof of Proposition~\ref{prop:psistar}}
 \begin{proof}
 We recall that according to Lemma~\ref{lemma:equiv} we have for all $\w$ in $\Real_+^p$
\begin{displaymath}
   \psip(\w) = \min_{  \x \in \Real_+^{|\GG_p|}} \left\{ \eta^\top \x  \st \NN\x \geq \w  \right\}. 
\end{displaymath}
This is a linear program, whose dual~\citep[see][]{nocedal} gives us another definition for~$\psip$ on $\Real_+^p$.
Since strong duality holds here, we have
\begin{displaymath}
   \psip(\w) = \max_{  \kappab \in \Real_+^{p}} \left\{ \w^\top \kappab \st \NN^\top\kappab \leq \eta  \right\}. 
\end{displaymath}
It is easy to show that one can extend this definition on $\Real^p$ such that we have
\begin{equation}
   \psip(\w) = \max_{  \kappab \in \Real^{p}} \left\{ \w^\top \kappab \st \max_{g \in \G_p} \frac{\|\kappab_g\|_1}{\eta_g} \leq 1    \right\},  \label{eq:dualpsi}
\end{equation}
where $\kappab_g$ denotes the vector of size $|g|$ containing the entries of
$\kappab$ corresponding to the indices in the group $g$. Note that a similar
formula appears in~\citep[][Lemma 2]{jacob}, when the $\ell_2$-norm is used in
place of the $\ell_\infty$.
We now define for a vector $\kappab$ in $\Real^p$,
\begin{displaymath}
   \psip^*(\kappab) \defin \max_{g \in \G_p} \frac{\|\kappab_g\|_1}{\eta_g}.
\end{displaymath}
It is easy to see that it is a norm, and by Equation~(\ref{eq:dualpsi}), this is in
fact the dual norm of the norm~$\psip$.
We can now rewrite it as
\begin{displaymath}
\begin{split}
   \psip^*(\kappab) & = \min_{\tau \in \Real_+} \left\{ \tau \st \max_{g \in \G_p} \frac{\|\kappab_g\|_1}{\eta_g} \leq \tau \right\}, \\
        & = \min_{\tau \in \Real_+} \left\{ \tau \st \max_{g \in \G_p} \frac{\|\kappab_g\|_1}{\tau} - \eta_g \leq 0 \right\}, \\
        & = \min_{\tau \in \Real_+} \left\{ \tau \st \min_{g \in \G_p} l_\tau(g) \geq 0 \right\}, \\
\end{split}
\end{displaymath}
where we have identified the groups in $\GG_p$ to their corresponding $(s,t)$-paths in $G'$. 
\end{proof}
\subsection{Proof of Proposition~\ref{prop:algphi}}
\begin{proof} \\
\textbf{Correctness:} \\
We start by showing that when the algorithm converges, it returns the correct
solution.  We remark that the choice of~$\tau$ in the algorithm ensures that
there always exists a group~$h$ in~$\GG_p$ such that~$l_\tau(h)=0$ and therefore
we always have~$\delta \leq 0$. Thus, when the algorithm converges,
$\delta$ is equal to zero.  Moreover, the function $G: \tau \to \min_{h \in
\GG_p} l_\tau(h)$ is non-increasing with~$\tau$ since the functions~$\tau \to
l_\tau(h)$ are themselves non-increasing for all~$h$ in~$\GG_p$. It is also easy
to show that there exists a unique~$\tau$ such that~$G(\tau)=0$, which is the
desired solution.  We conclude by noticing that at convergence, we have
$G(\tau)=\delta=0$.\\
\textbf{Convergence and complexity:} \\
We now show that the algorithm converges and give a worst-case complexity.
We denote by~$\tau_k$, $g_k$ and~$\delta_k$
the respective values of~$\tau, g$ and~$\delta$ at the iteration~$k$ of the
algorithm. The definition of~$\tau_{k+1}$ implies that
\begin{displaymath}
   l_{\tau_{k+1}}(g_k) = 0 = \underbrace{l_{\tau_{k}}(g_k)}_{\delta_k \leq 0} + \underbrace{\|\kappab_{g_k}\|_1\Big(\frac{1}{\tau_k}-\frac{1}{\tau_{k+1}}\Big)}_{-\delta_k \geq 0}.
\end{displaymath}
Moreover,
\begin{displaymath}
   \delta_{k+1} = l_{\tau_{k+1}}(g_{k+1}) = l_{\tau_{k}}(g_{k+1}) + \|\kappab_{g_{k+1}}\|_1\Big(\frac{1}{\tau_k}-\frac{1}{\tau_{k+1}}\Big).
\end{displaymath}
Since $l_{\tau_{k}}(g_{k+1}) \geq \delta_k$ ($\delta_k$ is the length of the shortest path), we can show that
\begin{displaymath}
   \delta_{k+1} \geq \delta_k\Big(1  - \frac{\|\kappab_{g_{k+1}}\|_1}{\|\kappab_{g_{k}}\|_1}\Big).
\end{displaymath}
Since $\delta_{k+1} \leq 0$, we remark that necessarily $\|\kappab_{g_{k+1}}\|_1 \leq \|\kappab_{g_{k}}\|_1$, and we have two possibilities
\begin{enumerate}
   \item either $\|\kappab_{g_{k+1}}\|_1=\|\kappab_{g_{k}}\|_1$ and $\delta_{k+1}=0$, meaning that the algorithm has converged.
   \item either $\|\kappab_{g_{k+1}}\|_1<\|\kappab_{g_{k}}\|_1$ and it is easy to show that is implies that $\eta_{g_{k+1}} < \eta_{g_k}$.
\end{enumerate}
Since~$\eta_h = \gamma + |h|$, we obtain that $\eta_{g_k}$ is strictly decreasing with~$k$ before the convergence of the algorithm. Since it can
have at most~$p$ different values, the algorithm converges in at most~$p$ iterations.
Updating the path~$g$ in the algorithm can be done by solving a shortest path problem in the graph~$G'$, which can be done in $O(|E|)$ operations since the graph is acyclic~\citep{ahuja}, and the total worst-case complexity is $O(p|E|)$, which concludes the proof.
\end{proof}
\subsection{Proof of Proposition~\ref{prop:activeset}}
\begin{proof}
We denote by~$\kappab$ the quantity~$\kappab\defin\nabla L(\w)$, and
respectively by~$\tilde{\kappab}$ and~$\tilde{\w}$ the vectors recording the
entries of~$\kappab$ and~$\w$ that are in~$\tilde{V}$.\\
\textbf{Convergence of the algorithm:} \\
Convergence of the algorithm is easy to show and consists of observing
that~$\tilde{G}$ is strictly increasing. After solving subproblem~(\ref{eq:subprob}), we have from the optimality
conditions of Lemma~\ref{lemma:opt} that
${\oldpsi}_{\tilde{\GG}_p}^*(\tilde{\kappab}) \leq \lambda$.
By definition of the dual-norm given in Proposition~\ref{prop:psistar}, and using the same notation, we
have that for all~$g$ in~$\tilde{\GG_p}$, $l_\lambda(g) \geq 0$.
We now denote by~$\tau$ the quantity~$\tau = \psip^*(\kappab)$; if~$\tau \leq \lambda$, the algorithm stops. If not,
we have that for all~$g$ in~$\tilde{\GG_p}$, $l_\tau(g) > 0$ (since~$\tau > \lambda$ and $l_\lambda(g) \geq 0$ for all~$g$ in~$\tilde{\GG_p}$).
The step $g \leftarrow \argmin_{g \in \GG_p} l_\tau(g)$ then selects a group~$g$
such that~$l_\tau(g)=0$ (which is easy to check given the definition
of~$\psip^*$ in Proposition~\ref{prop:psistar}. Therefore, the selected path~$g$ is not
in~$\tilde{G}$, and the size of $\tilde{G}$ strictly increases, which guarantees the convergence of the algorithm.\\
\textbf{Correctness:} \\
We want to show that when the algorithm stops, it returns the correct solution.
First, if we have~$\tilde{G}=G$, it is trivially correct. If it stops with~$\tilde{G}\neq G$, we have that $\psip^*(\kappab)  \leq \lambda$, and according to Lemma~\ref{lemma:opt}, we only need to check that~$-\kappab^\top\w= \lambda \psip(\w)$.
We remark that we have
$\lambda \psip(\w) \leq \lambda {\oldpsi}_{\tilde{\GG}_p}(\w) =
-\tilde{\kappab}^\top \tilde{\w} = -\kappab^\top \w \leq \psip^*(\kappab)
\psip(\w)$,where the first inequality is easy to show when observing
that~$\tilde{\GG}_p \subseteq \GG_p$, and the last inequality is the
generalized H\"older inequality for a norm and its dual-norm.
Since~$\psip^*(\kappab) \psip(\w) \leq \lambda \psip(\w)$ we have in fact
equality, and we conclude the proof.
\end{proof}

\bibliography{mairal13a}

\end{document}